\documentclass[twoside]{article}
\usepackage[preprint]{aistats2027}

\usepackage[T1]{fontenc}
\usepackage{lmodern}
\usepackage{microtype,graphicx,booktabs,longtable,array,xurl}
\usepackage[round]{natbib}

\usepackage{amsmath,amssymb,amsthm}
\usepackage{xcolor,hyperref}
\usepackage[capitalize,noabbrev]{cleveref}
\theoremstyle{plain}
\newtheorem{theorem}{Theorem}
\newtheorem{proposition}[theorem]{Proposition}
\newtheorem{corollary}[theorem]{Corollary}
\newtheorem{lemma}[theorem]{Lemma}
\theoremstyle{definition}

\newcommand{\R}{\mathbb R}
\newcommand{\E}{\mathbb E}
\newcommand{\Prob}{\mathbb P}
\newcommand{\Id}{I_d}
\newcommand{\sphere}{\mathbb S^{d-1}}
\newcommand{\ind}{\mathbf 1}
\newcommand{\risk}{\mathcal R}
\newcommand{\Pois}{\operatorname{Pois}}
\newcommand{\SF}{\overline F}
\DeclareMathOperator{\tr}{tr}

\DeclareMathOperator{\Cov}{Cov}

\DeclareMathOperator{\row}{row}
\DeclareMathOperator{\Beta}{Beta}
\DeclareRobustCommand{\lref}[1]{\lowercase{\ref{#1}}}

\newcommand{\parhead}[1]{\par\vskip-.5\parskip\noindent\textbf{#1}\enspace\ignorespaces}
\makeatletter
\def\thm@space@setup{\thm@preskip=2pt plus 1pt minus 1pt \thm@postskip=\thm@preskip}
\renewenvironment{proof}[1][\proofname]{\par
  \pushQED{\qed}%
  \normalfont \topsep2\p@\@plus2\p@\relax
  \trivlist
  \item[\hskip\labelsep\itshape #1\@addpunct{.}]\ignorespaces
}{\popQED\endtrivlist\@endpefalse}
\makeatother
\newenvironment{compactitems}{%
  \begin{list}{\textbullet}{%
    \setlength{\leftmargin}{1.2em}%
    \setlength{\labelwidth}{0.7em}%
    \setlength{\labelsep}{0.5em}%
    \setlength{\topsep}{0.2ex}%
    \setlength{\partopsep}{0pt}%
    \setlength{\itemsep}{0.2ex}%
    \setlength{\parsep}{0pt}%
  }%
}{\end{list}}
\newcommand{\coderepo}{\url{https://github.com/sneddy/shift-repair-frontier}}

\hypersetup{colorlinks=true,linkcolor=black,citecolor=black,urlcolor=blue!55!black,
pdfauthor={Anuar Aimoldin, Yankai Chen, Ayana Mussabayeva, Xue Liu},pdftitle={Detecting a Shift Is Not Enough: Exact Minimax Limits of Linear Representation Repair}}
\begin{document}
\runningtitle{Exact Minimax Limits of Linear Representation Repair}
\runningauthor{Aimoldin, Chen, Mussabayeva, Liu}
\raggedbottom
\twocolumn[
\aistatstitle{Detecting a Shift Is Not Enough:\\Exact Minimax Limits of Linear Representation Repair}
\aistatsauthor{Anuar Aimoldin \And Yankai Chen \And Ayana Mussabayeva \And Xue Liu}
\aistatsaddress{Mohamed bin Zayed University of Artificial Intelligence (MBZUAI)\\ \texttt{anuar.aimoldin@mbzuai.ac.ae}} ]

\begin{abstract}
A mean shift between two data sources can be easy to detect but hard to remove without substantially changing their representations.
We cast its removal as a statistical decision problem: from noisy differences between paired calibration measurements in $\R^d$, learn one linear map, applied to both sources under a hard distortion budget, that leaves as little of the shift as possible on fresh data.
We derive the exact finite-sample minimax risk over all such maps, $(d-k)\,\E[1/(d+2J)]$ with $J\sim\Pois(\kappa/2)$, where the budget allows deleting $k$ directions and $\kappa$ is the calibration signal-to-noise ratio.
Projecting out the mean calibration difference attains it without knowing $\kappa$ or the noise scale.
This exposes a detection--repair gap: detecting the shift needs only $\kappa\gg\sqrt d$, whereas removing a fixed fraction of it at constant distortion needs $\kappa\asymp d$, as for estimating its direction.
Standard linear concept erasers (MP, SAL, LEACE) remove the same calibration difference, so the formula gives, before fitting, exactly how much shift they leave on fresh data and how much calibration a target requires.
The limit is robust: pairing keeps it exact for non-Gaussian shared content, the projection keeps its guarantee under anisotropic noise, and selective abstention cannot close the gap.
On paired clinical and wearable sleep EEG, where differences between participants act as calibration noise, the formula predicts the device shift left in new participants, and more recordings per person soon stop helping.
Together, these results tell whether a correction that falls short needs a better method, more recordings, or more participants.
\end{abstract}

\section{Introduction}
\label{sec:intro}
If the direction of a mean shift between two data sources were known, removing it would be simple: an orthogonal projection deletes it exactly.
In practice the direction must be estimated from finite, noisy data, and it stays uncertain even when the shift itself is easy to detect.
Projecting out an inaccurately estimated direction removes ordinary variation while preserving much of the shift, so a correction that falls short may reflect an inadequate rule or simply too little information to learn a better one.

\begin{figure*}[t]
 \centering
 \includegraphics[width=\linewidth]{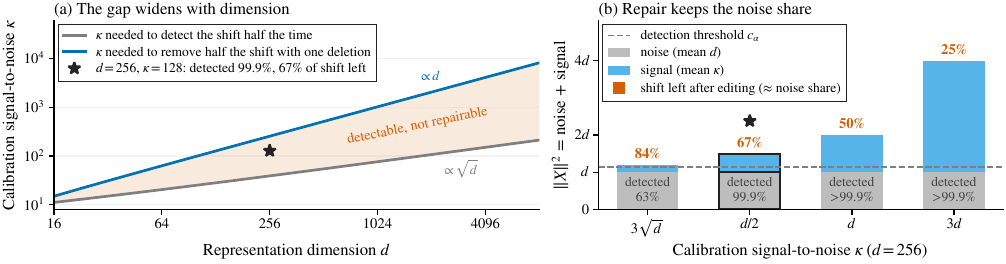}
 \caption{\textbf{Detecting a shift is not enough to repair it.}
 (a) Detection needs $\kappa\propto\sqrt d$ and repair $\kappa\propto d$, so the wedge of shifts that are detectable but not repairable widens with dimension. Detection is a test of ``no shift'' at the $5\%$ significance level, and repair uses the best linear map that deletes one direction.
 (b) At $d=256$, the squared length $\|X\|^2$ of the normalized mean calibration difference has expected value $d+\kappa$: noise $d$ (grey) plus signal $\kappa$ (blue). The test needs only the total to clear the dashed threshold, whereas editing leaves about the grey share of the bar (orange).
 The star marks the same setting in both panels. All values are exact.}
 \label{fig:concept}
\end{figure*}

Paired measurements make this concrete: the same session recorded by two EEG devices, the same participant scanned at two sites \citep{yamashita2019harmonization}, or the same input with a spurious attribute changed \citep{bai2025ncm}.
Each pair shares its content, so the difference within a pair isolates the shift.
From such pairs we learn an editor: one linear map shared by both sources and held to a hard distortion budget, and on new data we measure the share of the source separation that remains.
Without pairs, the same problem is linear concept erasure.
\emph{This raises three questions: how much of the shift can any such correction remove at best, what do standard erasers achieve, and what kind of calibration helps?}

\parhead{How much any linear correction can remove.}
\Cref{thm:frontier} answers the first question exactly.
Over all shared linear editors whose budget $b$ allows deleting $k\ge1$ directions, the minimax risk, which we call the frontier, is $\risk_d^*(b,\kappa)=(d-k)\,\E[1/(d+2J)]$ with $J\sim\Pois(\kappa/2)$ and $\kappa$ the calibration signal-to-noise ratio.
The simplest editor attains it: project out the mean paired difference.
The estimated direction keeps the noise share $d/(d+\kappa)$ of the calibration energy, which \cref{eq:bounds} shows is the optimal residual up to $1/(d+\kappa)$, whereas a test only needs the signal to rise above the $O(\sqrt d)$ noise fluctuation (\cref{fig:concept}b).
Detection therefore succeeds long before repair: at $d=256$ and $\kappa=128$, a level-$0.05$ test detects the shift with probability $0.999$, yet the best editor allowed to delete one direction leaves about two thirds of the squared separation (\cref{fig:concept}a).
These two scales are familiar from detecting and estimating a spike \citep{johnstone2009consistency,berthet2013optimal,cai2013sparse}.
What is new is the reduction that makes the limit bind every shared linear editor under the budget rather than one estimator, and its exact finite-sample value.

\parhead{What existing erasers pay.}
Mean projection (MP; \citealp{haghighatkhah2022mp}), centered rank-one SAL \citep{shao2023sal} and binary LEACE \citep{belrose2023leace} all delete the empirical mean difference (\cref{cor:empirical-erasers}), so the frontier is their exact behavior.
They erase that difference completely on the calibration sample yet leave the frontier's residual on new data, an effect recently observed empirically \citep{jin2026iterative}.
The calibration a target requires can therefore be computed in advance: at $d=64$ and a signal-to-noise ratio of one per pair, a residual of $1/2$ needs $63$ calibration pairs.
For any other fitted subspace, \cref{cor:subspace-excess} adds an explicit excess: one step of iterative nullspace projection (INLP; \citealp{ravfogel2020inlp}) falls visibly short of the frontier, while fixed-rank R-LACE \citep{ravfogel2022rlace} reaches it.

\parhead{What kind of calibration helps.}
Real calibration comes in clusters, such as participants with many recordings each.
The guarantee depends only on the total noise of the averaged calibration difference, so more recordings from the same units stall at a cap that only more units lift, and \cref{cor:two-level} tells how to split a budget between the two.
On a public dataset in which $100$ people slept in a clinical polysomnograph and a wearable headband at once, the formula predicts, from calibration participants alone, how much device shift survives in new participants (\cref{sec:real-device}): even $128$ epochs from one participant leave about $0.7$ of the shift, whereas adding people keeps reducing it.

The limit stays exact for non-Gaussian shared content under pairing (\cref{sec:frontier}), the projection keeps its guarantee under anisotropic noise (\cref{sec:two-level}), and acting only after favorable fits cannot close the gap (\cref{sec:decisions}).
When the shift strength is unknown, certified rules decide when to act (\cref{sec:certification}).
Simulations, a semisynthetic experiment with real represented content, and two real device pairs confirm the predictions.
Together, these answers separate three reasons a correction can fall short: a suboptimal editor, too little calibration, or calibration collected on the wrong unit.

\section{Related work}
\label{sec:related}
\parhead{Detection versus direction.}
In spiked models a signal becomes detectable at a lower signal-to-noise ratio than its direction becomes estimable \citep{johnstone2009consistency,berthet2013optimal,cai2013sparse,mergny2026spectral}, and two-sample tests detect shift from modest samples \citep{rabanser2019failing}.
These results concern a test or an estimator.
We instead bound what any shared linear correction can remove under a distortion budget, which gives necessary as well as sufficient calibration requirements.
Our rules for when to act resemble selective prediction \citep{elyaniv2010selective} but select over calibration fits, and their certificates hold uniformly over unknown shift strengths, unlike risk control at a known distribution \citep{angelopoulos2025ltt}.

\parhead{Linear erasure from finite samples.}
Linear erasers fit a projection or affine map to data \citep{haghighatkhah2022mp,ravfogel2020inlp,ravfogel2022rlace,belrose2023leace,shao2023sal}, and related work studies what erasure preserves or leaves behind \citep{holstege2025splince,chowdhury2025perfect,akbari2025obliviator}.
Fitted on finite samples, they leave concept information on fresh data that population erasers would remove \citep{jin2026iterative}, and an estimated erasure subspace generalizes differently from the probe that tests it \citep{naowarat2026framework}.
\Cref{cor:empirical-erasers} gives this residual for MP, SAL and LEACE, whose binary-label equivalence \citet{belrose2023leace} noted, and \cref{cor:subspace-excess} gives the excess of any other fitted subspace.

\parhead{Paired and clustered calibration.}
Paired data isolate a shift from shared content, in travelling-subject harmonization \citep{yamashita2019harmonization}, simultaneous wearable and clinical EEG \citep{lopezlarraz2026boas,esparzaiaizzo2026sleep}, and counterfactual pairs, from which noisy counterfactual matching (NCM) and GRIT estimate a spurious subspace and bound the out-of-distribution error of a predictor restricted to its complement, GRIT at rate $\sqrt{d/N}$ in the number of pairs $N$ \citep{bai2025ncm,bai2026grit}.
These are upper bounds for a particular estimator and predictor, whereas our limit is exact and holds for every shared linear editor.
When pairs come in clusters, splitting a budget between participants and pairs is the two-stage allocation problem \citep{cochran1977sampling}, and the same allocation minimizes a bound on the residual of an eraser (\cref{cor:two-level}).

\section{Statistical experiment and loss}
\label{sec:model}
We formalize a shared linear correction learned from paired calibration and evaluated on independent fresh observations.
We assume isotropic within-source variation and calibration noise, so that no shift direction is easier to estimate or cheaper to delete than another.

Let $d\ge2$ and let $u\in\sphere$ be a unit vector in $\R^d$.
The unknown source shift is $\delta=\theta u$, where $\theta>0$.
A fresh representation is
\begin{equation}
 \begin{gathered}
 H=V+S\delta/2,\\
 V\sim N(0,\Id),\quad V\perp S,\quad S\in\{-1,+1\}.
 \end{gathered}
 \label{eq:fresh}
\end{equation}
An independent calibration sample consists of contrasts
\begin{equation}
 \begin{gathered}
 D_i\overset{\mathrm{iid}}\sim N(\delta,s^2\Id),\quad i=1,\ldots,N,\\
 X=\frac{\sqrt N\,\bar D}{s}\sim N(\sqrt\kappa\,u,\Id),
 \qquad \kappa=\frac{N\theta^2}{s^2}.
 \end{gathered}
 \label{eq:calibration}
\end{equation}
Here $\bar D=N^{-1}\sum_iD_i$ and $\kappa$ is the total calibration signal-to-noise ratio.
Initially $s>0$ is known; \cref{thm:unknown,app:unknown} address an unknown isotropic scale.
The contrasts can arise from same-content paired views with independent measurement noise (\cref{app:experiment}).

\parhead{What an editor may use.}
A rule chooses $M\in\R^{d\times d}$ from the calibration sample and an independent random seed.
The chosen matrix is then fixed, shared between sources, and known to the evaluator.
At known $s$, $X$ contains all information about $u$ in the full contrast sample.
Extra observations are allowed if their conditional law given $X$ is independent of $u$ (\cref{app:frontier}).

\parhead{A hard, source-neutral cost.}
The cost is how far the editor moves a fresh representation on average, and it must stay within budget on every fit and seed:
\begin{equation}
 C(M):=\E_V\|MV-V\|^2=\|M-\Id\|_F^2\le b,\qquad b\ge0.
 \label{eq:cost}
\end{equation}
Deleting one unit-variance direction costs one and deleting everything costs $d$.
The budget is total rather than per coordinate (average budgets: \cref{sec:selective}).

\parhead{A covariance-adjusted source residual.}
We score an editor by how much of the shift remains visible after editing, relative to the spread of the edited data.
The edited means differ by $M\delta$ and the within-source covariance is $MM^\top$.
Write $\Pi_K$ for the orthogonal projector onto a subspace $K$, and $\row(M),\ker M$ for the row space and kernel.
With $+$ the Moore--Penrose inverse, define
\begin{equation}
 L(M,u):=\frac{(M\delta)^\top(MM^\top)^+(M\delta)}{\theta^2}
 =\|\Pi_{\row(M)}u\|^2.
 \label{eq:loss}
\end{equation}
The numerator is squared Mahalanobis separation on the output support, so invertible rescaling leaves $L$ unchanged: a downstream classifier can undo any such rescaling, so shrinkage alone removes nothing.
By the second expression, $L\in[0,1]$, with $0$ denoting complete source removal and $1$ denoting no reduction of optimal Gaussian discriminability.
\Cref{app:experiment} gives the Gaussian KL and Bayes-error interpretations.
Because the residual and the cost are both covariance-adjusted, whitening turns an anisotropic within-source covariance into anisotropic calibration noise, which \cref{sec:two-level} treats.

The hard-budget minimax risk at fixed $\kappa$ is
\begin{equation}
 \risk_d^*(b,\kappa)
 :=\inf_{\widehat M:\,C(\widehat M)\le b\ \mathrm{pointwise}}
       \sup_{u\in\sphere}\E_u L(\widehat M,u).
 \label{eq:minimax}
\end{equation}
The expectation is over the calibration sample and the seed used to fit the shared editor.

\section{The exact cost of finite calibration}
\label{sec:frontier}
Under the loss \cref{eq:loss}, only the deleted subspace can help, and each deleted direction costs at least one.

\begin{lemma}[Repair is subspace estimation]
\label{lem:kernel}
For every $M\in\R^{d\times d}$ and $u\in\sphere$,
\begin{equation}
 L(M,u)=1-\|\Pi_{\ker M}u\|^2,
 \qquad
 C(M)\ge\dim\ker M,
 \label{eq:kernel}
\end{equation}
and $\Id-\Pi_{\ker M}$ has the same residual for every $u$ at cost exactly $\dim\ker M$ (\cref{app:frontier}).
\end{lemma}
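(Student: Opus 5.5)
The plan is to prove the three claims separately, all by linear algebra. The loss formula is immediate from \cref{eq:loss}: that definition already gives $L(M,u)=\|\Pi_{\row(M)}u\|^2$. Since $\row(M)=(\ker M)^\perp$, Pythagoras gives $\|\Pi_{\row(M)}u\|^2=\|u\|^2-\|\Pi_{\ker M}u\|^2=1-\|\Pi_{\ker M}u\|^2$.

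If one wants to verify the second expression in \cref{eq:loss} rather than take it as given, I would use the SVD $M=U\Sigma W^\top$. Then $(MM^\top)^+=U(\Sigma^+)^2U^\top$, hence $M^\top(MM^\top)^+M=W\Sigma\Sigma^{+2}\Sigma W^\top=\Pi_{\row(M)}$. Dividing $\delta^\top\Pi_{\row(M)}\delta$ by $\theta^2$ gives the claimed expression.

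For the cost bound, set $K=\ker M$ with $m=\dim K$, and pick an orthonormal basis $e_1,\dots,e_m$ of $K$, completed to an orthonormal basis of $\R^d$. The Frobenius norm is invariant under orthogonal change of basis, so
\[
\|M-\Id\|_F^2=\sum_{j=1}^d\|(M-\Id)e_j\|^2\ge\sum_{j\le m}\|Me_j-e_j\|^2=\sum_{j\le m}\|e_j\|^2=m,
\]
because $Me_j=0$ for $j\le m$ and the dropped terms are nonnegative. This gives $C(M)\ge\dim\ker M$.

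Finally, consider $P:=\Id-\Pi_{\ker M}$. This is the orthogonal projector onto $(\ker M)^\perp=\row(M)$. So $\ker P=\ker M$, and by the first identity $L(P,u)=L(M,u)$ for every $u$. Its cost is $\|\Pi_{\ker M}\|_F^2=\tr\Pi_{\ker M}=\dim\ker M$, because a projector is symmetric and idempotent.

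No step is a real obstacle. The only point needing care is the pseudoinverse identity $M^\top(MM^\top)^+M=\Pi_{\row(M)}$, which the SVD settles even when $M$ is singular.
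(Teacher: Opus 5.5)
Your proof is correct and follows the same route as the paper. The paper also uses an SVD to identify $M^\top(MM^\top)^+M$ as the projector onto $(\ker M)^\perp$, and an orthonormal basis beginning with $\ker M$ in which each kernel vector contributes one to $\|M-\Id\|_F^2$, so that $\Id-\Pi_{\ker M}$ has identical loss at cost exactly $\dim\ker M$; you only make explicit the Pythagoras step and the trace computation that the paper leaves implicit.
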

The lemma covers asymmetric matrices, contractions, and nonorthogonal maps; a nonsingular shrinkage has $L=1$, however small its singular values.
It turns the search over all shared linear editors into the choice of a subspace to delete.
With one deletion, the residual is the squared sine of the angle between the shift and the deleted direction, so the question becomes how well the calibration points at it.
From there, the exact value follows from the posterior of a direction on the sphere and a randomization that equalizes the risk over directions.
The reduction is what makes the limit bind every editor rather than one estimator.

\begin{theorem}[Exact all-linear frontier]
\label{thm:frontier}
Set $k=\min\{d,\lfloor b\rfloor\}$ and let $J\sim\Pois(\kappa/2)$.
Then
\begin{equation}
 \risk_d^*(b,\kappa)=
 \begin{cases}
 1,&k=0,\\[2pt]
 (d-k)\displaystyle\E\frac1{d+2J},&1\le k\le d.
 \end{cases}
 \label{eq:frontier}
\end{equation}
For $k\ge1$, an attaining editor is $M=\Id-\Pi_K$, where $K=\operatorname{span}\{w\}\oplus K_\perp$, $w=\bar D/\|\bar D\|$, and $K_\perp$ is a uniformly sampled $(k-1)$-dimensional subspace of $w^\perp$.
It has constant risk over $u$ and cost $k$ on every fit.
\end{theorem}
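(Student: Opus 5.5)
By \cref{lem:kernel}, the plan reduces the problem to choosing a random subspace $\widehat K$ of dimension at most $k$. The residual of any editor is $1-\|\Pi_{\ker\widehat M}u\|^2$, and the budget forces $\dim\ker\widehat M\le C(\widehat M)\le b$, hence $\dim\ker\widehat M\le k$. Enlarging the kernel only lowers the loss, so without loss of generality $\widehat K$ has dimension exactly $k$ and the editor is $\Id-\Pi_{\widehat K}$. For $k=0$ every admissible $\widehat M$ is nonsingular, so $L\equiv1$ and the first case follows. For $k\ge1$ the task is to compute
\[
\inf_{\widehat K}\sup_{u}\bigl(1-\E_u\|\Pi_{\widehat K}u\|^2\bigr).
\]

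\textbf{Lower bound via a uniform prior.} Put $u\sim\mathrm{Unif}(\sphere)$ and bound the minimax risk below by the Bayes risk. Given $X$ and the seed, the Bayes risk of $\widehat K$ is $1-\tr(\Pi_{\widehat K}\Sigma(X))$, where $\Sigma(X)=\E[uu^\top\mid X]$. The posterior density of $u$ is proportional to $\exp(\sqrt\kappa\,u^\top X)$, a von Mises--Fisher law with mean direction $w=X/\|X\|$. By rotational symmetry, $\Sigma(X)=\lambda\,ww^\top+\frac{1-\lambda}{d-1}(\Id-ww^\top)$ with $\lambda=\E[(u^\top w)^2\mid X]$. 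The optimal $k$-dimensional subspace is spanned by the top $k$ eigenvectors, namely $w$ together with any $k-1$ directions in $w^\perp$, because $\lambda\ge 1/d$ for a vMF law. This gives posterior risk $(1-\lambda)\frac{d-k}{d-1}$, and the Bayes risk is $\frac{d-k}{d-1}\,\E[1-(u^\top w)^2]$ for $w=\bar D/\|\bar D\|$.

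\textbf{Upper bound via the stated editor.} The editor $K=\mathrm{span}\{w\}\oplus K_\perp$ is rotation-equivariant: $w$ transforms with $u$, and $K_\perp$ is uniform in $w^\perp$. Its risk is therefore constant in $u$, and its cost is $k$ on every fit. Its risk equals $\E[1-(u^\top w)^2]-(k-1)\frac{\E[1-(u^\top w)^2]}{d-1}$, which matches the Bayes risk. A constant-risk Bayes rule is minimax, so this closes the gap. Extra $u$-ancillary observations do not change the posterior and therefore do not help.

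\textbf{Main obstacle: the closed form.} The step I expect to be hardest is identifying
\[
\frac{\E\sin^2\angle(u,X)}{d-1}=\E\frac{1}{d+2J},\qquad J\sim\Pois(\kappa/2),
\]
for $X\sim N(\sqrt\kappa u,\Id)$. I would compute it via the noncentral chi-square mixture. Write $\|X\|^2\sim\chi^2_{d+2J}(0)$ as a Poisson mixture, and decompose $X=(\sqrt\kappa+Z_1)u+Z_\perp$. The squared sine is $\|Z_\perp\|^2/\|X\|^2$, where $\|Z_\perp\|^2\sim\chi^2_{d-1}$ is independent of the scalar $(\sqrt\kappa+Z_1)^2\sim\chi^2_1(\kappa)$. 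The latter is itself a Poisson$(\kappa/2)$ mixture of $\chi^2_{1+2J}$. Conditionally on $J$, the ratio is $\Beta\bigl(\tfrac{d-1}2,\tfrac{1+2J}2\bigr)$, with mean $\frac{d-1}{d+2J}$. Taking expectation over $J$ and multiplying by $\frac{d-k}{d-1}$ yields $(d-k)\,\E[1/(d+2J)]$.
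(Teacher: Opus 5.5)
Your proposal is correct and follows the paper's proof essentially step for step. Both arguments use the same four steps: the kernel reduction via \cref{lem:kernel}, a uniform-prior Bayes lower bound from the eigenstructure of the von Mises--Fisher posterior second-moment matrix, constant risk of the rotation-equivariant random completion (so it is a Bayes rule with constant risk, hence minimax), and the Poisson--Beta evaluation giving conditional mean $(d-1)/(d+2J)$. The one step you assert rather than prove is $\lambda\ge 1/d$. The paper proves it by writing $\lambda=\E[T^2\cosh(aT)]/\E[\cosh(aT)]$ with $T=w^\top U$ uniform-sphere distributed, then applying the covariance inequality for two nondecreasing functions of $|T|$.
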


\parhead{Why this is optimal.}
By \cref{eq:kernel}, the hard budget permits at most $k$ deleted directions.
Put a uniform prior on $u$.
Given $X$, the posterior of $u$ is a von Mises--Fisher law centred at $w$, with density proportional to $\exp(\sqrt\kappa\|X\|w^\top u)$.
It treats all directions orthogonal to $w$ alike, so its second-moment matrix has its largest eigenvalue along $w$ and equal eigenvalues on $w^\perp$.
The Bayes action therefore deletes $w$ and any $k-1$ orthogonal directions \citep{eaton1989invariance,mardia2000directional,fourdrinier2010spheres,besson2011subspace}.
Choosing those $k-1$ directions uniformly at random makes the frequentist risk the same for every $u$, so the Bayes risk is also the minimax risk.

The value has a simple form.
Rotate $u$ to the first coordinate, so that $X=(\sqrt\kappa+Z_1,Z_2,\ldots,Z_d)$ with $Z$ standard normal.
Deleting $w$ leaves the share of $\|X\|^2$ off the signal axis, $(Z_2^2+\dots+Z_d^2)/\|X\|^2$.
The numerator is $\chi^2_{d-1}$, and the signal coordinate $(\sqrt\kappa+Z_1)^2$ is an independent noncentral $\chi^2_1(\kappa)$.
The standard Poisson mixture for this ratio gives \cref{eq:frontier} \citep{marchand1997moments}.
\Cref{app:frontier} gives the details.

\parhead{What the formula says.}
For $1\le k\le d$ and $\kappa>0$, Jensen's and the Cauchy--Schwarz inequalities give (\cref{app:frontier})
\begin{equation}
 \frac{d-k}{d+\kappa}\le\risk_d^*(b,\kappa)
 \le\frac{d-k}{d-1}\cdot\frac{d}{d+\kappa}.
 \label{eq:bounds}
\end{equation}
For one deletion, the two bounds have an energy reading.
The lower bound $(d-1)/(d+\kappa)$ is the expected off-axis energy over the expected total.
The upper bound $d/(d+\kappa)$, the noise share of \cref{fig:concept}b, also counts the noise on the signal axis.
They differ by exactly $1/(d+\kappa)$.
Hence, at cost one, an expected residual at most $\varepsilon\in(0,1)$ requires $\kappa\ge(d-1)/\varepsilon-d$ and is guaranteed by $\kappa\ge d/\varepsilon-d$.
At a general budget it requires $k\ge d-\varepsilon(d+\kappa)$ deleted directions.

The ratio $\kappa/d$ governs the risk: as $d\to\infty$ with $\kappa=cd$ and fixed $k\ge1$, the risk tends to $1/(1+c)$, so the cost of directional uncertainty persists at constant signal per coordinate (other regimes: \cref{app:frontier}).
With known $u$, by contrast, cost one gives zero residual for every $\theta>0$.
Two features make the attaining editor robust.
Its upper bound uses only the mean and covariance of the contrasts, so it holds under non-Gaussian and anisotropic calibration noise at the mean noise level (\cref{prop:anisotropy}).
It uses only the direction of $\bar D$, so for $N\ge2$ it stays minimax when the noise scale is unknown, with the same risk $\risk_d^*(b,\kappa)$ at fixed $\kappa=N\theta^2/s^2$ (\cref{thm:unknown}).

\parhead{Detection and repair live at different scales.}
Testing $H_0:\delta=0$ against an unknown direction at strength $\kappa$ requires no estimate of that direction.
The level-$\alpha$ radial test rejects when $\|X\|^2>\chi^2_{d,1-\alpha}$, the $1-\alpha$ quantile of the central $\chi^2_d$ distribution.
It has exact power $\SF_{\chi^2_d(\kappa)}(\chi^2_{d,1-\alpha})$, with $\SF$ a survival function, and is minimax among level-$\alpha$ tests at a fixed radius.

\begin{corollary}[A detection--repair gap]
\label{cor:gap}
As $d\to\infty$ with fixed $\alpha\in(0,1)$, no level-$\alpha$ test has minimum power over directions exceeding $\alpha+o(1)$ when
$\kappa=o(\sqrt d)$; the radial test has power tending to one when
$\kappa/\sqrt d\to\infty$.
If simultaneously $\kappa/d\to0$, every fixed hard budget $b$ has minimax repair risk tending to one.
In that regime, achieving expected residual at most a fixed $\varepsilon<1$ requires deleting at least $(1-\varepsilon)d-o(d)$ directions.
\end{corollary}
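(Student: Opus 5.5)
The corollary has three claims, and I would prove them separately: a testing lower bound, the power of the radial test, and a repair statement read off from \cref{thm:frontier} and \cref{eq:bounds}.

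\emph{Impossibility of detection.} Take any level-$\alpha$ test $\phi$. Its minimum power over directions is at most its power averaged under the uniform prior on $u$. So it suffices to bound $\E_{\text{mix}}\phi-\E_0\phi\le \mathrm{TV}(P_{\text{mix}},P_0)$. Here $P_0=N(0,\Id)$, and $P_{\text{mix}}$ is the law of $X$ when $u$ is uniform on $\sphere$. I would bound the total variation through the $\chi^2$ divergence with the standard second-moment (Ingster) computation. For independent uniform $u,u'$,
\[
1+\chi^2(P_{\text{mix}}\|P_0)=\E_{u,u'}\exp(\kappa\,u^\top u').
\]
Since $u^\top u'$ has the law of $T=Z_1/\|Z\|$ with variance $1/d$ and sub-Gaussian tails at scale $1/\sqrt d$, we get $\E e^{\kappa T}\le \exp(C\kappa^2/d)$. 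This tends to one when $\kappa=o(\sqrt d)$. Making this exponential-moment bound on the inner product precise is the main technical step. I would do it through the Beta representation $(1+T)/2\sim\Beta((d-1)/2,(d-1)/2)$, or by bounding $e^{\kappa T}$ using Gaussian concentration of $Z_1/\|Z\|$ on the event $\|Z\|^2\ge d/2$.

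\emph{Power of the radial test.} Write $\|X\|^2\sim\chi^2_d(\kappa)$, which has mean $d+\kappa$ and variance $2d+4\kappa$. The threshold satisfies $\chi^2_{d,1-\alpha}=d+O(\sqrt d)$ by the CLT for $\chi^2_d$. Chebyshev then gives
\[
\Prob\bigl(\|X\|^2\le\chi^2_{d,1-\alpha}\bigr)\le \frac{2d+4\kappa}{(\kappa-O(\sqrt d))^2}\to0
\]
when $\kappa/\sqrt d\to\infty$.

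\emph{Repair.} For a fixed $b$ the number of deletions $k=\min\{d,\lfloor b\rfloor\}$ is eventually constant. If $k=0$ the risk is $1$. Otherwise the lower bound in \cref{eq:bounds} gives
\[
\risk_d^*\ge \frac{d-k}{d+\kappa}=\frac{1-k/d}{1+\kappa/d}\to1.
\]
The upper bound in \cref{eq:frontier} is trivially at most one, so the risk tends to one. For the deletion count, let $b=b_d$ vary and suppose the minimax risk is at most $\varepsilon$. By \cref{eq:bounds}, $(d-k)/(d+\kappa)\le\varepsilon$, so
\[
k\ge d-\varepsilon(d+\kappa)=(1-\varepsilon)d-\varepsilon\kappa .
\]
Since $\kappa=o(d)$, this is $(1-\varepsilon)d-o(d)$. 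The case $k=0$ is excluded because its risk is $1>\varepsilon$.
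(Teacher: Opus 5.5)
Your proposal is correct and follows essentially the same route as the paper: the second-moment bound $1+\chi^2=\E e^{\kappa U^\top U'}$ for the lower detection scale, Chebyshev with a $d+O(\sqrt d)$ critical value for the radial test, and the lower bound of \cref{eq:bounds} for both the repair and the deletion-count claims. The only difference is the exponential-moment step you leave as the main technical task. The paper closes it in one line by expanding the exponential and using $\E U_1^{2m}\le(2m-1)!!/d^m$, which gives $\exp(\kappa^2/(2d))$ directly, without a Beta or concentration argument.
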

The detection rate is classical \citep{baraud2002testing,collier2017minimax}, and the point is its contrast with \cref{eq:frontier}.

\parhead{Non-Gaussian shared content.}
The shared content of a pair need not be Gaussian.
Suppose each pair consists of shared content drawn from any fixed law with mean zero and covariance $(1-s^2/2)\Id$, plus independent Gaussian measurement noise $N(0,s^2\Id/2)$ on each endpoint, so that each view has identity covariance.
The contrast of the pair cancels the content, and its midpoint carries no information about the shift direction.
A shared affine editor $h\mapsto Mh+a$ then costs $\|M-\Id\|_F^2+\|a\|^2$, since a common offset adds cost but leaves the source separation unchanged.
This gives an exact extension: the whole frontier, with its calibration and deletion requirements, holds for every such content law, and the editor of \cref{thm:frontier} attains it without knowledge of the law (\cref{cor:real-content-frontier} in \cref{app:real-content-proof}).

\subsection{A consequence for empirical concept erasers}
\label{sec:empirical-erasers}
An empirical eraser removes exactly the mean difference it sees.
That difference is the true shift plus calibration noise, so the eraser deletes a noisy direction and leaves population separation behind.
For methods that delete the empirical direction, the frontier is this residual.

\begin{corollary}[Expected residual of one-direction empirical erasure]
\label{cor:empirical-erasers}
In the calibration experiment \cref{eq:calibration}, suppose a fitted shared affine editor $h\mapsto\widehat M h+\widehat a$ satisfies
$\widehat M\bar D=0$ and $\dim\ker\widehat M=1$ almost surely.
For every true direction $u$,
\begin{equation}
 \begin{aligned}
 L(\widehat M,u)&=1-\frac{(u^\top\bar D)^2}{\|\bar D\|^2},\\
 \E_u L(\widehat M,u)&=\risk_d^*(1,\kappa)
 =(d-1)\E_{J\sim\Pois(\kappa/2)}\frac1{d+2J}.
 \end{aligned}
 \label{eq:empirical-eraser-risk}
\end{equation}
\end{corollary}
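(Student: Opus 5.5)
The argument reduces the corollary to \cref{lem:kernel} and then reuses the distributional computation behind \cref{thm:frontier}.

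\emph{Step 1: the offset is irrelevant.} The residual $L$ in \cref{eq:loss} depends only on the linear part. Applying $h\mapsto\widehat M h+\widehat a$ to both sources shifts both edited means by $\widehat a$. Hence their difference is still $\widehat M\delta$, and the edited covariance is still $\widehat M\widehat M^\top$. So $L$ is computed from $\widehat M$ alone.

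\emph{Step 2: identify the kernel.} Almost surely $\bar D\neq0$, because it has a nondegenerate Gaussian law. The hypotheses $\widehat M\bar D=0$ and $\dim\ker\widehat M=1$ then force $\ker\widehat M=\operatorname{span}\{w\}$ with $w=\bar D/\|\bar D\|$. By \cref{lem:kernel},
\[
L(\widehat M,u)=1-\|\Pi_{\ker\widehat M}u\|^2=1-(u^\top w)^2=1-\frac{(u^\top\bar D)^2}{\|\bar D\|^2}.
\]
This is the first line of \cref{eq:empirical-eraser-risk}. It holds pointwise, whatever the structure of $\widehat M$ off its kernel: asymmetric, a contraction, or nonorthogonal.

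\emph{Step 3: the expectation.} The residual depends on the data only through $w$, the direction of $X=\sqrt N\bar D/s$. It is therefore identical to the residual of the attaining editor of \cref{thm:frontier} with $k=1$, for which $K_\perp$ is trivial and no randomization is needed.

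To evaluate it, use rotation invariance of the isotropic noise to set $u=e_1$. Then
\[
L=\frac{Z_2^2+\cdots+Z_d^2}{(\sqrt\kappa+Z_1)^2+Z_2^2+\cdots+Z_d^2}.
\]
Write the numerator as $A=Z_2^2+\cdots+Z_d^2\sim\chi^2_{d-1}$ and the signal term as $B=(\sqrt\kappa+Z_1)^2\sim\chi^2_1(\kappa)$, independent of $A$.

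Represent $B$ as a Poisson mixture: conditional on $J\sim\Pois(\kappa/2)$, $B\sim\chi^2_{1+2J}$. Conditionally on $J$, the ratio $A/(A+B)$ is $\Beta\bigl((d-1)/2,(1+2J)/2\bigr)$, with mean $(d-1)/(d+2J)$. Taking the expectation over $J$ gives
\[
\E_u L=(d-1)\,\E\frac{1}{d+2J}=\risk_d^*(1,\kappa),
\]
which matches \cref{eq:frontier} at $k=1$ for every $u$.

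The only delicate points are the almost-sure identification of the kernel, which needs $\bar D\ne0$, and checking that the affine offset does not enter the residual. The Poisson–Beta computation is the same one already invoked for \cref{thm:frontier}, so it can be cited rather than redone.
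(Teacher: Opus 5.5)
Your proposal is correct and matches the paper's proof. Almost surely $\bar D\neq0$, so the kernel is $\operatorname{span}\{\bar D\}$; \cref{lem:kernel} then gives the pointwise identity, and the cost-one case of \cref{thm:frontier} gives the expectation (you re-derive its Poisson--Beta evaluation rather than just citing it). The affine offset leaves the source separation unchanged.
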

\begin{proof}
Almost surely $\bar D\ne0$, so the two conditions give $\ker\widehat M=\operatorname{span}\{\bar D\}$; \cref{eq:kernel} gives the pointwise identity, and the cost-one case of \cref{thm:frontier} gives its expectation.
The common affine offset does not change source separation.
\end{proof}

For balanced paired observations, the empirical source means differ by $\bar D$.
MP \citep{haghighatkhah2022mp}, centered binary SAL with one direction removed \citep{shao2023sal} and exact binary LEACE \citep{belrose2023leace} all have kernel $\operatorname{span}\{\bar D\}$ almost surely, including LEACE with a singular empirical covariance (\cref{app:empirical-erasers}).
\Cref{eq:empirical-eraser-risk} is therefore their exact population residual, for every content law allowed by \cref{cor:real-content-frontier}, and \cref{eq:bounds} gives their calibration needs.
The three methods share the residual but not the distortion.
MP and SAL cost exactly one, LEACE's oblique map can cost more, and a mean-restoring offset adds to any of them.
Because their residual is known before fitting, \cref{eq:empirical-eraser-risk} can be inverted in $N$ at specified $(d,\theta,s)$, which gives the smallest calibration size $N_\star(\varepsilon)$ with expected residual at most $\varepsilon$.
At $d=64$ and $\theta=s^2=1$, the targets $1/2$ and $1/4$ require $N_\star=63$ and $190$ pairs (\cref{app:real-content-experiment}).
The excess for any other fitted subspace is the part of the empirical direction it leaves undeleted, weighted by how informative the calibration was.

\begin{corollary}[Excess residual of a fitted deleted subspace]
\label{cor:subspace-excess}
In \cref{eq:calibration}, fix $\kappa>0$ and an integer $1\le k\le d$, and let $K=\ker\widehat M$ satisfy $\dim K=k$ almost surely, where the fitted editor may use the full allowed calibration information and independent fitting randomness.
Write $\overline\E$ for expectation with $U$ uniform on $\sphere$ followed by calibration and fitting at direction $U$, and set $R=\|X\|$, $w=X/R$, and $\rho_K=\|(\Id-\Pi_K)w\|^2$, the fraction of the empirical direction outside $K$.
Then
\begin{equation}
 \begin{gathered}
 \overline\E L(\widehat M,U)
 =\risk_d^*(k,\kappa)
   +\overline\E\!\left[g_\kappa(R)\rho_K\right],\\
 g_\kappa(r)=1-d\,\lambda_\kappa(r)\ge0,
 \end{gathered}
 \label{eq:subspace-excess}
\end{equation}
where $\lambda_\kappa(r)=(1-\beta(\sqrt\kappa r))/(d-1)$ and $\beta(\sqrt\kappa r)=\overline\E[(w^\top U)^2\mid X]$ is the posterior second moment along $w$ in \cref{eq:posterior}.
The identity also holds for every content law $Q$ allowed by \cref{cor:real-content-frontier}.
\end{corollary}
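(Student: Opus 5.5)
The plan is to condition on the calibration statistic and reduce everything to a quadratic form in the posterior second-moment matrix. By \cref{lem:kernel}, $L(\widehat M,U)=1-\|\Pi_K U\|^2=1-\tr(\Pi_K UU^\top)$. The fitted editor may use only $X$, extra observations whose conditional law given $X$ does not depend on $U$, and independent fitting randomness. So under the uniform prior, $U$ is conditionally independent of $K$ given $X$. Hence
\[
\overline\E[L\mid X,K]=1-\tr\bigl(\Pi_K\,\Sigma(X)\bigr),\qquad \Sigma(X)=\overline\E[UU^\top\mid X].
\]
The posterior \cref{eq:posterior} is von Mises--Fisher centred at $w$ and invariant under rotations fixing $w$. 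Therefore
\[
\Sigma(X)=\beta\,ww^\top+\lambda_\kappa(R)\,(\Id-ww^\top),
\]
with $\beta=\beta(\sqrt\kappa R)$. Because $\tr\Sigma=1$, we get $\lambda_\kappa=(1-\beta)/(d-1)$, matching the statement.

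Next I expand the trace using $\dim K=k$ and $\tr(\Pi_K ww^\top)=1-\rho_K$:
\[
\tr(\Pi_K\Sigma)=\beta(1-\rho_K)+\lambda_\kappa\bigl(k-(1-\rho_K)\bigr)=\beta+(k-1)\lambda_\kappa-\rho_K(\beta-\lambda_\kappa).
\]
The first two terms, $\beta+(k-1)\lambda_\kappa$, are exactly what the optimal editor of \cref{thm:frontier} achieves, since that editor has $w\in K$ and $\rho_K=0$. Its conditional risk is therefore $1-\beta-(k-1)\lambda_\kappa$. Taking $\overline\E$ of this, its Bayes risk equals its constant frequentist risk, which is $\risk_d^*(k,\kappa)$.

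What remains is the coefficient of $\rho_K$, namely $\beta-\lambda_\kappa$. Substituting $\beta=1-(d-1)\lambda_\kappa$ gives $\beta-\lambda_\kappa=1-d\lambda_\kappa=g_\kappa(R)$. Taking the outer expectation then yields \cref{eq:subspace-excess}.

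The main point needing care is nonnegativity, $g_\kappa\ge0$, i.e. $\beta\ge1/d$. I would argue that the posterior second moment along the mean direction dominates the others. The vMF density $\propto e^{c\,t}$ with $t=w^\top U$ and $c=\sqrt\kappa R\ge0$ tilts the uniform law of $t$ (density $\propto(1-t^2)^{(d-3)/2}$) toward larger $t$. It is also symmetric in $|t|$ only when $c=0$.

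A cleaner route is to write $\beta(c)=\E_c[t^2]$ and note that $\frac{d}{dc}\E_c[t^2]=\Cov_c(t^2,t)$. To see this is nonnegative, pair $t$ with $-t$: the conditional law of $|t|$ given its sign is the same for both signs, and a larger $|t|$ makes the positive sign more likely. So $t^2$ and $t$ are positively associated, giving $\beta(c)\ge\beta(0)=1/d$. If this step is fiddly, the fallback is to invoke the eigenvalue ordering already used in the proof of \cref{thm:frontier}, where $w$ carries the largest posterior eigenvalue.

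For the extension to content laws $Q$, \cref{cor:real-content-frontier} ensures that the midpoints are uninformative about $U$ given the contrasts. The conditional-independence step therefore goes through unchanged.
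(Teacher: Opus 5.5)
Your proposal is correct and follows the paper's proof. You condition on the data, use that $U$ is conditionally independent of $K$ given $X$, write the posterior second-moment matrix as $\beta ww^\top+\lambda_\kappa(\Id-ww^\top)$, and expand $1-\tr(\Pi_K B_X)=(d-k)\lambda_\kappa(R)+g_\kappa(R)\rho_K$; the only cosmetic difference is that you identify $\overline\E[(d-k)\lambda_\kappa(R)]$ with $\risk_d^*(k,\kappa)$ through the constant risk of the attaining editor, rather than through $\overline\E\lambda_\kappa(R)=\E[1/(d+2J)]$.

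One slip in your primary argument for $\beta\ge1/d$: under the tilted law with $c>0$, the conditional law of $|t|$ given its sign differs between the two signs. What you need instead is that, given $|t|=y$, the sign is $+$ with probability $(1+\tanh(cy))/2$, so $\Cov_c(t^2,t)=\Cov_c(|t|^2,|t|\tanh(c|t|))\ge0$. Alternatively, your fallback matches the paper's own route: write $\beta=\E[T^2\cosh(aT)]/\E[\cosh(aT)]$ and apply the covariance inequality for two nondecreasing functions of $|T|$.
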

\parhead{Proof (\cref{app:subspace-excess-proof}).}
Given the data, a kernel $K$ has expected residual $1-\tr(\Pi_KB_X)$, where $B_X$ is the posterior second-moment matrix \cref{eq:posterior}.

The first term in \cref{eq:subspace-excess} is the unavoidable direction-averaged residual at $k$ deleted directions.
The second vanishes whenever $K$ contains $w$.
Hence every fixed-$k$ rule that deletes $w$ has direction-averaged risk $\risk_d^*(k,\kappa)=(d-k)\,\risk_d^*(1,\kappa)/(d-1)$, however it chooses its other $k-1$ directions.
Each deletion beyond the empirical direction thus removes only $1/(d-1)$ of the residual left by the first, so a larger budget cannot substitute for more calibration.

\parhead{Application to iterative and adversarial erasers.}
An INLP iterate \citep{ravfogel2020inlp} or an R-LACE projection \citep{ravfogel2022rlace} at a fixed number of deletions is a fitted kernel, and \cref{cor:subspace-excess} applies to it whatever the method optimizes.
At equal kernel dimension, and hence equal orthogonal cost, it splits the residual into the statistical limit and the excess of the returned subspace, which gives a quantitative reading of the comparisons of INLP steps with random projections in \citet{haghighatkhah2022mp}.
Because $g_\kappa$ depends on $\kappa$, the excess is a diagnostic under the model, not a certificate computed from data alone.

\parhead{Controlled illustration.}
We fit MP, SAL, LEACE, fixed-step INLP, and a fixed-budget R-LACE variant to the same paired endpoints and evaluate their actual maps against the simulated shift (\cref{app:subspace-experiment}).
At $d=N=32$ and one deletion, one-step INLP leaves mean residual $0.687\pm0.014$, against $0.501\pm0.013$ for MP ($\pm1.96$ Monte Carlo SE).
Its predicted excess $0.181$ matches the observed paired excess $0.186\pm0.012$ over MP, and at four deletions the predicted $0.012$ matches $0.017\pm0.008$.
The fitted R-LACE subspaces nearly contain $w$ throughout the grid, and their mean residuals follow the fixed-rank frontier (\cref{fig:benchmark}a).

\parhead{Unpaired class samples.}
With $n_\pm$ unpaired Gaussian observations per source, the class-mean difference is $N(\delta,(1/n_++1/n_-)\Id)$ and is independent of the centered residuals, whose law is free of $u$.
The three erasers again delete it, so \cref{thm:frontier}, \cref{cor:empirical-erasers,cor:subspace-excess} and $N_\star$ apply with $\kappa=\theta^2/(1/n_++1/n_-)$.
For two classes of size $n$ this is $n\theta^2/2$, against $N\theta^2/s^2$ for $N$ pairs, so at equal sample size pairing multiplies $\kappa$ by $2/s^2$.
\Cref{cor:real-content-frontier}, by contrast, needs pairs to cancel the content.

\parhead{Correcting with the source label.}
A correction that knows the source can subtract an estimate $\hat\delta$ of the shift from one of them instead of deleting a direction from both.
Subtracting $\bar D$ itself leaves the share $\E\|\bar D-\delta\|^2/\theta^2=d/\kappa$, which exceeds one when $\kappa<d$; the shrunken $\frac{\kappa}{d+\kappa}\bar D$ leaves $d/(d+\kappa)$.
No estimate does much better: over all $\hat\delta$, even with $\theta$ and $s$ known, the minimax share left lies between $\risk_d^*(1,\kappa)$ and $d/(d+\kappa)$, which differ by at most $1/(d+\kappa)$ (\cref{prop:source-aware}).
The source label therefore saves the deleted direction, not the calibration: repair still needs $\kappa\asymp d$.

\section{Choosing when to act}
\label{sec:decisions}
\subsection{The oracle limit of selective repair}
\label{sec:selective}
The frontier averages over calibration fits, and some fits are more informative than others.
Can a rule beat the limit by correcting only after favorable fits?
We allow rules that, on each fit, either delete one direction $w$ at cost one or abstain and return the identity, and that act with probability at least $p$ at every shift direction.
Let $\risk_d^{\mathrm{sel}}(p,\kappa)$ be the smallest worst-case expected residual given that the rule acts, the risk--coverage criterion of selective prediction \citep{elyaniv2010selective} with coverage taken over calibration samples (\cref{app:selective}).

\begin{theorem}[Exact selective frontier]
\label{thm:selective}
Let $\tau_p$ satisfy $\SF_{\chi^2_d(\kappa)}(\tau_p)=p$, and let $J\sim\Pois(\kappa/2)$.
Then
\begin{equation}
 \risk_d^{\mathrm{sel}}(p,\kappa)
 =\frac{d-1}{p}\E_J\frac{\SF_{\chi^2_{d+2J}}(\tau_p)}{d+2J}.
 \label{eq:selective-frontier}
\end{equation}
It is attained by $w=X/\|X\|$ and $A=\{\|X\|^2\ge\tau_p\}$, which acts with probability exactly $p$ at every direction.
\end{theorem}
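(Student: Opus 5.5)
The plan is to follow the Bayes--equalizer argument behind \cref{thm:frontier}, with a Neyman--Pearson step added for the acting set. A selective rule is a pair $(A,\hat w)$: on the event $A$ it deletes one direction $\hat w$, and otherwise it returns $\Id$. Since abstention leaves residual one by \cref{lem:kernel}, it is natural to work with the unnormalized quantity $\E_u[L\,\ind_A]$. The criterion is then $\E_u[L\ind_A]/\Prob_u(A)$, subject to $\Prob_u(A)\ge p$ for every $u$.

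\emph{Lower bound.} Put the uniform prior on $u$ and use $\sup_u \E_u[L\ind_A]/\Prob_u(A)\ge \overline\E[L\ind_A]/\overline\Prob(A)$. This holds because a ratio of averages is at most the largest ratio, and it is fine here since $\Prob_u(A)>0$ for every $u$.

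Given $X$, the posterior second-moment matrix $B_X$ from \cref{eq:posterior} has top eigenvalue $\beta(\sqrt\kappa\|X\|)$ along $w=X/\|X\|$. For any $\hat w$, therefore, $\overline\E[L\mid X]\ge 1-\beta(\sqrt\kappa R)$ with $R=\|X\|$, and this bound is attained by $\hat w=w$.

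Under the prior, the marginal law of $R^2$ is $\chi^2_d(\kappa)$. The remaining problem is to minimize $\overline\E[(1-\beta(\sqrt\kappa R))\ind_A]/\overline\Prob(A)$ over sets with $\overline\Prob(A)\ge p$. I will show that $1-\beta(t)$ is decreasing in $t$. Granting this, the conditional residual is a decreasing function of $R$, and a bathtub (Neyman--Pearson) argument shows that the optimum takes $A=\{R^2\ge\tau\}$ with the largest $\tau$ allowed, namely $\overline\Prob(A)=p$, so $\tau=\tau_p$. Randomizing at the boundary is unnecessary because $R$ has a continuous law.

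Two points need care here. First, enlarging $A$ beyond mass $p$ only adds points with larger residual, so the conditional mean cannot decrease. Second, the monotonicity of $\beta$ should follow from $\beta(t)$ being the second moment of $w^\top U$ under the von Mises--Fisher density $\propto e^{t\,w^\top u}$: the family has monotone likelihood ratio in $w^\top u$, so $(w^\top U)^2$ is stochastically increasing in $t$. That step needs a short argument, because the square is not monotone in $w^\top u$ on $[-1,0]$; I would use the symmetry of $w^\top U$ under the uniform law together with the fact that $e^{tc}+e^{-tc}$ is increasing in $|c|$.

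\emph{Attainment and constant risk.} The rule $w=X/\|X\|$, $A=\{\|X\|^2\ge\tau_p\}$ is rotation equivariant. Under any $u$, $\|X\|^2\sim\chi^2_d(\kappa)$, so $\Prob_u(A)=p$ exactly, and $\E_u[L\ind_A]$ does not depend on $u$. Its frequentist risk therefore equals its Bayes risk, which matches the lower bound and gives minimaxity.

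\emph{Evaluating the value.} Rotate so that $u=e_1$. Then $L=Q/(Y+Q)$, where $Q\sim\chi^2_{d-1}$ and $Y\sim\chi^2_1(\kappa)$ are independent and $\|X\|^2=Y+Q$. Write $Y$ as a Poisson mixture: given $J\sim\Pois(\kappa/2)$, $Y\sim\chi^2_{1+2J}$. Conditionally on $J$, the ratio $Q/(Y+Q)$ is $\Beta((d-1)/2,(1+2J)/2)$ and is independent of the total $T=Y+Q\sim\chi^2_{d+2J}$. Hence
\[
\E[L\ind_{T\ge\tau_p}\mid J]=\frac{d-1}{d+2J}\,\SF_{\chi^2_{d+2J}}(\tau_p).
\]
Averaging over $J$ and dividing by $p$ gives \cref{eq:selective-frontier}.

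The main obstacle is the lower bound. It requires both the reduction from the worst-case ratio to the Bayes ratio under the per-direction coverage constraint, and the monotonicity of $\beta$, which makes the radial acting set optimal.
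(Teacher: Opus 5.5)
Your proposal is correct and follows the paper's proof essentially step for step: the uniform-prior ratio-of-averages bound, the posterior residual $1-\beta(\sqrt\kappa\|X\|)$ minimized by $w=X/\|X\|$, the tail (bathtub) rearrangement plus monotonicity of the tail average in the action mass, equivariance for attainment, and conditional Beta--chi-square independence given $J$ for the value. The one step to tighten is the monotonicity of $\beta$. The fact that $\cosh(tc)$ increases in $|c|$ only compares $t$ with $0$. What you actually need is that $\cosh(t_2y)/\cosh(t_1y)$ increases in $y\ge0$ for $t_2>t_1$, or equivalently $\beta'(t)=\Cov_t\{Y^2,Y\tanh(tY)\}\ge0$ with $Y=|w^\top U|$, which is how the paper argues it. Weak monotonicity is all the bathtub step requires.
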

Acting on the largest calibration energies is optimal because the larger $\|X\|$, the more reliable the empirical direction: its posterior residual decreases with $\|X\|$.
It lowers the residual, but not enough: when $\kappa=o(d)$, the best conditional residual still tends to one at any action frequency bounded away from zero (\cref{cor:selective-gap}, \cref{app:selective}), so selection cannot close the detection--repair gap.
Under a budget that holds only on average over fits, the exact optimum instead spends complete deletion on the least informative fits, where one deletion barely helps (\cref{thm:average}, \cref{app:average}).

\subsection{Certified action without the shift strength}
\label{sec:certification}
The oracle threshold $\tau_p$ uses $\kappa$, which an observable rule does not know.
We instead require that the mean-direction editor $\Id-ww^\top$, $w=X/\|X\|$, applied on an event $A$, rarely both acts and misses a residual target $\varepsilon$, uniformly over all shifts:
\begin{equation}
 \sup_{\delta\ne0}\Prob_\delta\{A\ \text{and}\ L(\Id-ww^\top,\delta/\|\delta\|)>\varepsilon\}\le\alpha.
 \label{eq:joint}
\end{equation}
The guarantee is joint because the conditional error among accepted edits cannot be bounded uniformly: as $\delta\to0$, the residual of any action is $\operatorname{Beta}(\tfrac{d-1}2,\tfrac12)$-distributed whatever $\|X\|^2$ is.
A confidence ball around the calibration mean gives one valid rule (\cref{thm:certificate}), but it rarely acts: at $d=256$ and $\kappa=128$, where detection is near certain, it acts with probability about $10^{-8}$.
The Poisson representation behind the frontier gives a less conservative one.
Given the Poisson component, the calibration energy and the residual are independent, so controlling bad action within each component controls it for every $\kappa$ at once.
The resulting finite-mixture certificate acts strictly more often than the confidence ball at every shift strength (\cref{thm:envelope}), also with an estimated noise scale (\cref{app:unknown}).
As $\kappa/d\to c$, all these rules switch from almost never to almost always acting at $c=\varepsilon^{-1}-1$, exactly where the oracle conditional residual $1/(1+c)$ crosses the target (\cref{cor:phase}).

\section{Anisotropic and clustered calibration}
\label{sec:two-level}

Real calibration noise is rarely isotropic, and pairs often come in clusters, such as participants with several epochs each.
Clustering is a special case of anisotropic noise, and the guarantee of the attaining editor needs only two moments, so one proposition covers both.

\begin{proposition}[Bounds under anisotropic noise]
\label{prop:anisotropy}
Let the contrasts be i.i.d.\ with mean $\theta u$ and covariance $\Gamma$ in coordinates where the within-source covariance is $\Id$, with eigenvalues $\gamma_{(1)}\ge\dots\ge\gamma_{(d)}>0$, and set $\bar\kappa=N\theta^2d/\tr\Gamma$.
\begin{compactitems}
\item[(i)] Gaussian or not, the editor of \cref{thm:frontier} satisfies
$\E_uL\le\frac{d-k}{d-1}\cdot\frac{d}{d+\bar\kappa}$ for every $u\in\sphere$ and $1\le k\le d$.
\item[(ii)] If the contrasts are Gaussian, every editor within the hard budget $b$, with $k=\min\{d,\lfloor b\rfloor\}\ge1$, has
$\sup_u\E_uL\ge\max_{k<m\le d}\risk_m^*(k,N\theta^2/\gamma_{(m)})$,
whether or not $\Gamma$ is known.
\end{compactitems}
\end{proposition}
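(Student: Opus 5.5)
For (i), I work conditionally and use only the first two moments of $\bar D$. By \cref{lem:kernel}, the editor deletes $K=\operatorname{span}\{w\}\oplus K_\perp$ with $K_\perp$ uniform in $w^\perp$. Given $w$, the direction $u$ has the fraction $\|(\Id-ww^\top)u\|^2$ of its mass off $w$. The random $(k-1)$-subspace of the $(d-1)$-dimensional $w^\perp$ removes on average a share $(k-1)/(d-1)$ of it. Hence
\[
\E[L\mid \bar D]=\frac{d-k}{d-1}\Bigl(1-\frac{(u^\top\bar D)^2}{\|\bar D\|^2}\Bigr)=\frac{d-k}{d-1}\cdot\frac{\|P\bar D\|^2}{\|\bar D\|^2},\qquad P=\Id-uu^\top .
\]
It therefore suffices to show $\E[\|P\bar D\|^2/\|\bar D\|^2]\le d/(d+\bar\kappa)$.

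Write $A=\|P\bar D\|^2$ and $B=\|\bar D\|^2$. I bound $A/B\le$ a quantity whose expectation is controlled by moments. I plan to use the pointwise inequality $A/B\le \|\bar D-c\,\theta u\|^2/\|\bar D\|^2$ minimized over $c$, or more usefully the Cauchy--Schwarz step behind \cref{eq:bounds}. Concretely, the off-axis share equals the squared sine of the angle between $\bar D$ and $u$, and this is at most $\|\bar D-t u\|^2/t^2$ for every $t>0$. That bound is too weak alone, so I take the route that mimics the isotropic upper bound. Set $\epsilon=\bar D-\theta u$. For every $\lambda\in\R$,
\[
\sin^2\angle(\bar D,u)=\min_c\frac{\|c\bar D-u\|^2}{1}\le\|\lambda\bar D-u\|^2 .
\]
Taking expectations gives $\E\|\lambda\bar D-u\|^2=(\lambda\theta-1)^2+\lambda^2\tr\Gamma/N$. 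Minimizing over $\lambda$ gives $\lambda=\theta/(\theta^2+\tr\Gamma/N)$, with value $(\tr\Gamma/N)/(\theta^2+\tr\Gamma/N)=d/(d+\bar\kappa)$. This uses only the mean and covariance, so it holds Gaussian or not. I would double-check that this reproduces the isotropic upper bound in \cref{eq:bounds}; it does, since $\bar\kappa=\kappa$ there. Together with the factor $(d-k)/(d-1)$, this proves (i).

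For (ii), the plan is a reduction to \cref{thm:frontier} in dimension $m$ via a less informative experiment. Fix $m>k$ and let $E_m$ be the span of the top-$m$ eigenvectors of $\Gamma$. On these coordinates every eigenvalue is at least $\gamma_{(m)}$. Restrict the prior to $u\in E_m\cap\sphere$. I also give the editor extra information, which can only lower the risk: the projection of the contrasts onto $E_m^\perp$, which is independent of $u$ under this prior and Gaussian, and $\Gamma$ itself. Within $E_m$, the mean contrast is $N(\theta u,\Gamma_m/N)$ with $\Gamma_m\succeq\gamma_{(m)}I_m$. This equals an isotropic observation $N(\theta u,\gamma_{(m)}I_m/N)$ plus independent $u$-free noise, so the experiment is a garbling of the isotropic $m$-dimensional one. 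Any editor's risk is therefore at least the Bayes risk in the isotropic $m$-dimensional experiment with signal-to-noise ratio $\kappa_m=N\theta^2/\gamma_{(m)}$, where the editor may also use the additional noise.

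The loss must then be reduced to $m$ dimensions. For $u\in E_m$, \cref{lem:kernel} gives $L=1-\|\Pi_{\ker M}u\|^2=1-\|\Pi_{E_m}\Pi_{\ker M}\Pi_{E_m}\|$-weighted quantities. The matrix $Q=\Pi_{E_m}\Pi_{\ker M}\Pi_{E_m}$ satisfies $0\preceq Q\preceq I_m$ and $\tr Q\le k$. Since the Bayes objective $\tr(QB_X)$ is linear in $Q$, it is maximized at an extreme point, which is a rank-$k$ projector in $E_m$. Thus the Bayes risk is at least that of the $m$-dimensional problem with $k$ deletions, namely $\risk_m^*(k,\kappa_m)$, using the uniform-prior Bayes value in the proof of \cref{thm:frontier}. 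Taking the maximum over $m$ proves (ii). The unknown-$\Gamma$ case is covered because knowing $\Gamma$ only helps. I expect the main obstacle to be verifying carefully that the proof of \cref{thm:frontier} gives the Bayes bound for relaxed actions $Q$ and with independent side information; this should follow from the remark after the setup that allows extra observations whose law given $X$ is free of $u$.
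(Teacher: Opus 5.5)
Your proof is correct. Part (ii) follows the paper's argument step for step:
\begin{itemize}
\item restrict $u$ to the top-$m$ eigenspace and treat everything outside it as fitting randomness;
\item compress $\Pi_{\ker M}$ to $\Pi_{E_m}\Pi_{\ker M}\Pi_{E_m}$, which has trace at most $k$;
\item garble via $\Gamma_m=\gamma_{(m)}I_m+(\Gamma_m-\gamma_{(m)}I_m)$;
\item reveal $\Gamma$.
\end{itemize}
You leave two points implicit, and the paper relies on the same two. First, the $E_m^\perp$ coordinates are independent of the $E_m$ coordinates because $E_m$ is $\Gamma$-invariant. Second, once $\Gamma$ is revealed, $\bar D$ is sufficient, so the centered contrasts are $u$-free randomness. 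Your one variation is in bounding the posterior captured signal. You use linearity of $\tr(QB)$ over $\{0\preceq Q\preceq I_m,\ \tr Q\le k\}$. The paper instead writes $Q$ as a convex combination of projectors and dominates the editor pointwise in $u$ by a randomized rank-$k$ rule. Both rest on the same extreme-point fact: yours lands on the uniform-prior Bayes value and the paper's on the minimax value of \cref{thm:frontier}, and these coincide.

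Part (i) uses a genuinely different inequality. The paper applies Cauchy--Schwarz to $\E|u^\top\bar D|=\E[|u^\top w|\,\|\bar D\|]$ and then uses $\E|u^\top\bar D|\ge\theta$. You instead bound the squared sine pointwise by $\|\lambda\bar D-u\|^2$ and minimize a bias--variance expression in $\lambda$. Your route makes explicit that deleting the empirical direction never leaves more than subtracting the best oracle linear shrinkage of $\bar D$. That is the upper bound of \cref{prop:source-aware}, carried over to anisotropic, non-Gaussian noise. The paper's route passes through the slightly sharper intermediate bound $1-(\E|u^\top\bar D|)^2/\E\|\bar D\|^2$. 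You would recover it by letting $\lambda$ carry the sign of $u^\top\bar D$. Both routes end at $d/(d+\bar\kappa)$.
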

\parhead{Proof idea (\cref{app:anisotropy}).}
For (i), the Cauchy--Schwarz inequality bounds $\E(u^\top w)^2$ from below by $\bar\kappa/(d+\bar\kappa)$ using only the mean and covariance of $\bar D$.
For (ii), restricting $u$ to the $m$ noisiest directions leaves a problem no easier than the isotropic one at noise level $\gamma_{(m)}$.
For $\Gamma=s^2\Id$, part (i) is the upper bound in \cref{eq:bounds} and part (ii) at $m=d$ is \cref{thm:frontier}.

\parhead{Two-level calibration.}
Let $P$ participants contribute $n$ pairs each.
If participant $j$ has shift $\delta+\eta_j$ with $\Cov(\eta_j)=\Sigma_b$, and its pair contrasts vary around it with covariance $\Sigma_w$, the participant means are independent contrasts with mean $\delta$ and covariance $\Sigma_b+\Sigma_w/n$.
More pairs per participant shrink only the second term.

\begin{corollary}[Two-level calibration]
\label{cor:two-level}
In whitened coordinates, the mean-direction erasers have mean residual for the population shift $\delta$ at most $d/(d+\bar\kappa)$, with
\begin{equation}
 \bar\kappa(P,n)=\frac{P\theta^2d}{\tr(\Sigma_b+\Sigma_w/n)}\le\frac{P\theta^2d}{\tr\Sigma_b}\quad\text{for every }n.
 \label{eq:two-level}
\end{equation}
If a participant costs as much as $c$ pairs, then at a fixed total cost $P(c+n)$ the bound is smallest at $n^\star=\sqrt{c\,\tr\Sigma_w/\tr\Sigma_b}$ pairs per participant.
\end{corollary}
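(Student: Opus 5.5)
The plan is to reduce the two-level design to \cref{prop:anisotropy}(i) and then solve a one-variable allocation problem. Work in whitened coordinates, where the within-source covariance is $\Id$. Take as calibration contrasts the $P$ participant means $\bar D_j=n^{-1}\sum_{i}D_{ji}$.

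Conditionally on $\eta_j$, the pair contrasts of participant $j$ have mean $\delta+\eta_j$ and covariance $\Sigma_w$. By the law of total covariance, $\bar D_j$ then has mean $\delta$ and covariance $\Gamma=\Sigma_b+\Sigma_w/n$. The means are independent across participants, and I will assume identically distributed as well, as the setup implicitly does. The overall mean $\bar D=P^{-1}\sum_j\bar D_j$ is exactly the mean paired difference that MP, SAL and LEACE delete; with balanced designs it equals the grand mean of all $Pn$ contrasts. So the mean-direction erasers are the editor of \cref{thm:frontier} with $k=1$, applied to $P$ i.i.d.\ contrasts with covariance $\Gamma$.

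Next I would apply \cref{prop:anisotropy}(i) with $N=P$ and $k=1$. This yields $\E_uL\le d/(d+\bar\kappa)$ with $\bar\kappa=P\theta^2d/\tr(\Sigma_b+\Sigma_w/n)$, which is the first part of \cref{eq:two-level}. The step needing care is that part (i) only uses two moments, so Gaussianity of $\eta_j$ or of the within-participant noise is never needed. If $\Sigma_b$ is singular, the covariance is still positive definite provided $\Sigma_w$ is, since \cref{prop:anisotropy} asks for $\gamma_{(d)}>0$. The inequality $\bar\kappa\le P\theta^2d/\tr\Sigma_b$ is immediate because $\tr\Sigma_w/n\ge0$. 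This is the cap: no $n$ pushes the bound below $d/(d+P\theta^2d/\tr\Sigma_b)$.

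For the allocation, the bound $d/(d+\bar\kappa)$ is decreasing in $\bar\kappa$. It therefore suffices to maximize $P/(\tr\Sigma_b+\tr\Sigma_w/n)$ subject to $P(c+n)=T$. Substitute $P=T/(c+n)$ and minimize
\[
f(n)=(c+n)\left(\tr\Sigma_b+\frac{\tr\Sigma_w}{n}\right)
=c\,\tr\Sigma_b+\tr\Sigma_w+n\,\tr\Sigma_b+\frac{c\,\tr\Sigma_w}{n}
\]
over $n>0$. This function is convex, and AM--GM (or setting the derivative to zero) gives the minimizer $n^\star=\sqrt{c\,\tr\Sigma_w/\tr\Sigma_b}$. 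This is Cochran's classical two-stage allocation. I would note that this optimizes over real $n$; integer $n$ is handled by comparing the two neighbouring integers, using convexity of $f$.

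I expect no real obstacle. The only points to state carefully are that \cref{prop:anisotropy}(i) applies to the participant means rather than to the raw pairs, which are not independent within a participant, and that the claim is about the bound, not the exact risk.
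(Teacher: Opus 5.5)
Your proposal is correct and follows the paper's proof. Both treat the $P$ participant means as independent contrasts with covariance $\Sigma_b+\Sigma_w/n$, apply \cref{prop:anisotropy}(i) with $N=P$, and minimize $(\tr\Sigma_b+\tr\Sigma_w/n)(c+n)$ to obtain Cochran's $n^\star$. One wording point: LEACE is not literally the editor of \cref{thm:frontier}, but it shares the kernel $\operatorname{span}\{\bar D\}$, so by \cref{lem:kernel} it has the same residual and the bound carries over.
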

\parhead{Proof.}
Apply \cref{prop:anisotropy}(i) with $N=P$ and $\Gamma=\Sigma_b+\Sigma_w/n$; at fixed cost, $\bar\kappa$ is largest when $(\tr\Sigma_b+\tr\Sigma_w/n)(c+n)$ is smallest, which gives $n^\star$, the classical allocation of two-stage sampling \citep{cochran1977sampling}.

The calibration strength is therefore capped at $P\theta^2d/\tr\Sigma_b$ however many pairs each participant gives, and only more participants raise the cap.
The residual itself is the expectation of $1-(u^\top w)^2$ under the law of $\bar D$.
We evaluate it by plug-in Monte Carlo and call it the two-level prediction, to which the bound is close unless the noise spectrum is steep.

\section{Simulations and real device shifts}
\label{sec:experiments}
\begin{figure*}[t]
 \centering
 \includegraphics[width=\linewidth]{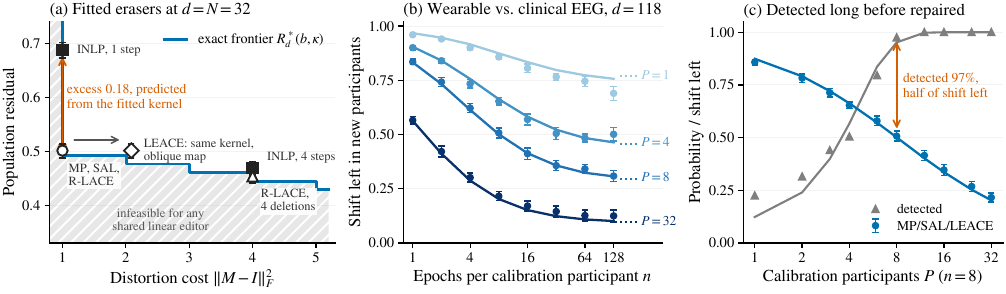}
 \caption{\textbf{What the frontier predicts for fitted erasers and for a real device shift.}
 (a) Simulation at $d=N=32$ with $\theta=s^2=1$: share of the shift left in the population against median distortion cost, over $256$ calibrations ($\pm1.96$ Monte Carlo SE). No shared linear editor can go below the exact frontier (line). Methods on the line are calibration-limited: four times the budget lowers the frontier only from $0.49$ to $0.44$. The orange arrow is the excess of one-step INLP predicted from its fitted kernel (\cref{eq:subspace-excess}), and the grey arrow is LEACE's extra distortion at the same kernel.
 (b, c) BOAS: share of the shift left in held-out participants (points, $\pm1.96$ SE over $20$ random splits of the same participants, which understates the uncertainty from the choice of people) and its prediction from the calibration pool alone (lines).
 (b) Dotted levels are the limits as $n\to\infty$.
 (c) At $n=8$, the shift is detected (grey) long before the erasers remove it (blue).}
 \label{fig:benchmark}
\end{figure*}

\subsection{Simulations and a semisynthetic check}
Simulations of the Gaussian experiment reproduce the exact formulas of \cref{sec:frontier,sec:decisions} within Monte Carlo error (\cref{tab:checks,fig:frontier} in \cref{app:numerics}).
For instance, the cost-one residual at $(d,\kappa)=(256,128)$ is $0.6644\pm0.0009$ against the exact $0.6652$.
Real represented content gives the same agreement, as \cref{cor:real-content-frontier} predicts.
We take three fixed pools of EEG features from a public paired-recording dataset \citep{williams2020flex}, inject shift and noise as in \cref{eq:real-content-pair}, and fit MP, SAL and LEACE.
Over $120$ conditions the residuals deviate from the predictions by $0.004$ in root mean square, and at the budgets $N_\star$ all three pools land on their targets (\cref{app:real-content-experiment}).
Selection and certification behave as \cref{sec:decisions} predicts.
At $d=64$ and $\kappa=32$, acting on only a fraction $10^{-5}$ of fits lowers the optimal conditional residual from $0.661$ to $0.549$ (\cref{fig:selective}).
Without certification, acting whenever the radial test rejects would, at $(256,128)$, act and miss the target $\varepsilon=0.5$ with probability $0.9986$.
The certified finite-mixture threshold is lower than the confidence ball's, $559.4$ against $588.6$ at $d=\kappa=256$ (\cref{fig:gate-comparison}).

\subsection{A wearable versus a clinical recording}
\label{sec:real-device}
In the BOAS dataset \citep{lopezlarraz2026boas}, $100$ healthy participants slept in a clinical polysomnograph and a forehead EEG headband at once.
For each participant we take one night, drawn with a fixed seed before any analysis.
We pair log-power spectra ($0.5$--$30$ Hz, $0.5$ Hz bins) of the two headband channels with those of the nearest polysomnograph channels in the same $30$-s N2 epoch, so each epoch gives one calibration pair and $d=118$.
A channel-quality rule leaves $85$ participants (\cref{app:boas}).
The participant is the calibration unit.
We split participants at random into a calibration pool of $43$ and a test set of $42$, whiten with the pool's within-source covariance, and estimate $\theta^2$, $\Sigma_b$ and $\Sigma_w$ from the pool alone.
We then draw $P$ calibration participants with $n$ epochs each, fit the erasers, and measure on the test participants how much of the population mean shift remains, over $20$ random splits.

The two-level prediction of \cref{sec:two-level} matches the observed residual across the grid $P\le32$, $n\le128$ (\cref{fig:benchmark}b): the root-mean-square deviation over $80$ cells is $0.016$, and the closed-form bound $d/(d+\bar\kappa)$ is never exceeded by more than $1.96$ standard errors.
The agreement holds without the channel-quality rule, for all sleep stages, for $d$ from $28$ to $236$, and for the actual MP, SAL and LEACE maps, whose kernels coincide (\cref{app:boas}).
On the $16$ participants of the Neuroscan/Flex N170 recordings \citep{williams2020flex}, a second device pair, the deviation is $0.005$ (\cref{app:real-pairs}).

The number of participants limits the correction.
All epochs of one participant share that participant's shift.
One participant caps $\bar\kappa$ at $\theta^2d/\tr\Sigma_b\approx35$, so even $128$ of its epochs leave $0.69$ of the shift, whereas four participants leave about half and $32$ leave $0.125$.
Because $\tr\Sigma_w/\tr\Sigma_b\approx11$, a participant who costs as much as $100$ epochs is best recorded for about $33$ epochs, or $17$ minutes of N2 sleep, a split that a pilot of eight nights already recovers (\cref{app:boas}).

Detection again comes first (\cref{fig:benchmark}c): with eight participants and eight epochs each, the shift is detected in $97\%$ of draws, yet half of it remains.
The gap widens with dimension, as \cref{cor:gap} leads one to expect: at $n=8$, half detection power needs about four participants for every $d$ from $28$ to $236$, while halving the shift needs about $4$, $6$, $8$ and $12$ participants at $d=28$, $58$, $118$ and $236$.
Subtracting the unshrunk mean difference with the known device label does worse than the erasers at every calibration size, and worse than no correction for fewer than $12$ participants at $n=8$, as its $d/\kappa$ share predicts.
Shrinkage removes this failure, but in the isotropic model no subtraction leaves less than the one-direction frontier (\cref{prop:source-aware}).

\section{Discussion}
\label{sec:discussion}
Within this model, the exact frontier is a benchmark for a correction that falls short.
As lower bounds, its limits also bind any larger model class containing this Gaussian experiment under the same constraints.
The frontier separates three reasons: the editor may be suboptimal (\cref{cor:subspace-excess}), the calibration may be too small for the budget (\cref{thm:frontier}), or the calibration may be collected on the wrong unit (\cref{cor:two-level}).
Evaluating representation repair should therefore report the achieved reduction, its distortion, and its frequency of application, rather than treating source detection as evidence that repair is already possible.

\parhead{Using the frontier in practice.}
The frontier plays the role of a power analysis for repair.
Before any calibration is collected, it gives the calibration signal-to-noise $\kappa$ that a target residual requires.
With repeated recordings per unit, such as participants or sites, three numbers fix $\kappa$: the size of the shift $\theta^2$, its variation between units $\tr\Sigma_b$, and the noise within a unit $\tr\Sigma_w$.
A small pilot estimates them, and \cref{eq:two-level} turns them into how many units to record and how many recordings per unit.
It also warns when more recordings from the same units cannot reach the target.
After fitting, the certificates of \cref{sec:decisions} decide, in the isotropic model, whether to apply the edit or to leave the data unchanged.

\parhead{Limitations.}
The exact results assume independent calibration, isotropic Gaussian contrasts, a single shift direction, shared linear or affine editing, and the covariance-adjusted residual and distortion.
With pairs, the frontier stays exact for any shared content (\cref{cor:real-content-frontier}).
Like every linear eraser, including LEACE, the editors remove only the first-moment difference.
Second-moment differences, which can be substantial between devices (\cref{app:boas}), are beyond any linear correction \citep{chowdhury2025perfect,akbari2025obliviator}.
All recordings are public and de-identified, and no new data were collected.\footnote{Code and saved results for every figure and table:\\\coderepo}
Neither source decodability nor a reduction in source separation alone establishes a downstream benefit: editing can remove useful information even when a residual target is met, so whether removal is desirable, in fairness, privacy, or prediction, must be evaluated in the application \citep{ravichander2021probing,elazar2021amnesic,kumar2022unreliable}.

\parhead{What remains open.}
Under anisotropic noise the minimax value is only bracketed (\cref{prop:anisotropy}), and weighting the direction estimate by the noise may beat the empirical one when the spectrum is steep.
Shifts spread over several directions, or more than two sources, call for a subspace version of the frontier.
Correcting one person means targeting their own shift rather than the population mean, which would combine both levels of \cref{eq:two-level}.
Calibrating until a certificate passes needs a sequential version of the guarantees.
Finally, the distortion cost treats all directions alike, whereas one that charges more for deleting task-relevant directions would trade the residual against downstream utility.
\label{sec:main-end}

\subsection*{AI use statement}
Generative artificial intelligence (AI) tools assisted code cleaning and refactoring, preliminary literature search, and manuscript text polishing.
All AI-assisted code was reviewed and tested by the authors.
The authors retain full responsibility for the final content.

\subsubsection*{Acknowledgements}
We would like to acknowledge support from the MBZUAI Startup Fund.

\bibliographystyle{apalike}
\bibliography{repair_v7_assets/references}

\clearpage
\appendix
\thispagestyle{empty}
\onecolumn
\aistatstitle{Detecting a Shift Is Not Enough:\\Exact Minimax Limits of Linear Representation Repair\\[2pt]Supplementary Materials}
\noindent\textbf{Roadmap.}
\Crefrange{app:experiment}{app:detection} prove the results of \cref{sec:model,sec:frontier}, \crefrange{app:selective}{app:phase} prove those of \cref{sec:decisions} on when to act, and \cref{app:anisotropy} proves \cref{prop:anisotropy} of \cref{sec:two-level}.
\Cref{app:numerics} reports the simulations of the theory, \cref{app:eraser-illustrations} fitted MP, SAL and LEACE on Gaussian and real content, \cref{app:boas} the wearable-versus-clinical EEG analysis with its Neuroscan/Flex replication, \cref{app:subspace-experiment} INLP and R-LACE at fixed rank, and \cref{app:anisotropy-check} a check under anisotropic content.

\section{Model details and the paired realization}
\label{app:experiment}
\subsection{A paired realization with unchanged content}
This appendix gives a paired construction that realizes the calibration experiment \cref{eq:calibration} and interprets the residual.
Both sources have positive probability, and their proportions do not enter the residual.

One realization of \cref{eq:calibration} uses, for $0<s^2<2$,
\begin{equation}
 H_i^+=C_i+\delta/2+E_i^+,
 \qquad H_i^-=C_i-\delta/2+E_i^-,
\end{equation}
where $C_i\sim N(0,(1-s^2/2)\Id)$ and
$E_i^+,E_i^-\overset{\mathrm{iid}}\sim N(0,s^2\Id/2)$ are independent.
Each view has within-source covariance $\Id$ and
$D_i=H_i^+-H_i^-\sim N(\delta,s^2\Id)$.
The midpoint is independent of the difference and its law does not involve $u$ at fixed $s$, so observing the full pair adds no directional information at known $s$.
Inference can therefore be based on the contrasts alone.
Any $s>0$ is allowed there, while the paired construction above needs $s^2<2$.
With $s=0$ a single contrast would identify $u$ and make cost-one repair exact, which is a different experiment.
When $s$ is unknown, observations whose law depends on $s$ could sharpen the scale estimate and with it the power of a certificate, and \cref{app:unknown} builds the estimated-scale certificates from the contrasts and their residuals.

\parhead{Gaussian interpretation of the residual.}
The numerator in \cref{eq:loss} is twice the KL divergence between the two edited Gaussian source distributions.
For equal source probabilities the Bayes classification error is $\Phi(-\theta\sqrt{L(M,u)}/2)$, where $\Phi$ is the standard normal distribution function.
In both readings the edited sources share the covariance $MM^\top$, and when $M$ deletes directions the two quantities are computed on its output support.

\subsection{Arbitrary shared content}
\label{app:real-content-proof}
Fix $d\ge2$, $\theta>0$, $0<s^2<2$, $\delta=\theta u$, and let the shared content $C$ have any fixed law $Q$ on $\R^d$ with
\begin{equation}
 \E_Q C=0,\qquad \Cov_Q(C)=(1-s^2/2)\Id,
 \label{eq:real-content-moments}
\end{equation}
where $Q$ does not depend on the unknown shift.
For each calibration pair, draw $C_i\sim Q$ and $E_i^+,E_i^-\sim N(0,s^2\Id/2)$, mutually independently and independently across pairs, and set
\begin{equation}
 H_i^{\pm}=C_i\pm\delta/2+E_i^{\pm}.
 \label{eq:real-content-pair}
\end{equation}
Fresh observations use independent draws $H=V+S\delta/2$ with $V=C+E$, where $C\sim Q$, $E\sim N(0,s^2\Id/2)$, and $S$ are independent.
The editor may know $Q$ and use the full calibration pairs and independent fitting randomness, but is fixed before observing fresh evaluation draws.

\begin{corollary}[The frontier with arbitrary shared content]
\label{cor:real-content-frontier}
Under \cref{eq:real-content-moments,eq:real-content-pair}, the source-neutral cost $\mathcal C_Q(M,a):=\E\|(M-\Id)V+a\|^2$ of a shared affine editor $h\mapsto Mh+a$ equals $\|M-\Id\|_F^2+\|a\|^2$, and its normalized covariance-adjusted squared mean separation $\mathcal L_Q(M,u):=(M\delta)^\top(MM^\top)^+(M\delta)/\theta^2$ equals $L(M,u)$ of \cref{eq:loss}.
For every hard budget $b\ge0$, the minimax expected loss over fitted editors $(\widehat M,\widehat a)$ satisfying $\mathcal C_Q(\widehat M,\widehat a)\le b$ on every fit and seed, with the worst case over $u\in\sphere$, equals $\risk_d^*(b,\kappa)$ with $\kappa=N\theta^2/s^2$.
The orthogonal editor of \cref{thm:frontier}, with zero offset, attains this value without knowledge of $Q$.
\end{corollary}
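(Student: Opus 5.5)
The plan is to establish the two identities first, then reduce the fitted-editor problem to the Gaussian experiment of \cref{thm:frontier} by a sufficiency argument.

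\textbf{The identities.} For the cost, $V=C+E$ has mean zero and covariance $(1-s^2/2)\Id+(s^2/2)\Id=\Id$ under \cref{eq:real-content-moments}. Hence
\[
\E\|(M-\Id)V+a\|^2=\tr\bigl((M-\Id)(M-\Id)^\top\bigr)+\|a\|^2 ,
\]
because the cross term vanishes by $\E V=0$. For the loss, $\mathcal L_Q$ is defined by the same formula as $L$, with edited means differing by $M\delta$; the offset $a$ cancels in the difference. So $\mathcal L_Q=L$ by \cref{eq:loss} and needs no content assumptions beyond the mean separation.

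\textbf{Achievability.} The editor $\Id-\Pi_K$ with $a=0$ from \cref{thm:frontier} costs $k$ on every fit, by the cost identity. It uses only $\bar D$, and $D_i=H_i^+-H_i^-=\delta+E_i^+-E_i^-$ is free of $C_i$. Therefore $D_i\sim N(\delta,s^2\Id)$ i.i.d. whatever $Q$ is. The risk computation of \cref{thm:frontier} then applies verbatim, giving constant risk $\risk_d^*(b,\kappa)$ without knowledge of $Q$.

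\textbf{Lower bound.} Given a feasible fitted pair $(\widehat M,\widehat a)$, note $\|\widehat M-\Id\|_F^2\le\mathcal C_Q\le b$ pointwise. So $\widehat M$ alone is feasible for the hard budget in \cref{eq:minimax} and has the same loss. It remains to show that access to full pairs and to $Q$ gives no directional advantage. Write each pair through the contrast $D_i$ and the midpoint $G_i=(H_i^++H_i^-)/2=C_i+(E_i^++E_i^-)/2$. Since $E_i^\pm$ are i.i.d. Gaussian with equal covariance, $E_i^+-E_i^-$ and $E_i^++E_i^-$ are independent. As $C_i$ is independent of both, $G_i$ is independent of $D_i$, and its law (that of $Q$ convolved with $N(0,s^2\Id/4)$) does not depend on $u$.

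The midpoints are thus ancillary extra observations whose conditional law given $X$ is free of $u$, exactly the class allowed in \cref{sec:model} (\cref{app:frontier}). Equivalently, a rule using $(D,G)$ plus a seed can be simulated by a randomized rule using $D$ alone: draw $G$ from its known law as part of the seed. This keeps the risk at every $u$ and the pointwise budget. Also $D$ reduces to $X$ at known $s$. Hence $\sup_u\E_u\mathcal L_Q\ge\risk_d^*(b,\kappa)$ by \cref{thm:frontier}.

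The main obstacle is stating this simulation step cleanly. The fitted editor is a measurable function of $(D,G,\text{seed})$. Replacing $G$ by an independent copy produced from the seed must leave the joint law of $(D,G)$ unchanged at every $u$, which independence and $u$-freeness of $G$ guarantee. One must also note that $Q$ is fixed and does not depend on $u$, so "knowing $Q$" lets the rule generate $G$ but confers no information about $u$.
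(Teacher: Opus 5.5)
Your proposal is correct and follows the paper's proof essentially step for step. The cost and loss identities come from $\Cov(V)=\Id$. The contrast--midpoint decomposition shows the midpoints are independent of the contrasts with a $u$-free law, so they serve only as fitting randomness, and the contrasts reduce to $X$. The offset can be dropped because it only adds cost, and the zero-offset orthogonal editor attains the value because $D_i\sim N(\delta,s^2\Id)$ for every $Q$. Your explicit simulation of $G$ from the seed simply spells out the step the paper phrases as ``the remaining observations can serve only as fitting randomness.''
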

\begin{proof}
The fresh variation has $\E V=0$ and $\Cov(V)=\Cov(C)+\Cov(E)=\Id$, so expanding the quadratic distortion gives the cost identity; the edited source means differ by $M\delta$ with common covariance $MM^\top$, so \cref{eq:kernel} gives the loss identity; only the moments of $Q$ enter.

Next write each pair as its difference and midpoint,
$D_i=H_i^+-H_i^-=\delta+E_i^+-E_i^-$ and $A_i=(H_i^++H_i^-)/2=C_i+(E_i^++E_i^-)/2$.
The Gaussian noise difference and noise average have zero cross-covariance and hence are independent, and both are independent of $C_i$.
Thus $D_i\sim N(\delta,s^2\Id)$, and the midpoints are independent of the contrasts with a law that does not involve $u$; the same holds jointly across the $N$ pairs, and at fixed $s$ the centered contrast residuals are likewise independent of $\bar D$ with a $u$-free law.
Since the endpoints and $(D_i,A_i)_{i\le N}$ determine each other, all directional information in the full pairs is contained in $X$ of \cref{eq:calibration}; the remaining observations can serve only as fitting randomness, which \cref{thm:frontier} already allows.
Together with the two identities, this is the decision problem \cref{eq:minimax}, which gives the lower bound; an offset leaves the loss unchanged and adds nonnegative cost.
The attaining orthogonal editor depends only on $\bar D/\|\bar D\|$ and independent completion, so it has the same cost and risk here, proving equality.
\end{proof}

\section{Proof of Theorem~\lref{thm:frontier}}
\label{app:frontier}
\subsection{Reduction to subspace estimation}
For \cref{eq:kernel}, an SVD identifies $M^\top(MM^\top)^+M$ as the projector onto $(\ker M)^\perp$, and in an orthonormal basis beginning with $\ker M$ each kernel vector contributes one to $\|M-\Id\|_F^2$ while the remaining contributions are nonnegative.
Hence any matrix with kernel $K$ is weakly dominated in cost, at identical loss for all $u$, by $\Id-\Pi_K$.
If $\dim K<k$, enlarging $K$ to dimension $k$ weakly decreases the loss while remaining within budget.
It is therefore enough to consider rank-$k$ orthogonal projectors $\Pi_K$ in
\begin{equation}
 L=1-u^\top\Pi_Ku.
\end{equation}
When $k=0$, every feasible matrix is invertible and $L=1$.
When $k=d$, the zero editor attains zero loss.
The remaining case is $1\le k<d$.

By Gaussian sufficiency, the likelihood of the contrasts factors into the density of $X$ and a law of the centered residuals that does not depend on $u$.
Every variable whose conditional law given $X$ does not involve $u$, such as the residuals, can be simulated from $X$ and an independent seed.
It therefore suffices to consider randomized rules that use the data only through $X$.

\subsection{Posterior eigenspaces under a spherical prior}
Put a uniform prior on $u\in\sphere$.
Conditional on $X=x\ne0$, let $w=x/\|x\|$ and $a=\sqrt\kappa\|x\|$.
The posterior density is
\begin{equation}
 p(u\mid x)\propto \exp(a w^\top u).
\end{equation}
Rotations fixing $w$ leave this density invariant, so
\begin{equation}
 B_x:=\E[uu^\top\mid x]
 =\beta(a)ww^\top+\frac{1-\beta(a)}{d-1}(\Id-ww^\top),
 \qquad \beta(a)=\E[(w^\top u)^2\mid x].
 \label{eq:posterior}
\end{equation}
To see that $\beta(a)\ge1/d$, let $T=w^\top U$ for uniform $U$ on the sphere.
Symmetry gives
\begin{equation}
 \beta(a)=\frac{\E[T^2\cosh(aT)]}{\E[\cosh(aT)]}.
\end{equation}
Both $T^2$ and $\cosh(aT)$ are nondecreasing functions of $|T|$.
For any scalar random variable $Y$ and nondecreasing $f,g$,
$2\Cov(f(Y),g(Y))=\E[(f(Y)-f(Y'))(g(Y)-g(Y'))]\ge0$ with $Y'$ an independent copy.
Thus $\beta(a)\ge\E T^2=1/d$.
A rule that deletes $K$ captures the posterior expected signal $\tr(\Pi_K B_x)$.
The matrix $B_x$ has its largest eigenvalue $\beta(a)$ on $w$ and a common smaller eigenvalue on $w^\perp$, so any $K$ that contains $w$ maximizes this signal, and the remaining $k-1$ directions in $w^\perp$ can be chosen freely.

\subsection{From the Bayes action to a minimax action}
Choose the extra $k-1$ dimensional subspace uniformly in $w^\perp$, conditionally independently given $w$.
This rule is rotationally equivariant in distribution.
For every orthogonal $O$, the observation under parameter $Ou$ has the distribution of $OX$ under $u$, and the selected kernel transforms as $OK$.
Since $L(\Id-\Pi_{OK},Ou)=L(\Id-\Pi_K,u)$, the frequentist risk is constant on the sphere.
Its common risk equals the uniform-prior Bayes risk, which lower-bounds the maximum risk of every rule, including nonequivariant rules.
The rule therefore attains the minimax risk.
For $k=1$ no fitting randomization is needed.
Random completion is chosen once when fitting, and the evaluator observes the resulting matrix.
On the probability-zero event $\bar D=0$, any fixed $k$-dimensional $K$ may be chosen.

\subsection{Evaluation by a noncentral-beta moment}
It remains to evaluate the risk of the attaining rule.
Rotate $u$ to $e_1$.
For $Z\sim N(0,\Id)$ define
\begin{equation}
 U=(\sqrt\kappa+Z_1)^2\sim\chi^2_1(\kappa),\qquad
 E_\perp=\sum_{j=2}^d Z_j^2\sim\chi^2_{d-1},\qquad U\perp E_\perp.
\end{equation}
Deleting $w$ leaves loss $E_\perp/(U+E_\perp)$.
Given $w$, uniform completion captures the fraction $(k-1)/(d-1)$ of the remaining squared direction, in expectation.
Its expected residual is consequently
\begin{equation}
 \frac{d-k}{d-1}\E\frac{E_\perp}{U+E_\perp}.
\end{equation}
Use $U\mid J\sim\chi^2_{1+2J}$ with $J\sim\Pois(\kappa/2)$.
Conditional on $J$, the independent chi-square variables give
\begin{equation}
 \frac{E_\perp}{U+E_\perp}\mid J\sim\Beta\!\left(\frac{d-1}2,\frac{1+2J}2\right),
 \qquad
 \E\left[\frac{E_\perp}{U+E_\perp}\middle|J\right]=\frac{d-1}{d+2J}.
\end{equation}
This proves \cref{eq:frontier}.
The scalar expectation $\E[1/(d+2J)]$ is also the orthogonal coefficient of the exact isotropic projected-normal second moment derived by \citet{herrera2025projected}.
The same calculation shows that, given $J$, the ratio and total $U+E_\perp$ are independent, which proves \cref{eq:joint-moments}.

\subsection{Bounds and limiting regimes}
An integral form useful for both analysis and numerical cross-checking is
\begin{equation}
 f_d(\kappa):=\E\frac1{d+2J}
 =\frac12\int_0^1 t^{d/2-1}\exp\!\left(-\frac\kappa2(1-t)\right)\,dt.
 \label{eq:integral}
\end{equation}
Jensen's inequality gives $f_d(\kappa)\ge1/(d+\kappa)$, the lower bound in \cref{eq:bounds}.
For the upper bound, let $w=X/\|X\|$ and recall that the cost-one editor has expected residual $(d-1)f_d(\kappa)=1-\E(u^\top w)^2$.
By the Cauchy--Schwarz inequality, $(\E|u^\top X|)^2=(\E[|u^\top w|\,\|X\|])^2\le\E[(u^\top w)^2]\,\E\|X\|^2$; since $\E|u^\top X|\ge|\E u^\top X|=\sqrt\kappa$ and $\E\|X\|^2=d+\kappa$, we get $\E(u^\top w)^2\ge\kappa/(d+\kappa)$, that is, $(d-1)f_d(\kappa)\le d/(d+\kappa)$.
Multiplying by $(d-k)/(d-1)$ proves the upper bound in \cref{eq:bounds}; only the mean and second moment of $X$ were used.
Two further bounds are sharper at the extremes.
Monotonicity of the denominator gives $f_d(\kappa)\le1/d$, and since $d\ge2$, replacing $t^{d/2-1}$ by one gives
$f_d(\kappa)\le(1-e^{-\kappa/2})/\kappa\le1/\kappa$.
They improve on \cref{eq:bounds} only when $\kappa<d/(d-1)$ or $\kappa>d(d-1)$.
The low-signal limit follows from $J\to0$ in probability and bounded convergence.
For $\kappa=cd$ and $d\to\infty$, $2J/d\to c$ in probability, so
$d f_d(cd)=\E[1/(1+2J/d)]\to1/(1+c)$.
For fixed $d$ and $\kappa\to\infty$, substitute $z=\kappa(1-t)/2$ in \cref{eq:integral} and apply dominated convergence to obtain $\kappa f_d(\kappa)\to1$.

\parhead{Affine maps and alternative budgets.}
For an affine editor $h\mapsto Mh+a$, the common offset leaves source separation unchanged.
Its cost on centered $V$ is $\|M-\Id\|_F^2+\|a\|^2$, so it cannot improve the frontier.
If the cost were normalized by $d$, the same result would hold with $k$ replaced by $\min\{d,\lfloor db\rfloor\}$.
A fractional budget $b\in(k,k+1)$ cannot help.
By \cref{eq:kernel} the loss depends only on the kernel, and any map with a $(k+1)$-dimensional kernel costs at least $k+1$, whether it is orthogonal or oblique.
The same argument applies to a covariance-weighted distortion such as LEACE's, with the weighted norm in place of the Frobenius norm.
The exact solution when the budget is imposed only in expectation is \cref{thm:average}, proved in \cref{app:average}.

\subsection{Subtraction with a known source label}
\begin{proposition}[Source-aware subtraction]
\label{prop:source-aware}
In the experiment \cref{eq:calibration}, let a correction subtract from one source an estimate $\hat\delta$ of the shift that may depend on the calibration sample, an independent seed, $\theta$ and $s$.
With $\rho(a)=\E[w^\top u\mid x]$ under the uniform prior of \cref{eq:posterior},
\begin{equation}
 \risk_d^*(1,\kappa)\;\le\;
 \inf_{\hat\delta}\sup_{u\in\sphere}\E_u\frac{\|\delta-\hat\delta\|^2}{\theta^2}
 \;=\;1-\E\,\rho(\sqrt\kappa\|X\|)^2
 \;\le\;\frac{d}{d+\kappa},
 \label{eq:source-aware}
\end{equation}
and the two ends differ by at most $1/(d+\kappa)$.
The upper bound is the risk of $\hat\delta=\frac{\kappa}{d+\kappa}\bar D$.
\end{proposition}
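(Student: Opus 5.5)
We treat source-aware subtraction as estimation of $\delta$ under normalized squared error and follow the route of \cref{thm:frontier}: a Bayes bound under the uniform prior on $u$, then equivariance to turn it into minimax.

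\textbf{Reduction.} Since $\theta$ and $s$ are known, $\kappa$ is known, and by Gaussian sufficiency (\cref{app:frontier}) we may restrict to randomized rules that use the data only through $X$. The loss $\|\delta-\hat\delta\|^2/\theta^2$ equals $\|u-\hat\delta/\theta\|^2$, so we estimate the unit vector $u$ under squared error. Randomization cannot help: by convexity, replacing $\hat\delta$ by its conditional mean given $X$ weakly lowers the risk.

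\textbf{Bayes value.} Put the uniform prior on $u$. The posterior is the von Mises--Fisher law $\propto\exp(a w^\top u)$ with $a=\sqrt\kappa\|X\|$. The Bayes estimate of $u$ is the posterior mean. By invariance under rotations fixing $w$, this mean is $\rho(a)w$ with $\rho(a)=\E[w^\top u\mid x]$. Its posterior risk is $\E[\|u\|^2\mid x]-\|\E[u\mid x]\|^2=1-\rho(a)^2$. The rule $\hat\delta=\theta\rho(\sqrt\kappa\|X\|)\,X/\|X\|$ is rotationally equivariant, since under $Ou$ the data are distributed as $OX$ and the estimate rotates with them. Its frequentist risk is therefore constant in $u$ and equals the Bayes risk $1-\E\rho(\sqrt\kappa\|X\|)^2$, where the expectation is over the marginal of $X$. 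By rotation invariance this marginal equals the law of $X$ at any fixed $u$. A Bayes risk lower-bounds every rule's maximum risk, so this gives the middle equality of \cref{eq:source-aware}.

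\textbf{The two bounds.} For the lower bound, Cauchy--Schwarz under the posterior gives $\rho(a)^2=(\E[w^\top u\mid x])^2\le\E[(w^\top u)^2\mid x]=\beta(a)$. Hence $1-\E\rho^2\ge1-\E\beta$. The quantity $1-\E\beta$ is the Bayes risk of deleting $w$, which is $\risk_d^*(1,\kappa)$ by \cref{thm:frontier} and \cref{eq:posterior}.

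For the upper bound, compute the risk of $\hat\delta=c\bar D$ with $c=\kappa/(d+\kappa)$. Write $\bar D=\delta+(s/\sqrt N)Z$. Then
\[
\E\|\delta-c\bar D\|^2/\theta^2=(1-c)^2+c^2 d/\kappa,
\]
and at this $c$ the right side equals $d/(d+\kappa)$ after a one-line simplification. The minimax value cannot exceed the risk of any particular rule.

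The gap between the ends is at most $d/(d+\kappa)-(d-1)/(d+\kappa)=1/(d+\kappa)$, using the Jensen lower bound from \cref{eq:bounds}.

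The main obstacle is making the Bayes-to-minimax step rigorous for arbitrary randomized, non-equivariant $\hat\delta$ that may use the residuals. This is handled exactly as in \cref{app:frontier}: reduce to $X$ plus an independent seed, then use $\sup_u\ge$ the uniform average, which in turn is at least the Bayes risk.
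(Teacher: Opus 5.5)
Your proposal is correct and follows the paper's proof essentially step for step: reduce to rules of $X$ by convexity, take the posterior mean $\rho(a)w$ as a constant-risk (equivariant) Bayes rule, use Jensen/Cauchy--Schwarz for $\rho^2\le\beta$, compute the risk at $c=\kappa/(d+\kappa)$ directly, and close the gap with the lower bound in \cref{eq:bounds}. One small slip: the prior-predictive law of $X$ is not the law of $X$ at a fixed $u$. Only the law of $\|X\|$ coincides, which suffices because $\rho(\sqrt\kappa\|X\|)$ depends on $X$ only through its norm.
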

\begin{proof}
Subtraction leaves the within-source covariance unchanged, so the share of squared separation left is $\|\delta-\hat\delta\|^2/\theta^2=\|u-v\|^2$ with $v=\hat\delta/\theta$.
The loss is convex in $v$ and $X$ is sufficient for $u$ at known $s$, so we may take $v$ to be a function of $X$.
Under the uniform prior, rotations fixing $w$ leave the posterior density invariant, so $\E[u\mid x]=\rho(a)w$, and the posterior expected loss of any $v$ is at least $1-\rho(a)^2$, with equality at $v=\rho(a)w$.
This rule is equivariant under rotations, so its risk does not depend on $u$ and equals its Bayes risk $1-\E\rho(\sqrt\kappa\|X\|)^2$; a Bayes rule with constant risk is minimax, which gives the equality.
Jensen's inequality gives $\rho(a)^2\le\beta(a)$.
Since $1-\E\beta(\sqrt\kappa\|X\|)=\risk_d^*(1,\kappa)$ is the Bayes risk of deleting $w$ in \cref{thm:frontier}, this proves the lower bound.
For the upper bound, $\bar D-\delta\sim N(0,s^2\Id/N)$ gives $\E\|\delta-c\bar D\|^2/\theta^2=(1-c)^2+c^2d/\kappa$, which equals $d/(d+\kappa)$ at $c=\kappa/(d+\kappa)$.
Finally, \cref{eq:bounds} gives $\risk_d^*(1,\kappa)\ge(d-1)/(d+\kappa)$.
\end{proof}

\subsection{Unknown noise scale}
\label{app:unknown-frontier}
\begin{corollary}[The hard-budget optimum with unknown noise scale]
\label{thm:unknown}
For $N\ge2$, fix $\kappa>0$ and parameterize $\delta=s\sqrt{\kappa/N}\,u$.
Even with $s>0$ unknown,
\begin{equation}
 \inf_{\widehat M:C(\widehat M)\le b}
 \sup_{s>0,\,u\in\sphere}\E_{s,u}L(\widehat M,u)
 =\risk_d^*(b,\kappa).
 \label{eq:adaptive}
\end{equation}
The editor in \cref{thm:frontier} attains this risk without knowing the shift strength or the noise scale.
\end{corollary}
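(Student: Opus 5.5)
We prove the two inequalities separately. The lower bound comes from the known-scale problem. The upper bound comes from scale invariance of the attaining editor.

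\emph{Lower bound.} Fix any $s_0>0$. For any rule $\widehat M$ that is feasible in the unknown-scale problem,
\[
\sup_{s>0,\,u}\E_{s,u}L(\widehat M,u)\ \ge\ \sup_u\E_{s_0,u}L(\widehat M,u).
\]
At the fixed scale $s_0$ we have $\delta=s_0\sqrt{\kappa/N}\,u$, so the data follow the experiment \cref{eq:calibration} with known noise level $s_0$ and signal-to-noise ratio exactly $\kappa$. The rule $\widehat M$ sees the contrasts, which form a legitimate (if inefficient) rule for that known-scale problem. It still satisfies $C(\widehat M)\le b$ pointwise. By \cref{thm:frontier}, the right-hand side is therefore at least $\risk_d^*(b,\kappa)$. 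This holds for every feasible $\widehat M$, so the infimum in \cref{eq:adaptive} is at least $\risk_d^*(b,\kappa)$.

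\emph{Upper bound.} Take the editor of \cref{thm:frontier}, $M=\Id-\Pi_K$, with $K=\operatorname{span}\{w\}\oplus K_\perp$ and $w=\bar D/\|\bar D\|$. It uses only $w$ and an independent seed, so it needs neither $s$ nor $\theta$, and it costs exactly $k$ on every fit. Under parameter $(s,u)$ we have $\bar D=(s/\sqrt N)X$ with $X\sim N(\sqrt\kappa\,u,\Id)$. Hence $w=X/\|X\|$, and its law depends on $(s,u)$ only through $u$ and the fixed $\kappa$. The risk of this editor at $(s,u)$ therefore equals its known-scale risk at the same $\kappa$. \Cref{thm:frontier} shows that this risk is constant in $u$ and equal to $\risk_d^*(b,\kappa)$, so the supremum over $(s,u)$ is also $\risk_d^*(b,\kappa)$.

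The two bounds coincide, which gives equality in \cref{eq:adaptive}. The cases $k=0$ and $k=d$ are immediate: for $k=0$ every feasible map is invertible and $L=1$, and for $k=d$ the zero editor has zero loss regardless of $s$.

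\emph{Main obstacle.} The only delicate step is checking that the lower-bound reduction is legitimate. A rule in the unknown-scale problem may use the centered residuals of the contrasts, for instance to estimate $s$; the condition $N\ge2$ makes such residuals available. At fixed $s_0$, however, those residuals are independent of $X$ and have a law free of $u$. They are thus exactly the extra observations allowed in the known-scale problem (\cref{app:frontier}), so \cref{thm:frontier} applies to $\widehat M$ without modification.
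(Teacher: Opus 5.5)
Your proposal is correct and follows essentially the same argument as the paper. The lower bound restricts to a fixed scale $s_0$ and inherits the known-scale bound of \cref{thm:frontier}. The upper bound uses the fact that the attaining editor depends on $\bar D$ only through its direction, whose law at every $(s,u)$ is that of $X/\|X\|$ with $X\sim N(\sqrt\kappa\,u,\Id)$. Your remark that the centered residuals are admissible extra observations at fixed $s_0$ simply spells out what the paper compresses into ``reveal $s_0$ to the procedure.''
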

\begin{proof}
Restrict the parameter space to any fixed $s=s_0$ and reveal $s_0$ to the procedure.
This can only lower the minimax risk, and the known-scale lower bound is \cref{eq:frontier}.
Conversely, the editor of \cref{thm:frontier} depends on the sample mean only through its direction, and its random completion uses no unknown parameter.
At each $s>0$, $\sqrt N\bar D/s\sim N(\sqrt\kappa u,\Id)$, so its risk equals \cref{eq:frontier} at every $(s,u)$.
This proves equality.
\end{proof}
The statement holds $\kappa$ fixed, so $\theta$ and $s$ vary together: with $\theta$ and $N$ fixed, more noise lowers $\kappa$ and raises the risk.

\section{Kernels of MP, SAL and LEACE (Corollary~\lref{cor:empirical-erasers})}
\label{app:empirical-erasers}
This section proves the method-specific conditions in \cref{cor:empirical-erasers}.
The projection and erasure formulas are existing constructions, and \cref{thm:frontier} supplies their finite-calibration population risk in the stated Gaussian experiment.
Computationally, MP and centered rank-one SAL need only the mean difference, which takes $O(Nd)$ time and $O(d)$ memory.
LEACE also needs the pooled covariance and its whitening, which take $O(Nd^2+d^3)$ time and $O(d^2)$ memory.
The action rules of \cref{sec:certification} need only $\|\bar D\|^2$ and a scalar threshold.

\subsection{Balanced paired moments}
Let $H_i^+,H_i^-$ be the $N$ pairs from \cref{app:experiment}, with source labels $+1,-1$ and differences $D_i=H_i^+-H_i^-$.
Write $\widehat\mu=(2N)^{-1}\sum_i(H_i^++H_i^-)$ and define the centered pooled moments
\begin{align}
 \widehat\Sigma
 &=\frac1{2N}\sum_{i=1}^N\sum_{z\in\{-1,+1\}}
 (H_i^z-\widehat\mu)(H_i^z-\widehat\mu)^\top,\\
 v&=\frac1{2N}\sum_{i=1}^N\sum_{z\in\{-1,+1\}}
 (H_i^z-\widehat\mu)z=\frac{\bar D}{2}.
 \label{eq:empirical-moments}
\end{align}
Both source endpoints are observed for each calibration pair, so this experiment has $N$ pairs and $2N$ labeled representations.
The full-pair construction keeps the within-source fresh covariance equal to $\Id$ and makes the midpoint ancillary for the direction at fixed noise scale.
Consequently these fitted editors are also admissible with respect to the information restriction in \cref{sec:model}, although their cost must still be checked separately.
The residual identity itself only requires the stated kernel and the distribution of $\bar D$, regardless of how other fitted quantities are obtained.

\subsection{MP and centered rank-one SAL}
For MP \citep{haghighatkhah2022mp}, let $w=\bar D/\|\bar D\|$ and set $M_{\rm MP}=\Id-ww^\top$.
Its kernel is $\operatorname{span}\{\bar D\}$.
For SAL \citep{shao2023sal}, use the centered representation--label cross-covariance and remove its nonzero left singular direction.
With a scalar binary label this cross-covariance is $v$.
With centered two-class one-hot labels its two columns are opposite multiples of $v$.
In either case it has rank one almost surely, and the retained map is again $\Id-ww^\top$.
The statement concerns centered binary SAL that removes exactly one direction.
Other variants are covered by \cref{cor:subspace-excess}, which applies to any fitted kernel.
The scalar-binary equivalence of MP and SAL is also identified in Appendix D of \citet{belrose2023leace}.
If either map is applied about the fitted pooled mean, its offset is $(\Id-M)\widehat\mu$.
The offset leaves the residual unchanged but adds to the distortion.

\subsection{Binary LEACE, including singular empirical covariance}
Let $Y$ be the matrix whose columns are the centered endpoints.
Then $\widehat\Sigma=YY^\top/(2N)$ and $v=Yz/(2N)$ for the vector of source labels $z$.
Therefore $v\in\operatorname{range}(Y)=\operatorname{range}(\widehat\Sigma)$.
Since $v\ne0$ almost surely, $v^\top\widehat\Sigma^+v>0$ even if $\widehat\Sigma$ is singular.
The exact binary LEACE map can be written
\begin{equation}
 q=\frac{\widehat\Sigma^+v}{v^\top\widehat\Sigma^+v},
 \qquad M_{\rm LEACE}=\Id-vq^\top,
 \qquad a_{\rm LEACE}=vq^\top\widehat\mu.
 \label{eq:empirical-leace}
\end{equation}
This is the usual whiten--project--unwhiten expression \citep{belrose2023leace}: with $W=(\widehat\Sigma^+)^{1/2}$, its removed map is
$\widehat\Sigma^{1/2}(Wv)(Wv)^\top W/\|Wv\|^2=vq^\top$, using $v\in\operatorname{range}(\widehat\Sigma)$.
As $q^\top v=1$, we have $M_{\rm LEACE}v=0$.
Conversely, $M_{\rm LEACE}x=0$ implies $x=v(q^\top x)$, so its kernel is exactly $\operatorname{span}\{v\}$.
The covariance and offset can vary across fits without changing this kernel or the residual in \cref{eq:empirical-eraser-risk}.

The linear and affine reference costs are, respectively,
\begin{equation}
 C(M_{\rm LEACE})=\|v\|^2\|q\|^2\ge1,
 \qquad
 C(M_{\rm LEACE},a_{\rm LEACE})
 =\|v\|^2\bigl(\|q\|^2+(q^\top\widehat\mu)^2\bigr).
 \label{eq:empirical-leace-cost}
\end{equation}
The inequality follows from $q^\top v=1$ and Cauchy--Schwarz.
The empirical LEACE distortion objective uses its fitted data moments and need not coincide with the population reference cost \cref{eq:cost}.
Neither an observed cost on one fit nor its empirical average establishes the deterministic budget required by \cref{thm:frontier}.

\subsection{Residual law and certificates}
By \cref{cor:empirical-erasers} these erasers have the same residual on each calibration sample, and the calculation in \cref{app:frontier} gives its law, $L\mid J\sim\Beta(\tfrac{d-1}2,\tfrac{1+2J}2)$ with $J\sim\Pois(\kappa/2)$.
Hence a certificate for the mean-direction projection computed from the same contrasts (\cref{app:certificates,app:envelope,app:unknown}) controls the identical residual of these erasers under \cref{eq:joint}.
Their method-specific distortion is not covered by the hard budget.

\subsection{Proof of Corollary~\lref{cor:subspace-excess}}
\label{app:subspace-excess-proof}
Let $\mathcal F$ include all calibration observations and the fitting seed.
By the factorization of the experiment, the conditional law of $U$ given $\mathcal F$ equals that given $X$, whose second-moment matrix is $B_X=\lambda_\kappa(R)\Id+g_\kappa(R)ww^\top$ by \cref{eq:posterior}.
Since $\tr(\Pi_K)=k$ and $\tr(B_X)=1$,
\begin{equation}
 \overline\E[L(\widehat M,U)\mid\mathcal F]
 =1-\tr(\Pi_K B_X)
 =(d-k)\lambda_\kappa(R)+g_\kappa(R)\rho_K.
 \label{eq:subspace-conditional-risk}
\end{equation}
The posterior ordering in \cref{eq:posterior} gives $g_\kappa\ge0$, and the frontier calculation gives
$\overline\E\lambda_\kappa(R)=\E_{J\sim\Pois(\kappa/2)}[1/(d+2J)]$; taking expectations and using \cref{eq:frontier} proves \cref{eq:subspace-excess}.
The full-pair extension has the same posterior by the midpoint--contrast factorization in \cref{app:real-content-proof}.
\qed

\section{Proof of Corollary~\lref{cor:gap}}
\label{app:detection}
\subsection{The radial test is minimax at a fixed radius}
Let $\pi$ be uniform on the sphere and $P_\pi=\int N(\sqrt\kappa u,\Id)\,\pi(du)$ its Gaussian mixture.
Relative to $P_0=N(0,\Id)$, its likelihood ratio is
\begin{equation}
 \Lambda(x)=e^{-\kappa/2}\E_U e^{\sqrt\kappa x^\top U}.
\end{equation}
By symmetry the last expectation equals $\E\cosh(\sqrt\kappa r U_1)$ with $r=\|x\|$.
It therefore depends only on $r$ and increases in $r\ge0$, because $\cosh$ is even and increasing on $[0,\infty)$.
The Neyman--Pearson test for this mixture rejects at large $\|x\|^2$.
It maximizes average power under the uniform mixture and has equal power at every direction.
Average power upper-bounds minimum power for every competing test, proving the minimax claim.

\subsection{Lower and upper detection scales}
For independent uniform $U,U'$,
\begin{equation}
 1+\chi^2(P_\pi\|P_0)=\E_{U,U'}e^{\kappa U^\top U'}
 \le\exp(\kappa^2/(2d)).
 \label{eq:chi-detection}
\end{equation}
For completeness, $U^\top U'$ has the law of the first coordinate of a uniform sphere point and
\begin{equation}
 \E U_1^{2m}=\frac{(2m-1)!!}{d(d+2)\cdots(d+2m-2)}
 \le\frac{(2m-1)!!}{d^m};
\end{equation}
expanding the exponential proves \cref{eq:chi-detection}.
If $\kappa=o(\sqrt d)$, the total variation distance tends to zero by the chi-square bound.
No level-$\alpha$ test can then have minimum power above $\alpha+o(1)$.

Under the alternative, $\|X\|^2$ has mean $d+\kappa$ and variance $2d+4\kappa$.
For fixed $\alpha$, its null critical value is $d+O(\sqrt d)$.
If $\kappa/\sqrt d\to\infty$, the excess mean over this threshold dominates the alternative standard deviation, so Chebyshev's inequality gives power tending to one.
Finally, for fixed $k\ge1$ and $\kappa=o(d)$,
$1\ge\risk_d^*\ge(d-k)/(d+\kappa)\to1$.
For $k=0$ the risk equals one at every $\kappa$.
Rearranging the same lower bound gives the deletion requirement in \cref{cor:gap}.

\section{Proof of Theorem~\lref{thm:selective}}
\label{app:selective}
\begin{corollary}[Abstention cannot close the gap at a fixed frequency]
\label{cor:selective-gap}
For every $p\in(0,1)$,
\begin{equation}
 \risk_d^{\mathrm{sel}}(p,\kappa)
 \ge\max\left\{0,\ 1-\frac{1-\risk_d^*(1,\kappa)}p\right\}.
 \label{eq:selective-bound}
\end{equation}
If $\kappa=o(d)$ and $p$ is bounded below by a positive constant, the best conditional residual tends to one.
\end{corollary}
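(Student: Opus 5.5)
The plan is to compare any selective rule with a full-coverage cost-one rule that acts on every fit. The comparison rule deletes the rule's direction $w$ when the rule acts and deletes some other fixed direction otherwise. Fix a selective rule with action event $A$ and deleted direction $w$ on $A$, and write $p_u=\Prob_u(A)\ge p$ for every $u$. On $A$ the residual is $L=1-(u^\top w)^2$, and on $A^c$ the identity leaves $L=1$. The goal is a lower bound on the conditional residual
\[
\E_u[L\mid A]=1-\frac{\E_u[(u^\top w)^2\ind_A]}{p_u}.
\]
This reduces to an upper bound on the captured signal $\E_u[(u^\top w)^2\ind_A]$ that holds at some direction $u$.

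The key step is to embed the rule in a nonselective cost-one editor. Define $\tilde w=w$ on $A$, and on $A^c$ let $\tilde w$ be any unit vector, for instance $X/\|X\|$. The map $\Id-\tilde w\tilde w^\top$ then costs exactly one on every fit, so by \cref{thm:frontier} with $b=1$ it satisfies $\sup_u\E_u[1-(u^\top\tilde w)^2]\ge\risk_d^*(1,\kappa)$. I would work with the uniform-prior average rather than the supremum, because the frontier is a Bayes risk. Averaging over $U$ uniform gives
\[
\overline\E[(U^\top\tilde w)^2]\le 1-\risk_d^*(1,\kappa),
\]
since the Bayes action attains the Bayes risk $\risk_d^*(1,\kappa)$ and no rule beats it in average. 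Because $(U^\top\tilde w)^2\ge0$ on $A^c$, this yields $\overline\E[(U^\top w)^2\ind_A]\le 1-\risk_d^*(1,\kappa)$.

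To pass back to a worst-case statement, I would use that $\sup_u\E_u[L\mid A]\ge\E_u[L\mid A]$ for every $u$. Rearranging the conditional identity, $\E_u[(u^\top w)^2\ind_A]=p_u(1-\E_u[L\mid A])\ge p\,(1-\risk^{\mathrm{sel}})$ holds for all $u$, where $\risk^{\mathrm{sel}}$ denotes the rule's worst conditional residual. Averaging this over $u$ and combining with the Bayes bound gives $p(1-\risk^{\mathrm{sel}})\le 1-\risk_d^*(1,\kappa)$, and rearranging is \cref{eq:selective-bound}. The maximum with zero is trivial, since $L\ge0$.

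The main obstacle is measurability and the information restriction: the rule must be allowed a seed, and $\tilde w$ must be a legitimate editor under \cref{eq:minimax}. Both are satisfied because $\tilde w$ is a measurable function of the same data and seed. The asymptotic claim then follows from \cref{cor:gap}, which gives $\risk_d^*(1,\kappa)\to1$ when $\kappa=o(d)$. With $p\ge p_0>0$, the lower bound $1-(1-\risk_d^*)/p_0$ tends to one, and the conditional residual is at most one, so it tends to one.
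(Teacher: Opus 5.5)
Your proof is correct. It rests on the same idea as the paper's: a selective rule, padded on abstention, is an editor of cost at most one, so \cref{thm:frontier} bounds its unconditional risk, and rearranging gives \cref{eq:selective-bound}. The two arguments differ in which rule receives the padding.

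The paper pads only the attaining rule of \cref{thm:selective}: delete $X/\|X\|$ on $\{\|X\|^2\ge\tau_p\}$ and return the identity otherwise. By equivariance, that rule acts with probability exactly $p$ and has conditional residual $\risk_d^{\mathrm{sel}}(p,\kappa)$ at every $u$. Its constant unconditional risk is therefore $1-p+p\,\risk_d^{\mathrm{sel}}(p,\kappa)\ge\risk_d^*(1,\kappa)$, and the proof is one line. The cost is that the corollary then depends on the attainment result.

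You bound every feasible rule directly. You absorb the $u$-dependence of $p_u$ and of the conditional residual through $p_u\bigl(1-\E_u[L\mid A]\bigr)\ge p\,(1-r_A)$, with $r_A:=\sup_u\E_u[L\mid A]\le 1$. Your argument therefore needs only \cref{thm:frontier}, and it is valid without knowing that an optimal selective rule exists.

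Two small simplifications are available. First, the detour through the uniform-prior Bayes risk is unnecessary. Your product inequality holds at every $u$, so it holds at a near-worst direction for $\Id-\tilde w\tilde w^\top$, and the minimax statement of \cref{thm:frontier} alone closes the argument. Second, returning the identity on $A^c$ would serve as well as an arbitrary rank-one fallback, since the hard budget only requires cost at most one.
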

The attaining selective rule, returning the identity on abstention, has unconditional risk $1-p+p\risk_d^{\mathrm{sel}}(p,\kappa)\ge\risk_d^*(1,\kappa)$, which rearranges to \cref{eq:selective-bound}.
Thus the detection--repair gap persists when a rule may choose when to act but must keep its action frequency bounded away from zero as dimension grows.
The frontier extends continuously to $p=1$ by $\risk_d^{\mathrm{sel}}(1,\kappa)=\risk_d^*(1,\kappa)$.

Keep $\kappa>0$ fixed and use the uniform prior $\pi$ over $u$.
By \cref{eq:posterior}, the minimum posterior residual among rank-one deletions is
$h_\kappa(r)=1-\beta(\sqrt\kappa r)$ at $r=\|X\|$, attained by deleting $X/\|X\|$.
It is strictly decreasing for $r>0$.
To prove this, set $Y=|U_1|$ for a uniform sphere point and let $\E_a$ denote expectation under the density tilted by $\cosh(aY)$.
Differentiation under the bounded integral gives
\begin{equation}
 \beta'(a)=\Cov_a\{Y^2,Y\tanh(aY)\}>0\qquad(a>0).
\end{equation}
Both $Y^2$ and $Y\tanh(aY)$ are strictly increasing in $Y$.
Since $Y$ is nondegenerate, the covariance argument of \cref{app:frontier} shows that their covariance is strictly positive.

For any feasible rule, set $q(u)=\Prob_u(A)$, $n(u)=\E_u[\ind_A L]$, and
$q=\int q(u)\,d\pi(u)\ge p$.
Since $q(u)\ge p>0$, the worst conditional residual obeys
\begin{equation}
 \sup_u\frac{n(u)}{q(u)}
 \ge\frac{\int n(u)\,d\pi(u)}{\int q(u)\,d\pi(u)}
 \ge\frac{\E_\pi[\ind_A h_\kappa(\|X\|)]}{q}.
 \label{eq:selective-average}
\end{equation}
The second inequality conditions on the fit, including its random seed: none of those extra variables changes the posterior of $u$ given $X$ under the experiment's factorization assumption.
The selected direction therefore cannot have posterior residual below $h_\kappa$.

For a fixed average action probability $q$, the last numerator is minimized by taking the largest norms.
For an explicit argument, let $B_q=\ind\{\|X\|\ge r_q\}$ have probability $q$ under the prior predictive law, and write $h_q=h_\kappa(r_q)$.
Pointwise,
\begin{equation}
 (\ind_A-B_q)\{h_\kappa(\|X\|)-h_q\}\ge0.
\end{equation}
Its expectation and $\E(\ind_A-B_q)=0$ imply
$\E[\ind_A h_\kappa]\ge\E[B_qh_\kappa]$.
This remains valid for randomized actions by conditioning on $X$.
The tail average $m(q)=\E[B_qh_\kappa]/q$ is nondecreasing in $q$, since enlarging the accepted tail adds only larger values of $h_\kappa$.
Therefore \cref{eq:selective-average} is at least $m(p)$.

The mean-direction rule with $A=\{\|X\|^2\ge\tau_p\}$ is rotationally equivariant, so it has probability $p$ and the same conditional residual at every $u$.
It attains the posterior bound and the tail rearrangement bound, hence attains $m(p)$.
The ratio of the first-moment and probability identities in \cref{eq:joint-moments} gives \cref{eq:selective-frontier}.
This proves \cref{thm:selective}.
\Cref{cor:selective-gap} follows from applying \cref{thm:frontier} to this rule including its identity action on abstention.

As $p\uparrow1$, dominated convergence recovers the ungated rank-one risk.
As $p\downarrow0$, $r_p\to\infty$ and the posterior density concentrates on the mean direction, so $h_\kappa(r_p)\to0$.
Since $0\le m(p)\le h_\kappa(r_p)$, the selective frontier tends to zero at fixed $d,\kappa>0$.
The latter limit permits arbitrarily rare actions and does not contradict the fixed-frequency asymptotic lower bound.

\section{The average-budget frontier}
\label{app:average}
An average budget $\sup_u\E_u C(\widehat M)\le b$ lets the distortion depend on how informative the calibration fit is.
At known scale and oracle-known $\kappa$, define
\begin{equation}
 \risk_d^{\rm av}(b,\kappa)
 =\inf_{\widehat M:\,\sup_u\E_u C(\widehat M)\le b}
   \sup_u\E_u L(\widehat M,u).
 \label{eq:average-definition}
\end{equation}
The expectation in the constraint is over calibration and fitting randomness.

\begin{theorem}[Exact average-budget frontier]
\label{thm:average}
For known scale, fixed $\kappa>0$, and $0<b<d$,
\begin{equation}
 \risk_d^{\rm av}(b,\kappa)=
 \begin{cases}
  1-b+b\risk_d^{\rm sel}(b,\kappa),&0<b<1,\\
  \risk_d^*(1,\kappa),&b=1,\\
  q\risk_d^{\rm sel}(q,\kappa),\quad q=(d-b)/(d-1),&1<b<d.
 \end{cases}
 \label{eq:average-frontier}
\end{equation}
The endpoint risks are one at $b=0$ and zero at $b\ge d$.
For $b<1$, delete the mean direction on the largest-norm tail of probability $b$ and otherwise return the identity.
For $b>1$, delete the entire representation on the smallest-norm tail of probability $(b-1)/(d-1)$, and otherwise delete only the mean direction.
These rules have expected cost exactly $b$ at every $u$.
\end{theorem}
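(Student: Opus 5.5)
The plan is to reduce the average-budget problem, as in \cref{app:frontier}, to a Bayes problem under the uniform prior on $u$ in which the decision depends on the fit only through $X$. Write $r=\|X\|$ and $h_\kappa(r)=1-\beta(\sqrt\kappa r)$. By \cref{lem:kernel}, a fit whose kernel has dimension $j$ costs at least $j$, and it cannot do better than an orthogonal projector deleting $j$ directions. By \cref{eq:posterior}, the best posterior residual at dimension $j$ is $\ell_j(r):=(d-j)\,h_\kappa(r)/(d-1)$ for $1\le j\le d$ (delete $w$ plus $j-1$ others), with $\ell_0=1$.

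The first step is the lower bound. For any feasible rule, let $\pi_j(x)$ be the conditional probability of deleting $j$ directions given $X=x$. Averaging both the cost constraint and the risk over the prior gives a linear program over these probabilities:
\begin{itemize}
\item minimize $\E\sum_j\pi_j(X)\ell_j(\|X\|)$;
\item subject to $\E\sum_j j\,\pi_j(X)\le b$.
\end{itemize}
Use Lagrangian duality pointwise in $x$. For each multiplier $\lambda\ge0$, minimize $\ell_j(r)+\lambda j$ over $j$. Since $\ell_j$ is affine in $j$ on $j\ge1$, with slope $-h_\kappa(r)/(d-1)$, while the first step from $j=0$ to $j=1$ gains $\beta(\sqrt\kappa r)$, which is at least $1/d$, the pointwise optimum is always one of $j\in\{0,1,d\}$. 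Which one is chosen depends on how $h_\kappa(r)$ compares with $\lambda$.

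The next step uses the strict monotonicity of $h_\kappa$ proved in \cref{app:selective}. This gives threshold structure:
\begin{itemize}
\item for $b<1$, the optimal choice is $j=1$ on the largest-norm tail and $j=0$ elsewhere;
\item for $b>1$, it is $j=d$ on the smallest-norm tail (where $h_\kappa$ is largest, so full deletion gains most per unit cost) and $j=1$ elsewhere.
\end{itemize}
I would make this rigorous without duality by the pointwise rearrangement argument of \cref{app:selective}. First, any mass on intermediate $j\in(1,d)$ can be replaced by a mixture of $j=1$ and $j=d$ with the same expected cost, without increasing the risk, because $\ell_j$ is linear there. Similarly, for $b<1$ a mixture of $0$ and $1$ dominates. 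Then an exchange inequality of the form $(\ind_A-B)\{h_\kappa(\|X\|)-h_\star\}\le0$ or $\ge0$ shows that tails are optimal.

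The final step is to evaluate the attaining rules and show they are minimax. Both rules are rotationally equivariant, so their cost and risk are constant in $u$, and their Bayes risk matches the frequentist risk exactly, as in \cref{app:frontier}. The cost is exactly $b$ by construction. For $b<1$ the risk is $1-b+b\,\risk_d^{\rm sel}(b,\kappa)$, using \cref{thm:selective}. For $b>1$, the rule acts with $j=1$ on the upper tail of probability $q=(d-b)/(d-1)$ and deletes everything on the rest. That rest contributes zero residual, so the risk is $q\,\risk_d^{\rm sel}(q,\kappa)$. The case $b=1$ is the continuous endpoint of both formulas. I must also confirm that the cost, $(1-q)d+q=b$, is met.

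I expect the main obstacle to be the reduction to $j\in\{0,1,d\}$ with threshold structure, because that is where the cost and the loss have to be combined. The cost gap $\dim\ker$ versus the Frobenius cost is handled by \cref{lem:kernel}. The delicate part is the concavity kink at $j=1$ together with the direction of the tail: the deletions go on the high-norm tail when $b<1$ but the full deletion goes on the low-norm tail when $b>1$. I would handle this with two separate exchange arguments rather than a single duality statement.
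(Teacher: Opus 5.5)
Your proposal is correct and is essentially the paper's proof. Both use a Bayes reduction under the uniform prior, a Lagrangian relaxation of the prior-averaged cost built on the two eigenvalues $\beta_r>1/d>\gamma_r$ of $B_X$ (your kink of $\ell_j$ at $j=1$ is a convex one and expresses the same fact), threshold rules from the monotonicity of $\beta$, and rotational equivariance for attainment. The paper writes the pointwise bound as $\tr(\Pi_{\ker M}(B_X-\lambda\Id))\le\tr(B_X-\lambda\Id)_+$ and takes $\lambda=\beta_{r_b}$, $\lambda=\gamma_{r_q}$, or $\lambda=1/d$ at $b=1$.

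If you follow your exchange route instead of duality, two steps need more than you state. Ruling out $j=d$ when $b<1$, and $j=0$ when $b>1$, moves cost between different fits, so it needs $\gamma_r<1/d<\beta_{r'}$ for all $r,r'$ rather than the pointwise linearity you use for $1<j<d$. The case $b=1$ also needs its own lower bound, or a proof that $\risk_d^{\rm av}$ is continuous in $b$ (it is convex in $b$ by mixing rules), because agreement of the two formulas at $b=1$ proves nothing by itself. The paper's explicit multipliers settle both points at once.
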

The attaining rules use the known scale and oracle knowledge of $\kappa$.

\subsection{Proof and the optimal allocation}
Let $B_X=\E_\pi[uu^\top\mid X]$ under the uniform-sphere prior.
For any fitted $M$, the factorization assumption gives $\E_\pi[1-L(M,u)\mid X,M]=\tr(\Pi_{\ker M}B_X)$, and the cost of $M$ is at least $\dim\ker M$.
For every $\lambda\ge0$, therefore,
\begin{equation}
 \tr(\Pi_{\ker M}B_X)-\lambda C(M)
 \le\tr(\Pi_{\ker M}(B_X-\lambda\Id))
 \le\tr(B_X-\lambda\Id)_+,
 \label{eq:average-dual}
\end{equation}
where $(\cdot)_+$ replaces negative eigenvalues by zero.
Integrating the expected-cost constraint against $\pi$ gives $\E_\pi C(M)\le b$, so every feasible rule satisfies
\begin{equation}
 \sup_u\E_u L(M,u)
 \ge1-\lambda b-\E_\pi\tr(B_X-\lambda\Id)_+.
 \label{eq:average-lower}
\end{equation}
This lower bound applies directly to all matrices and all fitting randomization.

Write $r=\|X\|$, $\beta_r=\beta(\sqrt\kappa r)$, and
$\gamma_r=(1-\beta_r)/(d-1)$.
The eigenvalues of $B_X$ are $\beta_r$ on the mean direction and $\gamma_r$ with multiplicity $d-1$ on its orthogonal complement.
For $r>0$ and $\kappa>0$, $\beta_r>1/d>\gamma_r$; $\beta_r$ strictly increases from $1/d$ to one while $\gamma_r$ strictly decreases from $1/d$ to zero.
The distribution of $r$ is continuous and does not depend on $u$.

If $0<b<1$, choose $r_b$ so that $\Prob(r\ge r_b)=b$ and take
$\lambda=\beta_{r_b}>1/d$.
The positive eigenspace of $B_X-\lambda\Id$ is empty below $r_b$ and consists only of the mean direction above $r_b$.
Thus the stated identity-or-rank-one rule attains equality in \cref{eq:average-dual} and has expected cost $b$.
By rotational equivariance its cost and risk are constant over $u$, attaining the lower bound \cref{eq:average-lower}.
Its risk is $1-b+b\risk_d^{\rm sel}(b,\kappa)$.

If $1<b<d$, set $q=(d-b)/(d-1)$ and choose $r_q$ with $\Prob(r\ge r_q)=q$.
Take $\lambda=\gamma_{r_q}<1/d$.
The mean direction always has a positive coefficient, while all $d-1$ remaining directions have positive coefficients exactly when $r<r_q$.
The optimal kernel is therefore the entire space below $r_q$ and the mean direction above it.
Expected cost is $d(1-q)+q=b$, and the risk is
$\E[L_1\ind\{r\ge r_q\}]=q\risk_d^{\rm sel}(q,\kappa)$.
Again both are constant over $u$, and equality in the lower bound proves optimality.
At $b=1$, choose $\lambda=1/d$ and always delete only the mean direction.
At $b=0$ only the identity has zero cost, so the risk is one.
At $b\ge d$ the zero matrix is affordable and removes the whole shift.

The optimum therefore differs from mixing adjacent hard-budget ranks, which would randomize the deletion rank independently of the fit.
Above cost one it spends the additional deletions on the least informative, low-norm fits (\cref{fig:average}).
At $\kappa\downarrow0$ the posterior is isotropic and the frontier tends to $1-b/d$, the value of random deletion.

\section{The confidence-ball rule and exact action laws}
\label{app:certificates}
\begin{theorem}[Confidence-ball action rule]
\label{thm:certificate}
Let $\alpha,\varepsilon\in(0,1)$, $c_\alpha=\chi^2_{d,1-\alpha}$, and act exactly when
\begin{equation}
 A=\left\{\|X\|^2\ge\frac{c_\alpha}{\varepsilon}\right\}.
 \label{eq:gate}
\end{equation}
The mean-direction editor satisfies \cref{eq:joint} for every nonzero $\delta$.
At $\delta=0$, its probability of taking action is at most $\alpha$.
For nonzero $\delta$, its exact action probability is
$\SF_{\chi^2_d(\kappa)}(c_\alpha/\varepsilon)$.
\end{theorem}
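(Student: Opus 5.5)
The plan is to build a confidence ball around $X$ that contains the true mean $\sqrt\kappa\,u$ with probability at least $1-\alpha$. On that event, a large $\|X\|$ forces a small angle between $w=X/\|X\|$ and $u$. Set $Z=X-\sqrt\kappa u\sim N(0,\Id)$, so that $\Prob\{\|Z\|^2>c_\alpha\}=\alpha$ exactly. By \cref{lem:kernel}, the residual of $\Id-ww^\top$ is
\[
L=1-(u^\top w)^2=\sin^2\angle(u,w).
\]
Equivalently, $L$ is the squared distance from $w$ to the line $\operatorname{span}\{u\}$, or $\|\Pi_{u^\perp}X\|^2/\|X\|^2$.

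The key geometric step is the bound $\|\Pi_{u^\perp}X\|^2=\|\Pi_{u^\perp}Z\|^2\le\|Z\|^2$. Dividing by $\|X\|^2$ gives
\[
L\le \frac{\|Z\|^2}{\|X\|^2}.
\]
On the event $A\cap\{\|Z\|^2\le c_\alpha\}$ this yields $L\le c_\alpha/(c_\alpha/\varepsilon)=\varepsilon$. Hence $A\cap\{L>\varepsilon\}\subseteq\{\|Z\|^2>c_\alpha\}$, which has probability $\alpha$ for every $\delta\ne0$. This proves \cref{eq:joint}.

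I would check one point here: a direction has sign ambiguity, but $L$ is sign-invariant, so the line-distance formulation is the right one. The inequality is even slightly loose, because the component of $Z$ along $u$ is discarded.

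The remaining two claims follow from the distribution of $X$. At $\delta=0$ we have $X=Z$, so $\Prob(A)=\Prob\{\|Z\|^2\ge c_\alpha/\varepsilon\}\le\Prob\{\|Z\|^2\ge c_\alpha\}=\alpha$, since $\varepsilon<1$. For $\delta\ne0$, $\|X\|^2\sim\chi^2_d(\kappa)$ by \cref{eq:calibration}. Therefore
\[
\Prob(A)=\SF_{\chi^2_d(\kappa)}(c_\alpha/\varepsilon),
\]
using continuity of the law to ignore the boundary.

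The only step needing care is the geometric bound on $L$. It must hold for every realization, not only in expectation, and it must not depend on $\kappa$. The projection argument gives it directly, and no step looks like a real obstacle. I would also note that the rule depends on the data only through $\|X\|^2$, which is where its uniformity over $\delta$ comes from.
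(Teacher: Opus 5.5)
Your proof is correct and is essentially the paper's argument: your event $\{\|Z\|^2\le c_\alpha\}$ is the paper's confidence ball $\mathcal C$ around $X$, and your bound $L=\|\Pi_{u^\perp}Z\|^2/\|X\|^2\le\|Z\|^2/\|X\|^2$ is the paper's tangent-cone observation that the distance from the center $X$ to the line through the true mean is at most the ball's radius, so a bad action forces the ball to miss the true shift. The null bound and the exact action probability then follow from the central and noncentral $\chi^2_d$ laws of $\|X\|^2$, as in the paper; one small correction to your closing remark is that the uniformity over $\delta$ comes from $Z$ being pivotal ($\|Z\|^2\sim\chi^2_d$ at every shift), not from the rule depending on the data only through $\|X\|^2$.
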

The ball's tangent cone is the classical directional confidence cone \citep{box1987empirical}, and its angular width bounds the error of the estimated direction.

\subsection{Geometry of a confidence ball}
Since $X-\sqrt N\delta/s\sim N(0,\Id)$, the event
$\mathcal C=\{\|X-\sqrt N\delta/s\|^2\le c_\alpha\}$ has probability exactly $1-\alpha$ at every shift.
Consider a ball with center $x\ne0$ and radius $r<\|x\|$.
For any nonzero $z$ inside the ball, the line spanned by $z$ intersects the ball, so its distance to $x$ is at most $r$.
That distance squared is $\|x\|^2\{1-(x^\top z)^2/(\|x\|^2\|z\|^2)\}$.
Therefore the largest squared sine from the center direction is at most $r^2/\|x\|^2$; equality is attained at a tangent direction.
Taking $x=X$ and $r^2=c_\alpha$ proves \cref{thm:certificate} on $\mathcal C$.
On the action event $r^2/\|x\|^2\le\varepsilon<1$, so the ball does not contain zero.
A bad action can therefore occur only when the ball misses the true shift, and its joint probability is at most $\alpha$ at every nonzero shift.
That $w$ depends on the data does not affect this argument, because coverage is a statement about the whole ball.
At the null, the action event is contained in $\{\|X\|^2\ge c_\alpha\}$ and therefore has probability at most $\alpha$.
The noncentral chi-square law of $\|X\|^2$ gives the exact action probability stated in the theorem.

\subsection{Exact laws for a radial action rule}
For the mean-direction editor $w=X/\|X\|$ and any $t\ge0$, set $A_t=\{\|X\|^2\ge t\}$, write $p_t=\Prob(A_t)$, and let $L_1=1-(u^\top w)^2$.
Under a fixed direction, the variables $U,E_\perp,J$ of \cref{app:frontier} give $T_0=U+E_\perp\mid J\sim\chi^2_{d+2J}$ and $L_1=E_\perp/(U+E_\perp)\mid J\sim\Beta((d-1)/2,(1+2J)/2)$, independently given $J$; conditioning on $J$ and applying the event $T_0\ge t$ therefore gives, with $a_J=(d-1)/(d+2J)$,
\begin{align}
 p_t&=\E_J\SF_{\chi^2_{d+2J}}(t),\nonumber\\
 \E[L_1\ind_{A_t}]&=\E_J\left[a_J\SF_{\chi^2_{d+2J}}(t)\right],\label{eq:joint-moments}\\
 \Prob(A_t,L_1>\varepsilon)&=\E_J\left[\SF_{\chi^2_{d+2J}}(t)
 \SF_{\Beta((d-1)/2,(1+2J)/2)}(\varepsilon)\right].\nonumber
\end{align}
Thus the mean residual among actions is $\E[L_1\ind_{A_t}]/p_t$.
Counting abstention as the identity, the total expected residual is $1-p_t+\E[L_1\ind_{A_t}]$.
The ball rule can be highly conservative: for $d=256$, $\kappa=128$, and $(\alpha,\varepsilon)=(0.05,0.5)$, its action probability is about $10^{-8}$ despite near-certain detection.
The probability of failure \emph{among} actions is
\begin{equation}
 \Prob(L_1>\varepsilon\mid A_t)
 =\frac{\Prob(A_t,L_1>\varepsilon)}{p_t},
\end{equation}
which is different from the unconditional joint probability in \cref{eq:joint}.
At $d=64$ and $\kappa=32$, the confidence-ball rule acts with probability $8.3\times10^{-5}$, yet its conditional mean residual is $0.56$, above the target $0.5$.
Controlling the joint bad-action probability therefore does not guarantee the target residual among accepted corrections.
All action probabilities are positive for finite thresholds in this Gaussian model, though they may be extremely small.
The guarantee assumes a prespecified $N,\alpha,\varepsilon$ and a single decision.
Repeatedly testing growing calibration samples, or choosing the reported target after inspecting the data, needs an additional simultaneous or sequential argument.

\section{The finite-mixture rule}
\label{app:envelope}
This section lowers the confidence-ball threshold by controlling the bad-action probability separately in each Poisson component of the noncentral chi-square law.
For $j\in\mathbb Z_{\ge0}$, define
\begin{equation}
 \ell_j=\SF_{\Beta((d-1)/2,(1+2j)/2)}(\varepsilon),
 \qquad \mathcal J=\{j\in\mathbb Z_{\ge0}:\ell_j>\alpha\}.
 \label{eq:component-tails}
\end{equation}
The sequence $\ell_j$ decreases to zero, so $\mathcal J$ is finite.
Write $Q_{m}(z)=\chi^2_{m,1-z}$ for the upper-tail quantile, and take a maximum over an empty set as zero.

\begin{theorem}[Finite-mixture action certificate]
\label{thm:envelope}
For $\alpha,\varepsilon\in(0,1)$, let
\begin{equation}
 t_{\rm mix}=\max\left\{c_\alpha,\ \max_{j\in\mathcal J}Q_{d+2j}(\alpha/\ell_j)\right\}.
 \label{eq:envelope}
\end{equation}
Act with the mean-direction editor when $\|X\|^2\ge t_{\rm mix}$.
For every nonzero $\delta$, the joint bad-action probability in \cref{eq:joint} is at most $\alpha$; null action is also at most $\alpha$.
Moreover,
\begin{equation}
 t_{\rm mix}\le\max\{c_\alpha,\chi^2_{d-1,1-\alpha}/\varepsilon\}
 <c_\alpha/\varepsilon.
 \label{eq:strict-improvement}
\end{equation}
Thus its action probability is strictly larger than the confidence-ball rule's at every finite $\kappa\ge0$.
\end{theorem}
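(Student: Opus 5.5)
The plan is to work entirely within one Poisson component and then average over components, using the exact laws \cref{eq:joint-moments}. Fix a nonzero $\delta$, so $\kappa>0$, and act on $A_t=\{\|X\|^2\ge t\}$ with $t=t_{\rm mix}$. Given $J=j$, the energy $T_0=\|X\|^2\sim\chi^2_{d+2j}$ and the residual $L_1\sim\Beta(\tfrac{d-1}2,\tfrac{1+2j}2)$ are independent. Hence the bad-action probability is
\[
\E_J\bigl[\SF_{\chi^2_{d+2J}}(t)\,\ell_J\bigr].
\]
It therefore suffices to show that, for every $j\ge0$,
\[
\SF_{\chi^2_{d+2j}}(t)\,\ell_j\le\alpha .
\]
Averaging over any Poisson law then bounds the joint probability by $\alpha$ uniformly in $\kappa$, which is exactly why $t_{\rm mix}$ does not depend on the shift strength.

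The componentwise bound splits into two cases. If $j\notin\mathcal J$, then $\ell_j\le\alpha$ and the survival factor is at most one. If $j\in\mathcal J$, then
\[
t\ge Q_{d+2j}(\alpha/\ell_j),
\]
which gives $\SF_{\chi^2_{d+2j}}(t)\le\alpha/\ell_j$. Here $\alpha/\ell_j<1$, so the quantile is well defined. For finiteness of $\mathcal J$, I would note that $\ell_j$ is decreasing in $j$: the Beta is stochastically decreasing in its second parameter, being a ratio $E_\perp/(E_\perp+\chi^2_{1+2j})$ with a stochastically larger denominator. Also $\ell_j\to0$, since the Beta concentrates at zero as $j\to\infty$. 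Null action is controlled by $t\ge c_\alpha$: at $\delta=0$, $\|X\|^2\sim\chi^2_d$, so $\Prob(A)\le\alpha$.

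For \cref{eq:strict-improvement}, I would bound each quantile $Q_{d+2j}(\alpha/\ell_j)$ by $\chi^2_{d-1,1-\alpha}/\varepsilon$, using the representation $T_0=E_\perp/L_1$ with $E_\perp\sim\chi^2_{d-1}$. On the event $\{T_0\ge s\}$ with $s=\chi^2_{d-1,1-\alpha}/\varepsilon$, at least one of two things holds: either $L_1>\varepsilon$, or $E_\perp=T_0L_1\ge s\varepsilon=\chi^2_{d-1,1-\alpha}$. Given $J=j$, $E_\perp$ is independent of neither $T_0$ nor $L_1$ separately, but $(T_0,L_1)$ are independent. I therefore plan the inequality
\[
\Prob(T_0\ge s)\,\ell_j=\Prob(T_0\ge s,\,L_1>\varepsilon)\le\Prob(E_\perp\ge\chi^2_{d-1,1-\alpha})=\alpha .
\]
The middle inclusion holds because $T_0\ge s$ and $L_1>\varepsilon$ force $E_\perp=T_0L_1>s\varepsilon$. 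This gives $\SF_{\chi^2_{d+2j}}(s)\le\alpha/\ell_j$, hence $Q_{d+2j}(\alpha/\ell_j)\le s$.

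Strictness then follows from two facts: $c_\alpha<c_\alpha/\varepsilon$ because $\varepsilon<1$, and $\chi^2_{d-1,1-\alpha}<\chi^2_{d,1-\alpha}$ because $\chi^2_d$ stochastically dominates $\chi^2_{d-1}$ strictly. The final claim about action probabilities uses that $\SF_{\chi^2_d(\kappa)}$ is strictly decreasing, since the noncentral chi-square has positive density on $(0,\infty)$. A strictly lower threshold therefore gives a strictly larger action probability at every $\kappa\ge0$. I expect the quantile comparison to be the main obstacle; the product-to-joint identity under conditional independence is what should make it work.
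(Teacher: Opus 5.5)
Your proposal is correct and follows essentially the same route as the paper. You condition on the Poisson component so that energy and residual are independent, and reduce to the componentwise constraints $\ell_j\SF_{\chi^2_{d+2j}}(t)\le\alpha$: these are trivial off $\mathcal J$, a quantile condition on $\mathcal J$, plus $t\ge c_\alpha$ for the null. You get the domination bound from the inclusion $\{S_j\ge t,\,L_j>\varepsilon\}\subset\{E_\perp>\varepsilon t\}$, and strictness from $\chi^2_{d-1,1-\alpha}<\chi^2_{d,1-\alpha}$ together with positivity of the noncentral density. The only cosmetic differences are that the paper obtains $\ell_j\to0$ from the explicit Markov bound $\ell_j\le(d-1)/(\varepsilon(d+2j))$ rather than a concentration remark, and shows monotonicity through the coupling $U_{j+1}=U_j+\chi^2_2$.
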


\subsection{A finite family of componentwise constraints}
Condition on the exact Poisson representation from \cref{app:frontier}.
Given $J=j$, let $U_j\sim\chi^2_{1+2j}$ and $E_\perp\sim\chi^2_{d-1}$ be independent.
The total $S_j=U_j+E_\perp\sim\chi^2_{d+2j}$ is independent of
$L_j=E_\perp/(U_j+E_\perp)\sim\Beta((d-1)/2,(1+2j)/2)$.
Consequently the conditional joint bad-action probability at threshold $t$ is
\begin{equation}
 g_j(t)=\ell_j\SF_{\chi^2_{d+2j}}(t).
 \label{eq:component-error}
\end{equation}
The actual probability at any fixed noncentrality is
$\E_{J\sim\Pois(\kappa/2)}g_J(t)$.
This is the exact law at the true $\kappa$ and involves no prior on it.

To show that $\ell_j$ decreases to zero, construct $U_{j+1}=U_j+Z_j$ with independent $Z_j\sim\chi^2_2$ and the same $E_\perp$.
Then $E_\perp/(U_{j+1}+E_\perp)\le E_\perp/(U_j+E_\perp)$ almost surely, with a strict decrease in each nontrivial tail probability.
Moreover, Markov's inequality gives the explicit bound
\begin{equation}
 \ell_j\le\frac{d-1}{\varepsilon(d+2j)}\longrightarrow0.
 \label{eq:component-finite}
\end{equation}
Only the finite set $\mathcal J$ can impose a nontrivial constraint: if $\ell_j\le\alpha$, then $g_j(t)\le\alpha$ for every nonnegative threshold.
For $j\in\mathcal J$, the constraint $g_j(t)\le\alpha$ is exactly
$t\ge Q_{d+2j}(\alpha/\ell_j)$, because the chi-square survival function is continuous and strictly decreasing.
Adding $t\ge c_\alpha$ for null control proves both the validity and the componentwise minimality of \cref{eq:envelope}.

\subsection{Strict improvement over a confidence ball}
For every $j$, the event $\{S_j\ge t,L_j>\varepsilon\}$ implies
$E_\perp>\varepsilon S_j\ge\varepsilon t$.
Thus
\begin{equation}
 g_j(t)\le\Prob\{\chi^2_{d-1}>\varepsilon t\}.
\end{equation}
All component constraints are satisfied by $t=\chi^2_{d-1,1-\alpha}/\varepsilon$.
The extra null constraint therefore gives \cref{eq:strict-improvement}.
The strict final inequality follows from $c_\alpha<c_\alpha/\varepsilon$ and
$\chi^2_{d-1,1-\alpha}<\chi^2_{d,1-\alpha}$.
The latter follows by coupling a $\chi^2_d$ variable as an independent $\chi^2_{d-1}$ plus $\chi^2_1$, both with positive densities on their support.
Since the noncentral chi-square distribution has positive density at every positive finite argument, reducing the threshold strictly increases action probability at every finite $\kappa$.
This proves \cref{thm:envelope}.

The threshold is the smallest satisfying the componentwise sufficient constraints together with null control.
These constraints can be conservative for the original joint bound: the Poisson weights cannot place all their mass on an arbitrary positive $j$, so $\sup_j g_j(t)$ may exceed $\sup_{\kappa\ge0}\E_{\Pois(\kappa/2)}g_J(t)$, and $t_{\rm mix}$ is minimal for the componentwise scheme rather than for the joint bound itself.

\section{Certificates with an estimated noise scale}
\label{app:unknown}
\subsection{The estimated-scale confidence-ball rule}
Suppose $s>0$ is unknown and $N\ge2$.
The full contrast sample estimates it through
\begin{equation}
 \widehat s^2=\frac1\nu\sum_{i=1}^N\|D_i-\bar D\|^2,
 \qquad \nu=d(N-1),\qquad
 T=\frac{N\|\bar D\|^2}{d\widehat s^2}.
 \label{eq:variance}
\end{equation}
Pooling over coordinates is valid because the noise is isotropic and Gaussian.

\begin{theorem}[Confidence-ball action with estimated scale]
\label{thm:f-certificate}
For $\alpha,\varepsilon\in(0,1)$, let $q_\alpha=F_{d,\nu;1-\alpha}$ be the $1-\alpha$ quantile of the central $F$ distribution with degrees of freedom $d,\nu$; $F_{d,\nu}(\kappa)$ denotes its noncentral version.
The rule that acts with the mean-direction editor exactly when $T\ge q_\alpha/\varepsilon$ satisfies \cref{eq:joint} uniformly over $\delta\ne0$ and $s>0$, and takes action at the null with probability at most $\alpha$.
Its exact action probability is
\begin{equation}
 \Prob_{s,u}(A)=\SF_{F_{d,\nu}(\kappa)}(q_\alpha/\varepsilon).
 \label{eq:f-power}
\end{equation}
\end{theorem}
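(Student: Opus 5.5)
I will mirror the confidence-ball proof of \cref{thm:certificate}, replacing the known-scale chi-square ball by a studentized ball. The first step uses Gaussian sufficiency. The mean $\bar D\sim N(\delta,s^2\Id/N)$ is independent of the residual sum $\sum_i\|D_i-\bar D\|^2\sim s^2\chi^2_\nu$ with $\nu=d(N-1)$, so $\widehat s^2$ is independent of $\bar D$ and $\nu\widehat s^2/s^2\sim\chi^2_\nu$. From this I define the random ball
\begin{equation*}
 \mathcal C=\Bigl\{\delta':\ \frac{N\|\bar D-\delta'\|^2}{d\widehat s^2}\le q_\alpha\Bigr\}.
\end{equation*}
At the true $\delta$ the pivot $N\|\bar D-\delta\|^2/(d\widehat s^2)$ is a ratio of independent $\chi^2_d/d$ and $\chi^2_\nu/\nu$, hence central $F_{d,\nu}$. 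Therefore $\Prob_{s,\delta}(\delta\in\mathcal C)=1-\alpha$ exactly, uniformly in $(s,\delta)$.

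The second step is the geometric argument of \cref{thm:certificate}, applied in $\bar D$-coordinates. The ball has centre $\bar D$ and squared radius $r^2=d\widehat s^2q_\alpha/N$. The action event $T\ge q_\alpha/\varepsilon$ is exactly $r^2/\|\bar D\|^2\le\varepsilon<1$. On that event the ball excludes $0$, and every nonzero $z$ in the ball has squared sine from $w=\bar D/\|\bar D\|$ at most $r^2/\|\bar D\|^2\le\varepsilon$. Hence, on $A\cap\{\delta\in\mathcal C\}$, $L(\Id-ww^\top,u)=1-(u^\top w)^2\le\varepsilon$. A bad action therefore requires $\delta\notin\mathcal C$, which gives \cref{eq:joint} with bound $\alpha$ uniformly over $\delta\ne0$ and $s>0$. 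The data dependence of $w$ is harmless because coverage is a property of the whole ball.

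The third step handles the null and the exact action law. At $\delta=0$, $T\sim F_{d,\nu}$ and $A\subseteq\{T\ge q_\alpha\}$, so the null action probability is at most $\alpha$. For $\delta\ne0$, $N\|\bar D\|^2/s^2=\|X\|^2\sim\chi^2_d(\kappa)$, independent of $\nu\widehat s^2/s^2\sim\chi^2_\nu$. Thus $T$ is noncentral $F_{d,\nu}(\kappa)$, and since the unknown $s$ cancels this yields \cref{eq:f-power}.

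The main obstacle is only the bookkeeping of independence and the exact $F$ pivot. Once the pooled isotropic residuals are checked to be independent of $\bar D$ with a law free of $u$, the rest transfers verbatim from the known-scale ball.
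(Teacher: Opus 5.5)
Your proposal is correct and is essentially the paper's proof. The independent residual variance $\nu\widehat s^2/s^2\sim\chi^2_\nu$ makes the studentized ball $B(\bar D,\sqrt{d\widehat s^2q_\alpha/N})$ an exact $1-\alpha$ confidence set through the central $F_{d,\nu}$ pivot, and since $r^2/\|\bar D\|^2=q_\alpha/T$, the tangent-cone argument of the known-scale rule, the null bound and the noncentral law $T\sim F_{d,\nu}(\kappa)$ all transfer exactly as you describe.
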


\subsection{Proof of \cref{thm:f-certificate}}
An orthogonal transformation of the $N$ samples in each coordinate separates their normalized mean from $N-1$ centered Gaussian coordinates.
Summing residual squares over $d$ coordinates gives
\begin{equation}
 W=\nu\widehat s^2/s^2\sim\chi^2_\nu,
 \qquad W\perp X,\qquad\nu=d(N-1).
\end{equation}
Hence
\begin{equation}
 \frac{N\|\bar D-\delta\|^2}{d\widehat s^2}
 =\frac{\|X-\sqrt\kappa u\|^2/d}{W/\nu}\sim F_{d,\nu}.
\end{equation}
The ball $B(\bar D,\sqrt{d\widehat s^2q_\alpha/N})$ has exact coverage $1-\alpha$ at every $\delta$ and $s$.
On $T\ge q_\alpha/\varepsilon$, it excludes zero and its tangent bound is at most $\varepsilon$.
The joint and null guarantees in \cref{thm:f-certificate} follow exactly as for known scale.
The noncentral law $T\sim F_{d,\nu}(\kappa)$ proves \cref{eq:f-power}.

\subsection{A finite-mixture improvement with estimated scale}
\begin{corollary}[A stronger certificate with estimated scale]
\label{cor:f-envelope}
With $\ell_j,\mathcal J$ from \cref{eq:component-tails}, let $Q_{m,\nu}^F(z)=F_{m,\nu;1-z}$ and
\begin{equation}
 a_{\rm mix}=\max\left\{q_\alpha,\ \max_{j\in\mathcal J}
 \frac{d+2j}{d}Q_{d+2j,\nu}^F(\alpha/\ell_j)\right\}.
 \label{eq:f-envelope}
\end{equation}
The rule $A=\{T\ge a_{\rm mix}\}$ has joint bad-action and null action probabilities at most $\alpha$, uniformly over unknown $s>0$.
It satisfies $a_{\rm mix}<q_\alpha/\varepsilon$ and therefore strictly increases action probability over the confidence-ball $F$ rule at every finite $\kappa\ge0$.
\end{corollary}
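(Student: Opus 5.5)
The plan is to repeat the argument of \cref{thm:envelope} with the chi-square total replaced by an $F$-type statistic, conditioning on the independent scale estimate. Write $W=\nu\widehat s^2/s^2\sim\chi^2_\nu$, independent of $X$ as in the proof of \cref{thm:f-certificate}. Then $T=(\|X\|^2/d)/(W/\nu)$, and the event $\{T\ge a\}$ is $\{\|X\|^2\ge a\,d\,W/\nu\}$. Given $J=j$, the Poisson representation of \cref{app:frontier} gives $\|X\|^2=S_j\sim\chi^2_{d+2j}$, independent of $L_j\sim\Beta((d-1)/2,(1+2j)/2)$. Both are independent of $W$, since $W$ is built from the centered residuals. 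Therefore
\[
\Prob(T\ge a,\ L_1>\varepsilon\mid J=j)=\ell_j\,\Prob\!\left(\frac{S_j/(d+2j)}{W/\nu}\ge\frac{d\,a}{d+2j}\right)=\ell_j\,\SF_{F_{d+2j,\nu}}\!\left(\frac{d\,a}{d+2j}\right).
\]
This is free of $s$, which gives uniformity over the unknown scale. It depends on $\kappa$ only through the Poisson weights.

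\textbf{Validity.} For $j\notin\mathcal J$, the conditional bad-action probability is at most $\ell_j\le\alpha$ for any $a$. For $j\in\mathcal J$, the constraint $\ell_j\SF_{F_{d+2j,\nu}}(da/(d+2j))\le\alpha$ is equivalent to
\[
a\ge\frac{d+2j}{d}\,Q^F_{d+2j,\nu}(\alpha/\ell_j),
\]
by continuity and strict monotonicity of the $F$ survival function. Averaging over $J\sim\Pois(\kappa/2)$ then bounds the joint probability by $\alpha$ at every $\kappa>0$ and every $s$. At $\delta=0$, $T\sim F_{d,\nu}$, so the term $q_\alpha$ in the maximum gives null action at most $\alpha$. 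The finiteness of $\mathcal J$ was already shown in \cref{app:envelope}.

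\textbf{Strict improvement.} This is the main obstacle. It needs a componentwise bound free of $j$, analogous to $g_j(t)\le\Prob\{\chi^2_{d-1}>\varepsilon t\}$. On $\{S_j\ge a\,dW/\nu,\ L_j>\varepsilon\}$ we have $E_\perp>\varepsilon S_j\ge\varepsilon a\,dW/\nu$. Hence
\[
g_j\le\Prob\!\left(\frac{E_\perp/(d-1)}{W/\nu}>\frac{\varepsilon a d}{d-1}\right)=\SF_{F_{d-1,\nu}}\!\left(\frac{\varepsilon a d}{d-1}\right).
\]
So every component constraint holds at $a^\dagger=\frac{d-1}{d}\,F_{d-1,\nu;1-\alpha}/\varepsilon$. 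This gives $a_{\rm mix}\le\max\{q_\alpha,a^\dagger\}$, and it remains to show this is below $q_\alpha/\varepsilon$. The first term is clear because $\varepsilon<1$. For the second, I need
\[
(d-1)F_{d-1,\nu;1-\alpha}<d\,F_{d,\nu;1-\alpha},
\]
that is, the $1-\alpha$ quantile of $\chi^2_{d-1}\nu/W$ is below that of $\chi^2_d\nu/W$. I plan to couple $\chi^2_d=\chi^2_{d-1}+\chi^2_1$ with the same $W$. This gives strict stochastic dominance, since the densities are positive, and hence a strict quantile inequality.

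Finally, $T\sim F_{d,\nu}(\kappa)$ has a positive density on $(0,\infty)$. A strictly smaller threshold therefore gives a strictly larger action probability at every finite $\kappa\ge0$.
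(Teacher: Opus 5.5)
Your proposal is correct and follows the paper's proof essentially step for step. You condition on the Poisson component with the independent $\chi^2_\nu$ denominator to get $\ell_j\SF_{F_{d+2j,\nu}}(da/(d+2j))$, invert the finitely many component constraints and add $q_\alpha$ for null control, bound every component via $E_\perp>\varepsilon a d W/\nu$, and use the additive coupling with a shared denominator, whose positive densities make the quantile inequality strict.
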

The proof conditions on the same Poisson component and uses the independent residual variance.

Let $W\sim\chi^2_\nu$ be independent of $(U_j,E_\perp)$.
At threshold $a$ for $T=(S_j/d)/(W/\nu)$, the conditional joint probability is
\begin{equation}
 g^F_j(a)=\ell_j\SF_{F_{d+2j,\nu}}\!\left(\frac{da}{d+2j}\right).
\end{equation}
The same finite component argument yields \cref{eq:f-envelope} and its joint guarantee; $a\ge q_\alpha$ gives null control.
For the domination bound, a bad action implies
$E_\perp>\varepsilon da W/\nu$.
Consequently all components are controlled if
\begin{equation}
 a\ge\frac{d-1}{d\varepsilon}F_{d-1,\nu;1-\alpha}.
\end{equation}
Thus
\begin{equation}
 a_{\rm mix}\le\max\left\{q_\alpha,
 \frac{d-1}{d\varepsilon}F_{d-1,\nu;1-\alpha}\right\}
 <q_\alpha/\varepsilon.
\end{equation}
For strictness, couple the two numerators with a shared independent denominator $W/\nu$.
Then $\chi^2_{d-1}/(W/\nu)<\chi^2_d/(W/\nu)$ almost surely under the additive coupling, implying the strict quantile inequality
$(d-1)F_{d-1,\nu;1-\alpha}<dF_{d,\nu;1-\alpha}$.
Because the noncentral $F$ density is positive, the lower threshold strictly increases the action probability.
The thresholds depend only on $d$, $\nu$, $\alpha$ and $\varepsilon$, so the rule needs neither $s$ nor $\kappa$.

The counterpart of \cref{eq:joint-moments} with $F$ quantiles follows by the same conditioning on $J$ and the independent denominator $W/\nu$.
For fixed $d,\kappa$ and $N\to\infty$ it recovers the known-scale rule and its action law.

\section{Proof of Corollary~\lref{cor:phase}}
\label{app:phase}
\begin{corollary}[A sharp constant-strength action boundary]
\label{cor:phase}
Suppose $d\to\infty$, $\kappa/d\to c>0$, and $p\ge p_0>0$.
Then $\risk_d^{\rm sel}(p,\kappa)\to1/(1+c)$ uniformly over such $p$.
For fixed $\alpha,\varepsilon\in(0,1)$, the known-scale and $F$ rules, both confidence-ball and finite-mixture versions, have action probabilities tending to one when $c>\varepsilon^{-1}-1$ and to zero when $c<\varepsilon^{-1}-1$.
The $F$ result permits any sequence $N\ge2$, including fixed $N$.
\end{corollary}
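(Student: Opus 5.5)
The plan is to reduce everything to concentration of the chi-square quantities in the Poisson representation of \cref{app:frontier}, with $\kappa/d\to c$. The fact used throughout is this: under the alternative, $\|X\|^2/d\to1+c$ in probability, with variance $(2d+4\kappa)/d^2\to0$. Likewise $2J/d\to c$, and conditional on $J$ the residual $L_1\sim\Beta(\tfrac{d-1}2,\tfrac{1+2J}2)$ concentrates at $(d-1)/(d+2J)\to1/(1+c)$, since its variance is $O(1/d)$.

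\emph{Selective frontier.} I would bound $\risk_d^{\rm sel}(p,\kappa)$ from both sides.

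For the upper bound, write $\risk^{\rm sel}=\E[L_1\ind_{A}]/p$ with $A=\{\|X\|^2\ge\tau_p\}$. Then
\[
\bigl|\E[L_1\ind_A]/p-1/(1+c)\bigr|\le \E|L_1-1/(1+c)|/p\le \E|L_1-1/(1+c)|/p_0 .
\]
The middle expectation tends to zero because $L_1\to1/(1+c)$ in probability and $L_1\in[0,1]$ is bounded. This gives the limit uniformly over $p\ge p_0$. The bound is two-sided, so it already proves the claim; \cref{cor:selective-gap} is not needed.

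\emph{Action probabilities, known scale.} For the confidence ball, $c_\alpha=\chi^2_{d,1-\alpha}=d+O(\sqrt d)$, so the threshold $c_\alpha/\varepsilon$ divided by $d$ tends to $1/\varepsilon$. Since $\|X\|^2/d\to1+c$, the action probability tends to one if $1+c>1/\varepsilon$ and to zero if $1+c<1/\varepsilon$.

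For the finite mixture, \cref{eq:strict-improvement} gives an upper bound on $t_{\rm mix}/d$ of $\max\{1,1/\varepsilon\}+o(1)=1/\varepsilon+o(1)$. That settles the case $c>\varepsilon^{-1}-1$. For $c<\varepsilon^{-1}-1$ I need a matching lower bound $t_{\rm mix}/d\ge 1+c'+o(1)$ for some $c'>c$. To get it, pick $j=\lfloor c''d/2\rfloor$ with $c<c''<\varepsilon^{-1}-1$. Then $(d-1)/(d+2j)\to1/(1+c'')>\varepsilon$, so concentration gives $\ell_j\to1>\alpha$, hence $j\in\mathcal J$ for large $d$. Because $\alpha/\ell_j$ stays bounded away from $0$ and $1$, the quantile satisfies $Q_{d+2j}(\alpha/\ell_j)=d+2j+O(\sqrt d)$, so $t_{\rm mix}/d\ge1+c''+o(1)>1+c$. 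The action probability therefore tends to zero.

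\emph{$F$ versions.} I would write $T=(\|X\|^2/d)/(W/\nu)$ with $\nu=d(N-1)\ge d\to\infty$. Then $W/\nu\to1$ in probability for any sequence $N\ge2$, including fixed $N$, so $T\to1+c$. On the threshold side, $F_{m,\nu;1-z}\to1$ whenever $m,\nu\to\infty$ with $z$ fixed, because both chi-squares concentrate. So $q_\alpha/\varepsilon\to1/\varepsilon$. The mixture bound in \cref{cor:f-envelope} and the same choice of $j$ give $a_{\rm mix}\to$ a limit in $[1+c'',1/\varepsilon]$ in the corresponding cases.

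\emph{Main obstacle.} The delicate step is the lower bound on the mixture thresholds. It requires uniform quantile asymptotics for $\chi^2_{d+2j}$, and for $F_{d+2j,\nu}$ at fixed $N$, with $j$ growing linearly in $d$ and the level $\alpha/\ell_j$ bounded away from $0$ and $1$. I would handle this with Chebyshev bounds on the chi-square and $F$ variables, which is enough because only $o(d)$ accuracy is needed.
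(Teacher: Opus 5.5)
Your proposal is correct. For the selective limit, the confidence-ball rules and the upper limits of the mixture thresholds, it follows the paper closely. You use the same $L^1$ bound $\E|L_1-(1+c)^{-1}|/p_0$ on the attaining rule of \cref{thm:selective}. You obtain $L_1\to(1+c)^{-1}$ from the Poisson--Beta representation rather than the paper's direct Gaussian coordinates, which makes no difference. You also use the same comparison of $\|X\|^2/d$ or $T$ with thresholds normalized to $1/\varepsilon$ through the domination bounds \cref{eq:strict-improvement} and \cref{cor:f-envelope}.

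The genuine difference is the lower limit of $t_{\rm mix}/d$ and $a_{\rm mix}$. The paper argues by contradiction from validity. Suppose a normalized threshold stayed below $1/\varepsilon-\eta$ along a subsequence, and take $\kappa_d=c'd$ with $1/\varepsilon-\eta<1+c'<1/\varepsilon$. Then the rule would act with probability tending to one while $L_1\to1/(1+c')>\varepsilon$, so the joint bad-action probability would tend to one. That contradicts the bound $\alpha<1$, which holds at every $\kappa$. You instead exhibit the binding constraint directly. The component $j=\lfloor c''d/2\rfloor$ has $\ell_j\to1$ and lies in $\mathcal J$. Its quantile forces $t_{\rm mix}\ge d+2j-O(\sqrt d)$ by a one-sided Chebyshev bound, and the same holds for the $F$ quantile since $d+2j$ and $\nu\ge d$ both diverge.

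What each route buys:
\begin{itemize}
\item The paper's argument needs no quantile asymptotics at growing $j$ or at the drifting level $\alpha/\ell_j$. It also shows more generally that any radial rule with the uniform joint guarantee must have normalized threshold at least $1/\varepsilon-o(1)$.
\item Your argument is constructive and identifies which Poisson component binds near the boundary. Letting $c''\uparrow\varepsilon^{-1}-1$ also recovers the full limits \cref{eq:envelope-limit}.
\end{itemize}

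One wording fix: your statement that $a_{\rm mix}$ tends to ``a limit in $[1+c'',1/\varepsilon]$'' should read $\liminf a_{\rm mix}\ge1+c''$ in one case and $\limsup a_{\rm mix}\le1/\varepsilon$ in the other. That is all the action limits require.
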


Rotate $u$ to $e_1$ and suppose $\kappa_d/d\to c>0$.
In the direct Gaussian representation,
\begin{equation}
 \frac{(\sqrt{\kappa_d}+Z_1)^2}{d}\longrightarrow c,
 \qquad \frac1d\sum_{i=2}^d Z_i^2\longrightarrow1
\end{equation}
in probability.
Hence $L_1\to(1+c)^{-1}$ in probability and, because it is bounded, in $L^1$.
For any event $A$ with probability at least $p_0>0$,
\begin{equation}
 \left|\E[L_1\mid A]-\frac1{1+c}\right|
 \le\frac{\E|L_1-(1+c)^{-1}|}{p_0}\longrightarrow0.
\end{equation}
Apply this to the exact selective attaining rules for all $p\ge p_0$ to prove the first part of \cref{cor:phase} uniformly over action frequencies.

For fixed $\alpha$, concentration gives $\chi^2_{d,1-\alpha}/d\to1$.
When $\nu=d(N-1)\ge d$, $W/\nu\to1$ and the central $F_{d,\nu}$ distribution concentrates at one, so $q_\alpha\to1$.
The observed normalized energies satisfy
$\|X\|^2/d\to1+c$ and $T\to1+c$.
The ball thresholds, normalized as $c_\alpha/(d\varepsilon)$ and $q_\alpha/\varepsilon$, both tend to $1/\varepsilon$.
This proves their action limits strictly above and below the boundary.

The mixture thresholds satisfy the same limits:
\begin{equation}
 t_{\rm mix}/d\longrightarrow1/\varepsilon,
 \qquad a_{\rm mix}\longrightarrow1/\varepsilon.
 \label{eq:envelope-limit}
\end{equation}
The domination bounds prove the upper limits.
For the lower limits, suppose along a subsequence one of these normalized thresholds stays at most $1/\varepsilon-\eta$ for some $\eta>0$.
Choose $c'\in(0,1/\varepsilon-1)$ with $1+c'>1/\varepsilon-\eta$ and use $\kappa_d=c'd$.
Under this sequence the proposed action probability tends to one, but $L_1\to1/(1+c')>\varepsilon$, so joint bad action tends to one.
This contradicts the finite-dimensional uniform bound $\alpha<1$.
Thus \cref{eq:envelope-limit} holds, and the same comparison of statistic and threshold proves the mixture action limits.
At $c=1/\varepsilon-1$ the ratio limit alone does not determine the action limit: second-order deviations of $\kappa_d$ and quantile fluctuations can affect it.

\section{Proof of Proposition~\lref{prop:anisotropy}}
\label{app:anisotropy}
\begin{proof}
(i) Let $w=\bar D/\|\bar D\|$, with $(u^\top w)^2$ set to zero when $\bar D=0$.
By the Cauchy--Schwarz inequality, $(\E|u^\top\bar D|)^2\le\E[(u^\top w)^2]\,\E\|\bar D\|^2$.
Since $\E|u^\top\bar D|\ge|\E u^\top\bar D|=\theta$ and $\E\|\bar D\|^2=\theta^2+\tr\Gamma/N$, we get $\E(u^\top w)^2\ge\bar\kappa/(d+\bar\kappa)$.
Given $w$, uniform completion in $w^\perp$ multiplies the residual $1-(u^\top w)^2$ by $(d-k)/(d-1)$ in expectation, as in \cref{app:frontier}.
Only the mean and covariance of $\bar D$ were used.

(ii) By \cref{eq:kernel}, it suffices to consider rules that choose a deleted subspace $K$ of dimension at most $k$.
Fix $m>k$ and let $S_m$ be spanned by eigenvectors of the $m$ largest eigenvalues of $\Gamma$; the maximum risk over $\sphere$ is at least that over unit $u\in S_m$.
For such $u$, the coordinates of the contrasts outside $S_m$ are Gaussian, independent of those inside, and have a law free of $u$, so they act only as fitting randomness.
For $u\in S_m$ the captured fraction is $u^\top\Pi_Ku=u^\top Au$ with $A=\Pi_{S_m}\Pi_K\Pi_{S_m}$, which satisfies $0\preceq A\preceq\Pi_{S_m}$ and $\tr A\le k$.
Such an $A$ is a convex combination of orthogonal projectors of rank at most $k$ inside $S_m$, each dominated by a rank-$k$ projector inside $S_m$, so a randomized rank-$k$ deletion inside $S_m$ has no larger loss at every $u\in S_m$.
Inside $S_m$ the noise covariance of $\bar D$ dominates $\gamma_{(m)}\Id/N$, so the observation can be simulated from one with noise covariance exactly $\gamma_{(m)}\Id/N$ by adding independent Gaussian noise; the restricted problem is therefore no easier than the isotropic $m$-dimensional problem, whose minimax risk is $\risk_m^*(k,N\theta^2/\gamma_{(m)})$ by \cref{thm:frontier}.
Revealing $\Gamma$ cannot increase the minimax risk, so the bound also holds when $\Gamma$ is unknown.
\end{proof}
For steep spectra (ii) can be loose: for the participant-level noise spectrum of the BOAS experiment (\cref{app:boas}) it gives $0.30$ at $P=n=8$, against $0.52$ from (i).
A noise-weighted direction $(\Id+gC)^{-1}\bar D$ does not close this gap.
Even with $g$ tuned on the test participants of splits other than those in \cref{tab:boas-grid}, it lowers the erasers' residual by at most $0.065$ over the grid and by $0.005$ at $P=n=8$.

\section{Simulation protocol and checks}
\label{app:numerics}
\begin{table}[htbp]
 \centering
 \small
 \setlength{\tabcolsep}{4pt}
 \caption{\textbf{Exact predictions against Monte Carlo estimates.}
 Means carry $\pm1.96$ Monte Carlo SE ($8{,}192$ draws for the frontier, $512$ fits per eraser cell, $256$ calibrations for INLP), and probabilities carry exact $95\%$ binomial intervals.
 Real-content rows list the three pools at $d=64$, each with one value shared by MP, SAL and LEACE.
 Device rows give the share of the population mean shift left in held-out participants, as mean $\pm1.96$ SE over random splits of the same participants ($20$ for BOAS, $40$ for Flex), and its prediction from the calibration pool (\cref{app:boas,app:real-pairs}).}
 \label{tab:checks}
 \begin{tabular}{llcc}
  \toprule
  Quantity & Setting & Predicted & Observed \\
  \midrule
  Cost-one residual & $d=256$, $\kappa=128$ & $0.6652$ & $0.6644\pm0.0009$ \\
  Cost-one residual & $d=256$, $\kappa=512$ & $0.3326$ & $0.3325\pm0.0006$ \\
  Eraser residual, Gaussian content & $d=N=64$ & $0.4961$ & $0.4922\pm0.0067$ \\
  Eraser residual, real content & $N_\star=63$, target $0.50$ & $0.5000$ & $0.504,\ 0.501,\ 0.503$ ($\pm0.007$) \\
  Eraser residual, real content & $N_\star=190$, target $0.25$ & $0.2495$ & $0.251,\ 0.247,\ 0.248$ ($\pm0.004$) \\
  INLP excess over MP & $d=N=32$, $k=1$ & $0.181$ & $0.186\pm0.012$ \\
  INLP excess over MP & $d=N=32$, $k=4$ & $0.012$ & $0.017\pm0.008$ \\
  Action prob., confidence ball & $d=\kappa=256$ & $0.0289$ & $0.026\ [0.023,0.030]$ \\
  Action prob., finite mixture & $d=\kappa=256$ & $0.1152$ & $0.114\ [0.107,0.121]$ \\
  \midrule
  Wearable shift (BOAS), one participant & $d=118$, $P=1$, $n=128$ & $0.757$ & $0.690\pm0.033$ \\
  Wearable shift (BOAS) & $d=118$, $P=8$, $n=8$ & $0.501$ & $0.507\pm0.024$ \\
  Wearable shift (BOAS) & $d=118$, $P=32$, $n=128$ & $0.101$ & $0.125\pm0.027$ \\
  Detection (BOAS) & $d=118$, $P=8$, $n=8$ & $0.95$ & $0.97$ \\
  Device shift (Flex) & $d=225$, $P=8$, $n=16$ & $0.508$ & $0.504\pm0.030$ \\
  \bottomrule
 \end{tabular}
\end{table}

The studies below simulate the Gaussian experiment with $\alpha=0.05$ and $\varepsilon=0.5$.
Seeds, grids and code are in the code repository (\coderepo).
All analyses ran on the CPU of a single laptop (Apple M3 Pro, 36 GB RAM), except the one-time extraction of the frozen EEGPT features, which used its integrated GPU.
Excluding the downloads and feature extraction of the EEG recordings, the complete set of analyses takes well under an hour.
Probability error bars are exact $95\%$ binomial intervals, mean error bars are $\pm1.96$ Monte Carlo standard errors, and rules being compared share the same draws.
A displayed zero means that no event occurred among the draws, as for the confidence-ball rule at $d=256$, $\kappa=128$ (\cref{app:certificates}).

\subsection{Hard-budget and selective frontiers}
\begin{figure}[htbp]
 \centering
 \includegraphics[width=\linewidth]{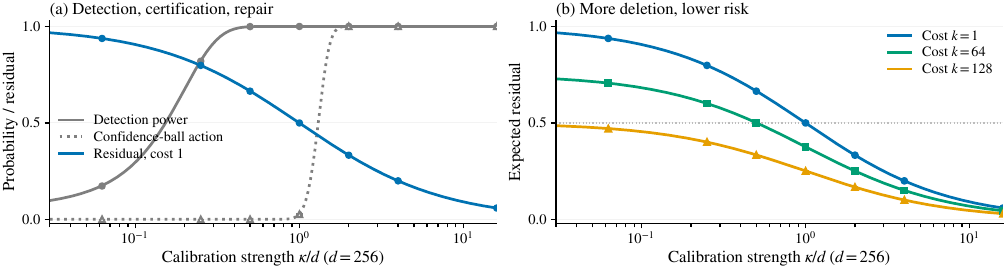}
 \caption{\textbf{Detection, deletion, and certified action are different quantities.}
 At $d=256$, lines are exact formulas and points use $8{,}192$ draws per strength.
 (a) Detection power, confidence-ball action probability, and the optimal cost-one expected residual.
 (b) The hard-budget frontier for three deletion dimensions.}
 \label{fig:frontier}
\end{figure}
The frontier grid uses $d\in\{16,64,256\}$, $\kappa/d\in\{0,\tfrac1{16},\tfrac14,\tfrac12,1,2,4,16\}$, deletion ranks $k\in\{1,d/4,d/2,d\}$ and $8{,}192$ draws per cell.
For $1<k<d$ the sampler multiplies the rank-one residual by the beta-distributed fraction that random orthogonal completion leaves (\cref{app:frontier}).

The selective curves use $d=64$, $\kappa\in\{8,32,128\}$ and action frequencies from $10^{-10}$ to $1$.
They are checked against the Bessel-ratio form of the posterior residual,
\begin{equation}
 h_\kappa(r)=\frac{d-1}{\sqrt\kappa r}
 \frac{I_{d/2}(\sqrt\kappa r)}{I_{d/2-1}(\sqrt\kappa r)},
 \label{eq:bessel}
\end{equation}
integrated against the noncentral radial density on the accepted tail, where $I_\eta$ is the modified Bessel function of the first kind.

\begin{figure}[htbp]
 \centering
 \includegraphics[width=\linewidth]{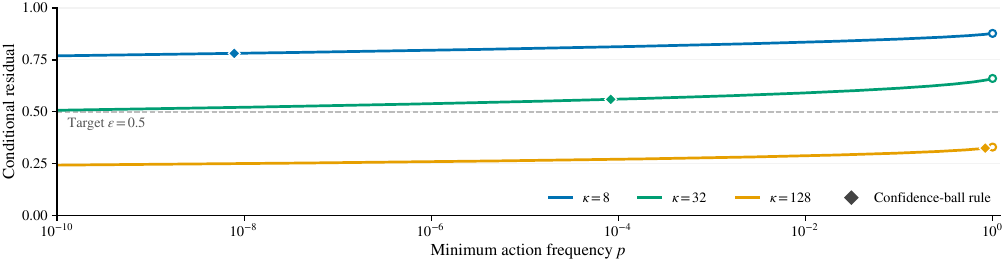}
 \caption{\textbf{Conditional improvement has an action-frequency cost.}
 Exact selective frontiers at $d=64$.
 Open circles mark the ungated $p=1$ limits, and diamonds the confidence-ball rule's action probability and conditional mean residual.
 The dashed line is $\varepsilon=0.5$.
 A diamond may lie above it, because the certificate controls joint bad action.
 The oracle curves fix $\kappa$, whereas the confidence-ball rule does not know it.}
 \label{fig:selective}
\end{figure}

\subsection{Optimal allocation under an average budget}
At $d=16$ and $\kappa=8$, $131{,}072$ shared calibration draws cover budgets from $0$ to $16$, and a direct numerical optimization over rules reproduces \cref{eq:average-frontier}.
At $b=4$ the average-budget risk is $0.499$ (simulated $0.499\pm0.001$ at mean cost $3.99\pm0.02$), against $0.514$ under the hard budget: the optimal rule deletes one direction on four fifths of the fits and all sixteen on the rest (\cref{fig:average}).

\begin{figure}[htbp]
 \centering
 \includegraphics[width=\linewidth]{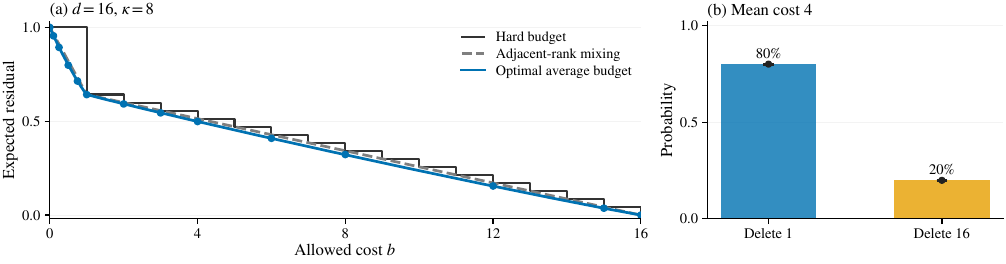}
 \caption{\textbf{An average budget rewards adaptive deletion rank.}
 Left: exact hard-budget, adjacent-rank mixing, and optimal average-budget risks at $d=16$, $\kappa=8$, with points from $131{,}072$ shared draws.
 Right: at average cost four the optimal rule deletes one direction with probability $0.8$ and all sixteen with probability $0.2$.
 Black points with exact $95\%$ binomial intervals check these probabilities.}
 \label{fig:average}
\end{figure}

\subsection{Action certificates with known and estimated scale}
The known-scale rules are evaluated at $d\in\{4,16,64,256\}$.
The estimated-scale rules use a grid of $d\in\{4,16,64\}$, $N\in\{2,4,16\}$ and $\kappa/d\in\{0,\tfrac12,1,2,4\}$, with $4{,}096$ draws per cell.
A numerical check confirms that components beyond the cutoff $\mathcal J$ never constrain the mixture threshold.
The null action rate stays at or below $\alpha$ up to Monte Carlo error, and on the same draws the mixture rule never acts less often than the confidence-ball rule (\cref{fig:gate-comparison}).

\begin{figure}[htbp]
 \centering
 \includegraphics[width=\linewidth]{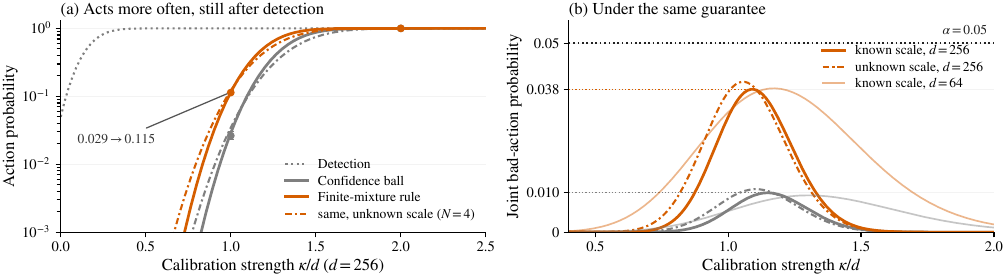}
 \caption{\textbf{More frequent action under the same uniform joint-error guarantee.}
 At $d=256$: (a) exact action probabilities of the confidence-ball and finite-mixture rules with known scale (solid) and with the noise scale estimated from $N=4$ pairs (dash-dotted), Monte Carlo estimates on shared draws (points), and the power of the radial test (dotted).
 (b) The exact joint probability of acting and missing the target, which every rule keeps below $\alpha$ at all strengths.
 Thin lines repeat the known-scale rules at $d=64$.}
 \label{fig:gate-comparison}
\end{figure}

\section{Fitted erasers on Gaussian and real content}
\label{app:eraser-illustrations}
These experiments apply \cref{cor:empirical-erasers,cor:real-content-frontier} to fitted MP, SAL and LEACE maps, first with Gaussian content, then with real represented content.

\subsection{Gaussian content}
For $d\in\{16,64\}$ and $N/d\in\{\tfrac14,\tfrac12,1,2,4\}$ we fit MP, centered rank-one SAL and binary LEACE (\cref{app:empirical-erasers}) to $512$ calibration samples of the paired construction of \cref{app:experiment} with $\theta=s^2=1$.
The three kernels coincide on every fit, and the mean residuals follow \cref{eq:empirical-eraser-risk} (\cref{tab:checks}).
The methods differ only in distortion: at $d=N=64$, LEACE's median linear cost is $2.13$ against one for MP and SAL, and it grows sharply near the pooled-covariance rank transition $2N-1=d$.

\subsection{Real content pools and the semisynthetic check}
\label{app:real-content-pools}
Let $z_1,\ldots,z_m\in\R^p$ be a frozen real representation pool, with
\begin{equation}
 \bar z=\frac1m\sum_{j=1}^m z_j,
 \qquad
 \widehat\Sigma_m=\frac1m\sum_{j=1}^m(z_j-\bar z)(z_j-\bar z)^\top.
\end{equation}
Choose a $d$-dimensional positive-eigenvalue subspace, with eigenvectors $U_d$ and eigenvalues $\Lambda_d$, set $y_j=\Lambda_d^{-1/2}U_d^\top(z_j-\bar z)$, and draw
\begin{equation}
 C=\sqrt{1-s^2/2}\,y_T,
 \qquad T\sim\operatorname{Unif}\{1,\ldots,m\}.
\end{equation}
Conditional on the pool, this law satisfies \cref{eq:real-content-moments} exactly, and independent pool indices and endpoint noises give independent calibration pairs.
The content comes from real recordings, while the shift and the measurement noise are synthetic.

\parhead{Pools.}
\label{app:real-content-experiment}
The three pools come from the paired Neuroscan/Flex recordings of \citet{williams2020flex} (\cref{app:real-pairs}).
Two of them describe the same N170 events, once by temporal features and once by frozen EEGPT features.
The third holds $5{,}330$ MMN events from $15$ of the $16$ participants.
Each content vector is the midpoint of the paired device representations, reduced to its first $16$ or $64$ principal coordinates and whitened over the pool.
Whitening keeps the non-Gaussian structure: the radial fourth moment of the MMN features is $1.27$ times its Gaussian value at $d=16$ and $1.19$ times at $d=64$.

\parhead{Design.}
We set $\theta=1$ and cross the three pools with $d\in\{16,64\}$, $s^2\in\{0.25,1,1.75\}$ and $N/d\in\{\tfrac14,\tfrac12,1,2,4\}$.
At $s^2=1$ and for each target $\varepsilon\in\{\tfrac12,\tfrac14\}$ we also use the calibration sizes $\lceil N_\star/2\rceil$, $N_\star$ and $2N_\star$, where
\begin{equation}
 N_\star(\varepsilon;d,\theta,s)
 :=\min\left\{N\in\mathbb N,\ N\ge1:\risk_d^*(1,N\theta^2/s^2)\le\varepsilon\right\}
 \label{eq:eraser-calibration-budget}
\end{equation}
is the smallest calibration size with expected residual at most $\varepsilon$.
It is well defined because the risk decreases in $\kappa$.
Each of the $120$ cells has $512$ calibration samples.

\parhead{Results.}
The loss is evaluated from the row space of each fitted matrix under the exact population covariance of the construction, so it carries no test-sample noise (\cref{fig:real-content-risk}).
Cell means deviate from the predictions by $0.004$ in root mean square and at most $0.015$.
At the calibration sizes $N_\star$ all three pools land on their targets: $0.501$--$0.504$ for the target $\tfrac12$ at $N=63$, and $0.247$--$0.251$ for the target $\tfrac14$ at $N=190$.
Because the shared content cancels from each paired contrast, the three pools are indistinguishable, as \cref{cor:real-content-frontier} predicts.

\begin{figure}[htbp]
 \centering
 \includegraphics[width=\linewidth]{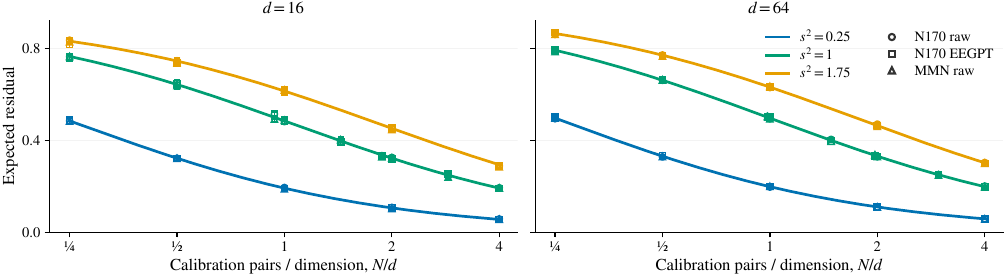}
 \caption{\textbf{An illustration of the non-Gaussian-content extension.}
 Lines are exact predictions, and points are fitted-map population residuals averaged over $512$ calibration samples ($\pm1.96$ Monte Carlo SE).
 MP, SAL and LEACE share the residual at every fit, so each estimate is shown once.
 Markers distinguish the three pools of real represented content.
 Shift and noise are synthetic.}
 \label{fig:real-content-risk}
\end{figure}

\section{Wearable versus clinical EEG (BOAS)}
\label{app:boas}
This appendix gives the protocol and results of \cref{sec:real-device} and repeats the analysis on a second device pair.
Preprocessing details and code are in the code repository.

\parhead{Data and features.}
The Bitbrain Open Access Sleep dataset \citep[OpenNeuro ds005555, v1.1.3, CC0;][]{lopezlarraz2026boas} contains $128$ nights of $100$ healthy adults recorded simultaneously by a clinical polysomnograph and a forehead EEG headband, with consensus expert staging in $30$-s epochs \citep{esparzaiaizzo2026sleep}.
The study was approved by the ethics committee of Arag\'on (C.I.\ PI24/046), and no new data were recorded.
We take one night per participant, drawn with a fixed seed before any data were downloaded.
The two headband channels are paired with the nearest polysomnograph sites on the same side (F3, F4).
Each artifact-free N2 epoch is described by log Welch power in $0.5$ Hz bins over $0.5$--$30$ Hz, so $d=118$ (other bin widths give $d=28$, $58$ and $236$).
The contrast is the polysomnograph feature minus the headband feature of the same epoch.
A night is excluded when, for either channel pair, the median correlation of the paired $0.5$--$4$ Hz signals over $5$-min windows falls below $0.5$, which flags a defective headband channel in $14$ of the $100$ nights.
One further participant is excluded for having fewer than $128$ N2 epochs.
The remaining $85$ participants contribute $325$ to $752$ N2 epochs each.

\parhead{Design.}
Each of $20$ random splits divides the participants into a calibration pool of $43$ and a test set of $42$.
Coordinates are whitened by the pool's within-source covariance, estimated from about $24{,}000$ epochs per device (some $200$ per dimension at $d=118$).
From the pool alone we estimate the shift $\delta$ and, from off-diagonal products of participant means as in \cref{eq:real-pair-statistic} below, its squared length.
The within-participant covariance $\Sigma_w$ is averaged over participants.
The between-participant covariance $\Sigma_b$ is the covariance of participant means minus the average of $\Sigma_w/n_j$, with negative eigenvalues set to zero.
A calibration draw takes $P\in\{1,2,3,4,6,8,12,16,24,32\}$ pool participants with $n\in\{1,2,4,8,16,32,64,128\}$ random epochs each and averages their means into $\bar D$.
MP, SAL and LEACE fitted to these draws share the same kernel.
The prediction is the erasers' residual under $\bar D\sim N(\delta,(\Sigma_b+\Sigma_w/n)/P)$, and the closed-form bound $d/(d+\bar\kappa)$ is at most $0.031$ above it.
Detection uses the radial statistic with its null law under the same estimated covariance.

\parhead{Statistic.}
The observed residual uses only the test participants.
Let $m_j$ be the mean contrast of test participant $j$ in the whitened coordinates, with $G$ test participants.
For a fitted map $M$, let $P_M=M^\top(MM^\top)^+M$ be its row-space projector and define
\begin{equation}
 \widehat Q(M)=\frac{1}{G(G-1)}\sum_{j\ne k}m_j^\top P_Mm_k,
 \qquad
 \widehat R(M)=\frac{\widehat Q(M)}{\widehat Q(\Id)}.
 \label{eq:real-pair-statistic}
\end{equation}
Products of different participants omit the noise of each participant's own mean, so for independent participants $\widehat Q(M)$ estimates the squared projected population mean contrast without bias.
The ratio is not clipped, so test noise can take it outside $[0,1]$.
For the subtraction of $\bar D$, the same products of $m_j-\bar D$ give the remaining separation.

\parhead{Results.}
\Cref{tab:boas-grid} lists a subset of the grid and \cref{tab:boas-robust} the agreement under other choices.
Here $\theta^2\approx2.6$, $\tr\Sigma_b\approx9.0$ and $\tr\Sigma_w\approx100$, so one participant caps $\bar\kappa$ at about $35$.
Since $\tr\Sigma_w/\tr\Sigma_b\approx11$, \cref{cor:two-level} gives $n^\star=\sqrt{11c}$ epochs per participant when a participant costs as much as $c$ epochs.
The largest deviations are for one participant with many epochs, where the prediction is pessimistic ($0.76$ against $0.69$), and for $32$ participants with $128$ epochs, where it is optimistic ($0.10$ against $0.125\pm0.027$).
With a single participant or four epochs or fewer in total, the radial test detects more often than predicted ($0.16$ to $0.27$ against $0.07$ to $0.16$), consistent with heavier-than-Gaussian epochs.
At $P=n=8$, the cell of \cref{sec:real-device}, observed and predicted detection are $0.97$ and $0.95$.
Contiguous blocks of epochs leave more shift than random epochs because neighbouring epochs are correlated ($0.56$ against $0.51$ at $P=n=8$), and the covariance of block means accounts for most of this (\cref{tab:boas-robust}).

\begin{table}[htbp]
 \centering
 \small
 \setlength{\tabcolsep}{4pt}
 \caption{BOAS, N2 epochs, $d=118$, $85$ participants. The eraser columns give the share of the population mean shift left in held-out participants after MP, SAL and LEACE, observed (mean $\pm1.96$ SE over $20$ random splits of the same participants) and predicted from the calibration pool by the plug-in two-level prediction and by the frontier formula at $\bar\kappa$. Detection is the rejection probability of the level-$0.05$ radial test. The last two columns give the separation left by subtracting $\bar D$, relative to no correction.}
 \label{tab:boas-grid}
 \begin{tabular}{rrccccccc}
  \toprule
  & & \multicolumn{3}{c}{Erasers} & \multicolumn{2}{c}{Detection} & \multicolumn{2}{c}{Subtract $\bar D$} \\
  $P$ & $n$ & Observed & Plug-in & At $\bar\kappa$ & Obs. & Pred. & Obs. & Pred. \\
  \midrule
  1 & 1 & $0.960\pm0.006$ & $0.967$ & $0.969$ & $0.20$ & $0.07$ & $48.28$ & $42.27$ \\
  1 & 8 & $0.858\pm0.013$ & $0.874$ & $0.885$ & $0.23$ & $0.12$ & $8.97$ & $8.33$ \\
  1 & 128 & $0.690\pm0.033$ & $0.757$ & $0.783$ & $0.23$ & $0.13$ & $3.94$ & $3.78$ \\
  4 & 1 & $0.900\pm0.010$ & $0.904$ & $0.905$ & $0.27$ & $0.16$ & $11.57$ & $10.57$ \\
  4 & 8 & $0.653\pm0.016$ & $0.663$ & $0.669$ & $0.51$ & $0.57$ & $2.19$ & $2.08$ \\
  4 & 128 & $0.501\pm0.030$ & $0.464$ & $0.481$ & $0.66$ & $0.66$ & $1.09$ & $0.94$ \\
  8 & 1 & $0.835\pm0.014$ & $0.833$ & $0.833$ & $0.36$ & $0.35$ & $5.67$ & $5.28$ \\
  8 & 8 & $0.507\pm0.024$ & $0.501$ & $0.505$ & $0.97$ & $0.95$ & $1.15$ & $1.04$ \\
  8 & 128 & $0.309\pm0.026$ & $0.307$ & $0.318$ & $0.99$ & $0.99$ & $0.50$ & $0.47$ \\
  32 & 1 & $0.566\pm0.018$ & $0.563$ & $0.564$ & $0.98$ & $0.99$ & $1.40$ & $1.32$ \\
  32 & 8 & $0.216\pm0.021$ & $0.203$ & $0.205$ & $1.00$ & $1.00$ & $0.29$ & $0.26$ \\
  32 & 128 & $0.125\pm0.027$ & $0.101$ & $0.105$ & $1.00$ & $1.00$ & $0.16$ & $0.12$ \\
  \bottomrule
 \end{tabular}
\end{table}

\begin{table}[htbp]
 \centering
 \small
 \caption{BOAS: agreement between observed and predicted eraser residual over the $80$ cells of the grid under other analysis choices. The last column is the one-participant cap $\theta^2d/\tr\Sigma_b$.}
 \label{tab:boas-robust}
 \begin{tabular}{lrrcccr}
  \toprule
  Setting & Participants & $d$ & RMS deviation & Max.\ deviation & Within $0.03$ & Cap \\
  \midrule
  N2, $d=118$ (main) & 85 & 118 & $0.016$ & $0.067$ & $95\%$ & $35$ \\
  N2, no channel rule & 97 & 118 & $0.021$ & $0.064$ & $88\%$ & $28$ \\
  All stages & 86 & 118 & $0.014$ & $0.058$ & $94\%$ & $36$ \\
  N2, $d=28$ & 85 & 28 & $0.023$ & $0.074$ & $78\%$ & $9$ \\
  N2, $d=58$ & 85 & 58 & $0.018$ & $0.053$ & $89\%$ & $17$ \\
  N2, $d=236$ & 85 & 236 & $0.016$ & $0.047$ & $96\%$ & $67$ \\
  N2, contiguous blocks & 85 & 118 & $0.033$ & $0.072$ & $64\%$ & $34$ \\
  \bottomrule
 \end{tabular}
\end{table}

\parhead{How much of the device difference is a mean shift.}
Linear erasers act only on the mean difference, so we measure how much of the divergence between the two devices' features is a mean shift.
The Jeffreys divergence between Gaussian fits splits into a mean part and a covariance part.
From each part we subtract its value under random within-pair swaps of the devices.
Within a single participant's night, the mean part accounts for a median of $0.76$ of the total (10th--90th percentile $0.36$--$0.91$), so there the devices differ mostly by a shift.
Pooled over the $85$ participants, its share is $0.31$.
This is consistent with shifts that vary across participants, which the two-level law captures through $\Sigma_b$.

\parhead{Planning from a pilot.}
The two-level law needs only $\theta^2$, $\tr\Sigma_b$ and $\tr\Sigma_w$.
To check whether a small pilot gives them, each of $20$ splits divides the participants into three disjoint sets.
The recordings of $20$ participants give the whitening, with each device used separately and without pairing.
Another $35$ form the calibration set, and the remaining $30$ are the test set.
A pilot consists of $P_0\in\{3,4,6,8\}$ calibration participants with their full N2 nights.
The three scalars are estimated from the pilot alone, and the prediction is the frontier formula at $\bar\kappa$.
With eight pilot nights, the one-participant cap has a median of $29$ against $28$ from all $35$ calibration participants, and it stays below $d=118$ in every pilot.
At $c=100$ the pilots give a median $n^\star$ of $31$ epochs.
Their predicted residual deviates from the observed one by a median of $0.10$ in root mean square ($0.05$--$0.20$), against $0.08$ with the scalars of all $35$ calibration participants, so most of the error comes from the finite test set.
The number of participants that leaves half the shift at $n=8$ is predicted as $9$ ($6$--$23$), against about $10$ observed.
Pilots of three or four nights still put the cap below $d$ in at least $95\%$ of cases.

\subsection{A second device pair: Neuroscan/Flex recordings}
\label{app:real-pairs}
Each of the $2{,}291$ simultaneously recorded Neuroscan/Flex N170 pairs from the $16$ participants of \citet{williams2020flex} is described by fifteen $50$-ms averages of $16$ common-average-referenced channels.
Removing the dependency introduced by the common reference leaves $d=225$ coordinates (details in the code repository).
Participants contribute $24$ to $250$ pairs.
The recordings are public on the Open Science Framework (\url{https://osf.io/zj3f5/}).
The original study was approved by the Macquarie University Human Research Ethics Committee (Ref.\ 5201831203493), and all participants gave written informed consent.

\parhead{Design.}
Each of $40$ random splits divides the participants into a calibration pool and a test set of $8$ each.
As for BOAS, the coordinates are whitened by the pool, the two-level prediction uses the pool's estimates of $\theta^2$, $\Sigma_b$ and $\Sigma_w$, and the observed residual is \cref{eq:real-pair-statistic} on the test participants.
We draw $P\in\{1,2,3,4,6,8\}$ calibration participants with $n\in\{1,2,4,8,16\}$ pairs each, ten times per split.
The grid stops at $n=16$ because the smallest participant has $24$ pairs.

\parhead{Result.}
Over the $30$ cells, the prediction deviates from the observed residual by $0.005$ in root mean square and at most $0.011$.
One participant caps the calibration strength at about $99$ in $d=225$, and eight participants with $16$ pairs each leave $0.504\pm0.030$ of the shift against the predicted $0.508$.
Subtracting $\bar D$ leaves more separation than no correction in every cell.

\section{INLP and R-LACE at fixed rank}
\label{app:subspace-experiment}
This experiment illustrates \cref{cor:subspace-excess} with fitted kernels that need not contain the empirical contrast.
The number of deletions is fixed in advance.
If a method chose its rank $K$ from the data, \cref{eq:subspace-conditional-risk} would still hold at the realized rank and give the lower bound $\risk_d^*(k_{\max},\kappa)$ whenever $K\le k_{\max}$, but not the fixed-rank identity.

\parhead{Design.}
Each replicate draws $U$ uniformly on $\sphere$ and $N$ pairs $H_i^\pm=C_i\pm U/2+E_i^\pm$ with $C_i,E_i^\pm\sim N(0,\Id/2)$, so that $\theta=s^2=1$, each source has covariance $\Id$, and $\kappa=N$.
The grid has $d\in\{16,32\}$, $N/d\in\{\tfrac12,1,2,4\}$ and $k\in\{1,4\}$, with $256$ calibrations per cell.
Every method receives the same $2N$ labelled endpoints.
MP, SAL and LEACE are evaluated at $k=1$, and the frontier reference deletes $w$ and $k-1$ uniform directions orthogonal to it.
Fixed-step INLP fits an $\ell_2$-regularized logistic classifier ($C=1$) to the successively projected endpoints and removes its coefficient direction, keeping the first and fourth iterates.
Fixed-budget R-LACE follows the authors' alternating updates, Fantope projection and final rounding \citep{ravfogel2022rlace} for $2{,}000$ full-batch iterations and returns an orthogonal projection with exactly $k$ deletions.

\parhead{Evaluation and prediction.}
The loss $L=\|P_MU\|^2$ is computed from the row space of each returned matrix, and $\rho_K=\|P_Mw\|^2$ from the empirical direction.
With $a=\sqrt\kappa\|X\|$, the Bessel-ratio form \cref{eq:bessel} gives
\[
 \lambda_\kappa(\|X\|)=\frac{I_{d/2}(a)}{a I_{d/2-1}(a)},\qquad
 g_\kappa(\|X\|)=1-d\lambda_\kappa(\|X\|),
\]
and the conditional prediction is $(d-k)\lambda+g\rho_K$.
To check the excess on the same calibrations, let $L_0$ be the loss of MP with random completion.
Its conditional mean is $(d-k)\lambda$, so $\overline\E[L-L_0]=\overline\E[g\rho_K]$.
We compare the observed mean of $L-L_0$ with the predicted mean of $g\rho_K$ through the standard error of their paired difference.

\parhead{Results.}
\Cref{tab:subspace-excess-check} reports every cell.
Over the $32$ INLP and R-LACE cells, observed and predicted excess differ by $0.004$ in root mean square and by at most $2.2$ paired standard errors.
The fitted R-LACE subspaces nearly contain $w$, whereas one-step INLP leaves a visible excess that its fitted subspace explains.
Four INLP steps shrink this excess at four times the cost of one step.
At $d=N=32$, LEACE's median linear cost is $2.09$ against one for the orthogonal maps, which is the arrow in \cref{fig:benchmark}a.

\begin{table}[htbp]
 \centering
 \small
 \setlength{\tabcolsep}{4pt}
 \caption{Complete controlled grid. The frontier is $\risk_d^*(k,N)$. Observed excess is the paired mean residual above MP plus random completion, with its Monte Carlo standard error over $256$ calibrations. Predicted excess is the mean of $g\rho_K$, which is below $2\times10^{-8}$ for R-LACE in every row.}
 \label{tab:subspace-excess-check}
 \begin{tabular}{rrrcrrr}
  \toprule
  $d$ & $N$ & $k$ & Frontier & \multicolumn{2}{c}{INLP excess} & R-LACE excess \\
   & & & & Observed $\pm$ SE & Predicted & Observed $\pm$ SE \\
  \midrule
  16 & 8 & 1 & $0.6423$ & $0.0683\pm0.0070$ & $0.0737$ & $0.0000\pm0.0000$ \\
  16 & 8 & 4 & $0.5138$ & $0.0036\pm0.0082$ & $0.0030$ & $0.0053\pm0.0079$ \\
  16 & 16 & 1 & $0.4838$ & $0.1236\pm0.0080$ & $0.1399$ & $0.0000\pm0.0000$ \\
  16 & 16 & 4 & $0.3871$ & $0.0083\pm0.0064$ & $0.0049$ & $-0.0088\pm0.0067$ \\
  16 & 32 & 1 & $0.3216$ & $0.1484\pm0.0071$ & $0.1471$ & $0.0000\pm0.0000$ \\
  16 & 32 & 4 & $0.2572$ & $0.0042\pm0.0043$ & $0.0013$ & $0.0028\pm0.0039$ \\
  16 & 64 & 1 & $0.1914$ & $0.1128\pm0.0049$ & $0.1078$ & $0.0000\pm0.0000$ \\
  16 & 64 & 4 & $0.1531$ & $-0.0017\pm0.0022$ & $0.0002$ & $-0.0008\pm0.0025$ \\
  32 & 16 & 1 & $0.6548$ & $0.1103\pm0.0056$ & $0.1071$ & $0.0000\pm0.0000$ \\
  32 & 16 & 4 & $0.5914$ & $0.0159\pm0.0047$ & $0.0124$ & $0.0070\pm0.0039$ \\
  32 & 32 & 1 & $0.4921$ & $0.1863\pm0.0060$ & $0.1807$ & $0.0000\pm0.0000$ \\
  32 & 32 & 4 & $0.4444$ & $0.0171\pm0.0038$ & $0.0122$ & $0.0023\pm0.0034$ \\
  32 & 64 & 1 & $0.3275$ & $0.1722\pm0.0051$ & $0.1673$ & $0.0000\pm0.0000$ \\
  32 & 64 & 4 & $0.2958$ & $0.0041\pm0.0023$ & $0.0025$ & $0.0028\pm0.0022$ \\
  32 & 128 & 1 & $0.1957$ & $0.1180\pm0.0034$ & $0.1149$ & $0.0000\pm0.0000$ \\
  32 & 128 & 4 & $0.1768$ & $-0.0001\pm0.0014$ & $0.0003$ & $-0.0012\pm0.0014$ \\
  \bottomrule
 \end{tabular}
\end{table}

\section{Anisotropic content: a numerical check}
\label{app:anisotropy-check}
The main results assume isotropic within-source variation and calibration noise, and this appendix checks the residual of the empirical-direction editor under anisotropic content.

The content covariance has eigenvalues proportional to $1/j$ or $1/j^2$ at $d=64$, the noise level is $s^2\in\{0.1,1\}$, and the shift points along a random direction or along the top or bottom content eigenvector.
After whitening, the contrast noise has effective dimension $49$ to $60$.
Over $30$ cells with $4{,}000$ calibrations each, the frontier \cref{eq:frontier} at the whitened $\kappa=N\|\tilde\delta\|^2/\bar\sigma^2$, with $\bar\sigma^2$ the mean whitened noise variance, matches the residual of the empirical-direction editor to $0.006$ in root mean square and $0.015$ at most: anisotropy enters mainly through $\kappa$.
The check concerns the empirical-direction editor and therefore the erasers that share its kernel.
Over all editors, \cref{prop:anisotropy} only brackets the minimax value, and a noise-weighted direction estimate may do better.

\end{document}